\documentclass[twoside]{article}

\usepackage[preprint]{aistats2027}
\usepackage[round,authoryear]{natbib}

\usepackage{hyperref}         
\hypersetup{
  colorlinks=true,            
  linkcolor=blue,             
  urlcolor=purple,            
  citecolor=teal              
}

\usepackage[english]{babel}
\usepackage[colorinlistoftodos,prependcaption,textsize=tiny]{todonotes}

\usepackage{amssymb, amsmath, mathtools}  
\usepackage{amsthm}           
\usepackage{bbold}            
\usepackage{mathrsfs}

\usepackage{graphicx}         
\usepackage{caption}          
\usepackage{subcaption}       
\usepackage{xspace}
\usepackage{xcolor}
\usepackage{graphicx}         
\usepackage{ulem}
\usepackage{colortbl}

\newcommand{\sub}{\subset}                  
\newcommand{\R}{\mathbb{R}}                 
\newcommand{\N}{\mathbb{N}}                 
\newcommand{\Expval}{\mathbb{E}}            
\newcommand{\Prob}{\mathbb{P}}              
\newcommand{\posterior}{Q}                 
\newcommand{\prior}{P}                      
\newcommand{\indicatorfunc}[1]{\mathbb{1}_{\{#1\}}}       
\newcommand{\calN}{\mathcal{N}}             
\newcommand{\inspace}{\mathcal{X}}          
\newcommand{\hypoclass}{\mathcal{H}}        
\newcommand{\losshypoclass}{\mathcal{G}}        
\newcommand{\measfunc}[1]{\mathcal{M}(#1)}         
\newcommand{\bounmeasfunc}[1]{\mathcal{M}_b(#1)}        
\newcommand{\spaceS}{\mathcal{S}}           
\newcommand{\spaceT}{\mathcal{T}}           
\newcommand{\samplespace}{\mathcal{Z}}      
\newcommand{\borel}[1]{\mathscr{B}(#1)}     
\newcommand{\sigHypo}{\mathscr{H}}          
\newcommand{\sigF}{\mathscr{F}}             
\newcommand{\sigS}{\mathscr{S}}             
\newcommand{\sigZ}{\mathscr{Z}}             
\newcommand{\loss}{\ell}                    
\newcommand{\risk}{\mathcal{R}_\loss}             
\newcommand{\emprisk}{\hat{\mathcal{R}}_\loss}    
\newcommand{\RVX}{X}                        
\newcommand{\RVY}{Y}                        
\newcommand{\RVZ}{Z}                        
\newcommand{\samplesize}{n}                 
\newcommand{\RVsample}{{S_\samplesize}}     
\newcommand{\sample}{{s_\samplesize}}       
\newcommand{\idmat}{I}                      
\newcommand{\KL}[2]{D_{\operatorname{KL}}(#1 \| #2)}  
\newcommand{\entropfunc}{f}              
\newcommand{\costfunc}{c}                   
\newcommand{\opttransdist}[2]{\mathrm{OT}_\costfunc(#1 , #2)}  
\newcommand{\wassersteinpdist}[2]{W_p(#1 , #2)}     
\newcommand{\TVdist}[2]{d_{\mathrm{TV}}(#1 , #2)}   
\newcommand{\phiOT}[2]{\mathsf{C}_{\phi}(#1 , #2)}  
\newcommand{\gengap}{\Delta_\samplesize}    
\newcommand{\constgengap}{C_\samplesize}    
\newcommand{\rademachercompl}{\mathfrak{R}}
\newcommand{\VCdim}{\operatorname{VC}}
\newcommand{\epscovnumb}{\mathcal{N}}
\newcommand{\map}[3]{#1\colon #2 \to #3}    

\theoremstyle{definition}
\newtheorem{theorem}{Theorem}[section]          
\newtheorem{definition}[theorem]{Definition}    
\newtheorem{lemma}[theorem]{Lemma}              
\newtheorem{proposition}[theorem]{Proposition}  
\newtheorem{remark}[theorem]{Remark}            
\newtheorem{assumption}[theorem]{Assumption}    

\begin{document}

\twocolumn[

\aistatstitle{Optimal Transport PAC-Bayes Bounds}

\aistatsauthor{ Armin Beck$^{1,2}$ \And Peter Ochs$^2$ \And Jalal Fadili$^3$ }

\aistatsaddress{ $^1$ Max Planck Institute for Informatics, Saarland Informatics Campus, Saarbrücken, Germany \\ 
    $^2$Saarland University, Saarland Informatics Campus, Saarbrücken, Germany \\
    $^3$ENSICAEN, CNRS GREYC, Caen, France} ]
\begingroup
\renewcommand{\thefootnote}{}
\footnotetext{Correspondence to: \href{mailto:armin.beck@math.uni-sb.de}{armin.beck@math.uni-sb.de}}
\endgroup

\begin{abstract}
    PAC-Bayesian generalization bounds are among the strongest distribution-dependent guarantees in statistical learning and are typically expressed in terms of the Kullback-Leibler (KL) divergence between a prior and posterior distribution. While analytically convenient, the KL divergence requires the posterior to be absolutely continuous w.r.t.\ the prior, imposing support constraints that can substantially restrict the choice of posterior. Moreover, the KL divergence does not account for the underlying geometry of the probability space. In this work, we derive new PAC-Bayesian bounds replacing the KL divergence with general optimal transport (OT) distances, including the total variation and Wasserstein-$p$ distances.
\end{abstract}

\section{Introduction}
\label{section: introduction}

The ability of a learning algorithm to perform accurately on unseen data is fundamental for real-world applications, ranging from medical diagnosis to autonomous systems. Classical statistical learning theory provides tools to analyze this ability, most notably the Vapnik–Chervonenkis (VC) dimension \citep{vapnik1995nature} and Rademacher complexity \citep{bartlett2002rademacher}, which bound the generalization error by quantifying the capacity of a hypothesis class. However, classical generalization bounds are often overly pessimistic and distribution-agnostic. They provide uniform guarantees over the entire hypothesis class by considering worst-case scenarios and ignoring the structure of the underlying data distribution. In particular, in the modern over-parameterized regime of deep neural networks, where models generalize well in practice, these classical bounds become vacuous, since the hypothesis classes involved are extremely large.

These limitations have motivated the development of the Probably Approximately Correct Bayes (PAC-Bayes) framework \citep{mcallester1999pacbayesian}. Unlike classical approaches, PAC-Bayes theory does not aim to provide uniform guarantees over the whole hypothesis class. Instead, it yields data-dependent bounds by considering probability distributions over hypotheses. More precisely, classical PAC-Bayes bounds depend on a measure of discrepancy between a prior distribution and a posterior distribution over hypotheses, which is most commonly quantified using the Kullback–Leibler (KL) divergence. However, the KL divergence can be very large even when the prior and posterior are close w.r.t.\ other probability distances that account for the geometry of the hypothesis space, which may again lead to loose generalization bounds.
Moreover, the KL divergence requires the posterior to be absolutely continuous w.r.t.\ the prior, implying that the posterior must be supported on a subset of the prior’s support. As a consequence, the choice of the prior becomes crucial. If the prior assigns mass predominantly to poorly performing hypotheses, the posterior cannot concentrate on high-performing hypotheses.

Recent work has proposed PAC-Bayes bounds that replace the KL divergence. One approach uses general $\entropfunc$-divergences, which still require the posterior to be absolutely continuous w.r.t.\ the prior and therefore inherits similar support restrictions. A second line of work instead uses integral probability metrics (IPMs), which relax this absolute continuity requirement but require the geometry to be introduced implicitly through the choice of a test function class. In this paper, we derive a new class of PAC-Bayesian bounds based on optimal transport (OT) distances with arbitrary ground costs, hence covering a major class of statistical distances between probability measures. Unlike $\entropfunc$-divergence-based bounds, our OT-based bounds do not suffer from support restrictions and unlike most IPM-based bounds, they are sensitive to the underlying geometry of the hypothesis space, routinely yielding tighter and non-vacuous generalization bounds.

Our main contributions are:
\begin{itemize}
    \item We derive a general PAC-Bayesian generalization bound that replaces the classical KL divergence between the prior and posterior by a general OT distance, thereby allowing the discrepancy measure to incorporate the geometry of the hypothesis space.
    \item We specialize the framework to several relevant choices of transport cost, including total variation (TV) and Wasserstein-$p$ distances, while TV is also a special case of IPMs, the Wasserstein-$p$ has not been considered before.
\end{itemize}

\subsection{Related Work}

The PAC-Bayes framework provides a powerful and flexible approach for deriving generalization guarantees for randomized predictors. Early foundational work by \cite{mcallester1999pacbayesian} established basic PAC-Bayesian inequalities, which were later refined by \cite{catoni2007pacbayes} to obtain sharper and more robust bounds for supervised learning. A variety of classical PAC-Bayes bounds have since been developed, including well-known results by \cite{seeger2003pacbayes}, \cite{maurer2004note}, \cite{langford2002pacbayes}, \cite{germain2009pacbayes} and \cite{mcallester2003simplified}, which provide explicit and practically usable generalization guarantees. More recently, PAC-Bayes theory has been successfully applied to modern deep learning models, yielding non-vacuous bounds for neural networks, notably in the works of \cite{dziugaite2017computing}, \cite{arora2018strongercompression} and \cite{zhou2019nonvacuous}. For a modern and accessible introduction to PAC-Bayes theory, we refer to the survey by \cite{alquier2024pacbayes}. A common feature of essentially all of these results is their reliance on the KL divergence to quantify the discrepancy between the prior and posterior distributions over hypotheses.

Motivated by the limitations of KL-based bounds, several works have explored PAC-Bayes generalization bounds based on alternative discrepancy measures. \cite{begin2016renyi} derived PAC-Bayes bounds in which the KL divergence is replaced by R\'enyi divergences, extending the classical framework while retaining a divergence-based structure. \cite{alquier2017hostile} further generalized this approach by replacing the KL divergence with a general Csisz\'ar $\entropfunc$-divergence. While these extensions provide additional flexibility, both R\'enyi divergences and $\entropfunc$-divergences still require absolute continuity of the posterior w.r.t.\ the prior and thus inherit many of the limitations of KL-based bounds. In the approach by \cite{amit2022integral}, they introduced PAC-Bayes-style bounds based on IPMs, including TV distance and the Wasserstein-1 distance. Similarly, \cite{viallard2023wasserstein} derived PAC-Bayes bounds based on the Wasserstein-1 distance and proposed PAC-Bayesian learning algorithms that optimize Wasserstein-based objectives instead of KL divergence.

OT theory has recently attracted increasing attention in machine learning due to its ability to capture geometric structure in probability spaces \citep{Peyr2018ComputationalOT}. See also \citep{villani2021topics} for a comprehensive mathematical treatment. In the context of learning theory, OT distances have been used to derive generalization bounds in different settings. \cite{chuang2021measuringgeneralization} developed margin-based generalization bounds in which margins are normalized by OT costs, while \cite{zhang2018optimaltransportgeneralization} introduced the notion of algorithmic transport cost, defined as the expected Wasserstein-1 distance between prior and posterior distributions, to control generalization error.

Notably, across both PAC-Bayesian and non-PAC-Bayesian analyses, the Wasserstein-1 distance is the only OT distance that has so far been used to obtain provable generalization guarantees. The proofs strongly rely on the equivalent characterization of Wasserstein-1 as an IPM. In contrast, this work provides a general PAC-Bayesian framework that accommodates arbitrary OT distances including OT distances with concave costs, the TV distance and Wasserstein-$p$ distances. Extending beyond the Wasserstein-1 case requires quite different arguments from the one based on its IPM characterization. In the standard KL based PAC-Bayesian analysis, the Donsker-Varadhan variational formula connects the KL divergence to exponential moments of the generalization gap. For general OT distances, we instead exploit Kantorovich duality, which requires controlling the generalization gap w.r.t.\ the geometry induced by the transport cost. This leads to a regularity condition that requires the chosen transport cost to define a geometry that is sufficiently compatible with the learning problem. This condition is not automatically satisfied and must be verified for the learning problem and cost function at hand. We show how such verification can be carried out using classical learning-theoretic tools, including VC dimension and Rademacher complexity.

\subsection{Preliminaries and Notation}
\label{sec: preliminaries and notation}
Let $(\Omega, \sigF, \Prob)$ be the underlying probability space. We consider a standard Borel space $(\samplespace, \sigZ)$ as the sample space. We call the measurable space $(\hypoclass, \sigHypo)$ the hypothesis class, with each $h\in\hypoclass$ a hypothesis. We denote the measurable and bounded measurable functions $\hypoclass \to \R$ by $\measfunc{\hypoclass}$ and $\bounmeasfunc{\hypoclass}$. A loss function is a measurable function $\map{\loss}{\hypoclass \times \samplespace}{[0, \infty)}$. Let $\RVZ$ be a random variable in the sample space $\samplespace$ and let $\RVsample \coloneqq \{\RVZ_i\}_{i=1}^\samplesize$ be $\samplesize$ i.i.d. copies of $\RVZ$. Given a loss function $\loss$, the (expected) risk $\risk$ is defined as
\begin{align}
\label{definition: risk}
    \map{\risk}{\hypoclass}{[0, \infty)}, \quad \risk(h) &\coloneqq \Expval_\RVZ [\loss(h, \RVZ)]\,.
\end{align}
The empirical risk $\emprisk$ of a hypothesis $h$ and a set of points $\{z_i\}_{i=1}^\samplesize \in \samplespace^\samplesize$ is given by
\begin{align}
    \label{definition: empirical risk}
    &\map{\emprisk}{\hypoclass \times \samplespace^\samplesize }{[0, \infty)}\,, \nonumber \\
    &\emprisk \left(h, \{z_i\}_{i=1}^\samplesize\right) \coloneqq \frac{1}\samplesize\sum_{i=1}^\samplesize \loss(h,  z_i)\,.
\end{align}
We call the difference of the risk and the empirical risk the generalization gap, which we denote by 
\begin{align*}
    &\map{\Delta}{\hypoclass \times \samplespace^\samplesize }{\R},
    \\
    &\gengap (h) \coloneqq \Delta(h, \{z_i\}_{i=1}^\samplesize) \coloneqq \risk(h)-\emprisk(h, \{z_i\}_{i=1}^\samplesize) \,.
\end{align*}
For a measure $\mu$ on a measurable space $(\spaceS,\sigS)$ and an integrable function $\map{f}{\spaceS}{\R}$, we use the shorthand
\begin{align} \label{eq: operator-notation}
    \mu[f] \coloneqq \int_\spaceS f \, d\mu\,.
\end{align}
When $\mu$ is the probability measure induced by a random variable $\RVZ$, denoted by $\Prob_\RVZ$, we instead write $\Expval_\RVZ[f]$. We next define OT discrepancies between probability measures $\mu$ and $\nu$ on Polish spaces $\spaceS$ and $\spaceT$, respectively. Let the cost function $\costfunc\in L^1(\mu \otimes \nu)$ be bounded from below. The OT distance is given by
\begin{align*}
    \opttransdist{\mu}{\nu} \coloneqq \inf_{\pi \in \Pi (\mu, \nu)} \pi[\costfunc] \,,
\end{align*}
where $\Pi (\mu, \nu)$ is the set of couplings of $\mu$ and $\nu$, i.e., the set of joint probability measures with marginals $\mu$ and $\nu$. The Kantorovich duality provides an equivalent dual formulation and will be central to our main result.

\begin{theorem}\citep[Theorem 1.3]{villani2021topics}
\label{theorem: kantorovich duality}
    Let the cost function $\costfunc\in L^1(\mu \otimes \nu)$ be bounded from below. Then we have
    \begin{align*}
        \opttransdist{\mu}{\nu} = \sup_{\substack{\psi_1 \in \bounmeasfunc{\spaceS}, \psi_2 \in \bounmeasfunc{\spaceT}:\\ \psi_1 \oplus \psi_2 \leq \costfunc}} \mu [\psi_1] + \nu[\psi_2] \, ,
    \end{align*}
    where $\psi_1 \oplus \psi_2 (x,y) \coloneqq \psi_1 (x) + \psi_2(y)$.
\end{theorem}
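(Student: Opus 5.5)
The statement is the Kantorovich duality theorem quoted from Villani, so the task is to prove it. The plan has two parts: an easy inequality (weak duality) and a hard one (no duality gap).

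\emph{Weak duality.} Take any coupling $\pi \in \Pi(\mu,\nu)$ and any pair $\psi_1, \psi_2$ of bounded measurable functions with $\psi_1 \oplus \psi_2 \le \costfunc$. Integrate the constraint against $\pi$. Because the marginals of $\pi$ are $\mu$ and $\nu$,
\begin{align*}
\mu[\psi_1] + \nu[\psi_2] = \pi[\psi_1 \oplus \psi_2] \le \pi[\costfunc].
\end{align*}
The integral $\pi[\costfunc]$ is well defined in $(-\infty,\infty]$ because $\costfunc$ is bounded from below. Taking the supremum over admissible $(\psi_1,\psi_2)$ and the infimum over $\pi$ gives $\sup \le \opttransdist{\mu}{\nu}$.

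\emph{Reverse inequality.} I would argue in three stages of increasing generality.

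\textbf{Stage 1 (compact spaces, continuous cost).} Assume $\spaceS$ and $\spaceT$ are compact and $\costfunc$ is continuous. Use the Fenchel--Rockafellar duality theorem on $E = C_b(\spaceS\times\spaceT)$, whose dual is the space of Radon measures. Define
\begin{itemize}
\item $\Theta(u) = 0$ if $u \ge -\costfunc$ and $+\infty$ otherwise;
\item $\Xi(u) = \mu[\psi_1] + \nu[\psi_2]$ if $u = \psi_1 \oplus \psi_2$, and $+\infty$ otherwise.
\end{itemize}
Both are convex. $\Xi$ is well defined because if $\psi_1\oplus\psi_2 = \tilde\psi_1\oplus\tilde\psi_2$, the two pairs differ by opposite constants, and these cancel after integrating against probability measures. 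At $u \equiv 1$, $\Theta$ is continuous (use $\costfunc \ge 0$ after shifting by a constant) and $\Xi$ is finite, so the qualification condition holds. Computing the conjugates gives:
\begin{itemize}
\item $\Theta^*(-\pi) = \pi[\costfunc]$ when $\pi \ge 0$, and $+\infty$ otherwise;
\item $\Xi^*(\pi) = 0$ when $\pi$ has marginals $\mu,\nu$, and $+\infty$ otherwise.
\end{itemize}
Fenchel--Rockafellar then yields exactly $\inf_\pi \pi[\costfunc] = \sup\{\mu[\psi_1]+\nu[\psi_2]\}$, with the supremum taken over continuous, hence bounded, potentials.

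\textbf{Stage 2 (Polish spaces, bounded continuous cost).} Both measures are tight, so choose compact sets $K_\spaceS$, $K_\spaceT$ carrying mass at least $1-\delta$. Restrict and renormalize $\mu$ and $\nu$ on these compacts, apply Stage 1 there, and extend the resulting potentials off the compacts by $c$-transforms, which keeps the constraint and keeps them bounded. The errors are controlled by $\delta\|\costfunc\|_\infty$. The optimal couplings on the compacts form a tight family, and Prokhorov's theorem lets us pass to a limit in $\Pi(\mu,\nu)$.

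\textbf{Stage 3 (lower semicontinuous cost).} A cost that is lower semicontinuous and bounded below is the increasing limit of bounded continuous costs $\costfunc_k$. Two facts are needed. First, the dual values are monotone in $k$, since any pair admissible for $\costfunc_k$ is admissible for $\costfunc$. Second, on the primal side, $\opttransdist{\mu}{\nu}$ for $\costfunc_k$ converges to that for $\costfunc$. This follows from compactness of $\Pi(\mu,\nu)$ together with lower semicontinuity, via a $\Gamma$-convergence or monotone-convergence argument.

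\textbf{Main obstacle: general measurable cost.} The statement only assumes $\costfunc \in L^1(\mu\otimes\nu)$ is bounded below, not that it is lower semicontinuous. Villani handles this by approximating a measurable cost with l.s.c.\ costs from below. The approximation must be almost everywhere with respect to product-like sets, and it relies on Lusin's theorem plus the integrability $\costfunc\in L^1(\mu\otimes\nu)$ to control the errors. The delicate point is showing that the infimum over couplings does not jump under this approximation, since couplings need not be absolutely continuous with respect to $\mu\otimes\nu$. I would first try to reduce to the l.s.c.\ case, or, as the paper itself does, simply invoke \citep[Theorem 1.3]{villani2021topics} for this measure-theoretic step.
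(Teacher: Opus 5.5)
Your weak-duality step and Stages 1--3 are the standard three-step argument, and they do prove the identity when $\costfunc$ is lower semicontinuous. The paper itself gives no proof, only the citation. The statement, however, assumes only that $\costfunc$ is measurable, bounded below and in $L^1(\mu\otimes\nu)$. That measurable case is exactly the step you leave open, and the route you sketch for it cannot work.

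Approximation from below by l.s.c.\ costs already fails for $\costfunc(x,y)=\mathbb{1}_A(x)$ on $[0,1]^2$ with Lebesgue marginals, where $A$ is a closed nowhere dense set of positive measure. If $g\le\costfunc$ is l.s.c., then $\{g>0\}$ is open and contained in $A\times[0,1]$, which has empty interior. So $g\le 0$, hence $\inf_{\pi\in\Pi(\mu,\nu)}\pi[g]\le 0$ and $(\mu\otimes\nu)[\costfunc-g]\ge\mu(A)$, while $\opttransdist{\mu}{\nu}=\mu(A)>0$. Duality is trivial here with $\psi_1=\mathbb{1}_A$, $\psi_2=0$; it is the method that breaks.

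Nor can Lusin's theorem together with $\costfunc\in L^1(\mu\otimes\nu)$ control the error, because the optimal transport value is not a function of the $\mu\otimes\nu$-class of the cost. The costs $\mathbb{1}_{\{x\neq y\}}$ and the constant $1$ agree $\mu\otimes\nu$-a.e.\ on $[0,1]^2$, yet their values are $0$ (diagonal coupling) and $1$. Couplings can charge $\mu\otimes\nu$-null sets, so any approximation must be controlled uniformly over $\Pi(\mu,\nu)$.

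What is missing is the capacity argument behind the measurable case (Kellerer's duality theorem), in which l.s.c.\ costs approximate $\costfunc$ from \emph{above}. Write $P(\costfunc)$ and $D(\costfunc)$ for the primal and dual values. Outer regularity of each integral $\pi[\costfunc]$ by l.s.c.\ majorants (Vitali--Carath\'eodory) and an interchange of infima give $P(\costfunc)=\inf\{P(g): g\ge\costfunc,\ g\text{ l.s.c.}\}$. Since $P(g)=D(g)$ by your Stage 3, everything reduces to proving $D(\costfunc)=\inf\{D(g): g\ge\costfunc,\ g\text{ l.s.c.}\}$.

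That identity follows from Choquet's capacitability theorem applied to $h\mapsto -D(-h)$, once two continuity properties of $D$ are established:
\begin{itemize}
\item Continuity along increasing sequences of l.s.c.\ costs is inherited from $P$ through the compactness argument of your Stage 3.
\item Continuity along \emph{decreasing} sequences $\costfunc_k\downarrow\costfunc$ is the genuinely new analytic step. For bounded costs, normalize near-optimal potentials for $\costfunc_k$ into a fixed bounded range and extract Ces\`aro means converging $\mu$-a.e.\ and $\nu$-a.e.\ (Koml\'os). The limit then satisfies $\psi_1\oplus\psi_2\le\costfunc$ off a set $(N_1\times\spaceT)\cup(\spaceS\times N_2)$ with $\mu(N_1)=\nu(N_2)=0$, so you redefine the potentials on $N_1$ and $N_2$. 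Unbounded costs require an additional limiting argument.
\end{itemize}
Without this step, or an explicit appeal to Kellerer's theorem instead of the l.s.c.\ statement, your argument establishes the duality only for lower semicontinuous costs.
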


The PAC-Bayesian bound due to McAllester will serve as a baseline example to demonstrate how to construct PAC-Bayesian bounds that use OT distances instead of the KL divergence. Since the KL divergence is a central component of the classical bound, we first recall its definition.  Let $\mu$ and $\nu$ be probability measures on a measurable space $(\spaceS,\sigS)$. The KL divergence of $\mu$ w.r.t.\ $\nu$ is defined as
\begin{align*}
    \KL{\mu}{\nu} \coloneqq
    \begin{cases}
        \displaystyle
        \mu\left[\ln\left(\frac{d\mu}{d\nu}\right)\right],
        & \text{if } \mu \ll \nu,\\ 
        \infty,
        & \text{otherwise},
    \end{cases}
\end{align*}
where $\frac{d\mu}{d\nu}$ denotes the Radon–Nikodym derivative of $\mu$ w.r.t.\ $\nu$. In particular, the KL divergence is finite only if $\mu$ is absolutely continuous w.r.t.\ $\nu$.

\begin{theorem}\citep{mcallester2003pacbayes}
\label{theorem: McAllester}
    Let $\map{\loss}{\hypoclass \times \samplespace}{[0, 1]}$ be a measurable loss function, let $\prior$ be a prior distribution over $\hypoclass$, $\samplesize \geq 2$ and let $\delta \in (0,1)$. Then, with probability at least $1-\delta$ over the draw of $\RVsample$,
    \begin{align*}
        \posterior[\risk] \leq \;& \posterior[\emprisk(\cdot, \RVsample)] + \sqrt{\frac{\KL{\posterior}{\prior} + \ln\frac{\samplesize}{\delta}}{2(\samplesize-1)}}
    \end{align*}
    holds simultaneously for any distribution $\posterior$ over $\hypoclass$.
\end{theorem}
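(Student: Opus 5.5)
The plan is to combine the Donsker--Varadhan change-of-measure inequality with an exponential moment bound on the generalization gap under the prior, and then apply Markov's inequality; the specific constant $\ln\frac{n}{\delta}$ and denominator $2(n-1)$ come from a moment bound on $e^{2(n-1)\Delta_n(h)^2}$.

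\textbf{Step 1 (change of measure).} For any measurable $\map{g}{\hypoclass}{\R}$ with $\prior[e^{g}]<\infty$ and any $\posterior$ over $\hypoclass$, the Donsker--Varadhan variational formula gives
\begin{align*}
    \posterior[g] \leq \KL{\posterior}{\prior} + \ln \prior\left[e^{g}\right].
\end{align*}
This holds trivially if $\KL{\posterior}{\prior}=\infty$; otherwise it follows from Jensen's inequality applied to $\ln$ and the density $d\posterior/d\prior$. We apply it with $g(h)=2(n-1)\gengap(h)^2$, where $\gengap(h)=\risk(h)-\emprisk(h,\RVsample)$.

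\textbf{Step 2 (exponential moment under the prior).} For fixed $h$, $\emprisk(h,\RVsample)$ is an average of $n$ i.i.d.\ $[0,1]$-valued variables with mean $\risk(h)$. By Hoeffding's inequality, $\Prob(|\gengap(h)|\geq \varepsilon)\leq 2e^{-2n\varepsilon^2}$. Integrating the tail then gives
\begin{align*}
    \Expval_{\RVsample}\left[e^{2(n-1)\gengap(h)^2}\right] = 1 + \int_0^\infty \Prob\left(e^{2(n-1)\gengap(h)^2}> t\right)\,dt \leq n .
\end{align*}
To verify this, substitute $t=e^{2(n-1)\varepsilon^2}$. Bounding $\Prob(e^{2(n-1)\gengap^2}>t)$ by $\min(1,2t^{-n/(n-1)})$ and integrating yields at most $n$ (the Maurer-type computation; a slightly cruder constant would also suffice here). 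Since $\prior$ does not depend on the sample, Tonelli's theorem gives $\Expval_{\RVsample}\prior[e^{g}]\leq n$. By Markov's inequality, with probability at least $1-\delta$ we then have $\prior[e^{g}]\leq n/\delta$.

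\textbf{Step 3 (conclusion).} On this event, which does not depend on $\posterior$, Steps 1 and 2 together with Jensen's inequality $(\posterior[\gengap])^2\leq\posterior[\gengap^2]$ give, for all $\posterior$ simultaneously,
\begin{align*}
    2(n-1)\bigl(\posterior[\risk]-\posterior[\emprisk(\cdot,\RVsample)]\bigr)^2 \leq \KL{\posterior}{\prior}+\ln\frac{n}{\delta}.
\end{align*}
Taking square roots gives the claim. Measurability of $h\mapsto\gengap(h)$ follows from measurability of $\loss$ and Fubini's theorem.

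\textbf{Main obstacle.} The delicate step is the moment bound $\Expval[e^{2(n-1)\gengap^2}]\leq n$ in Step 2. If the exact constant $n$ proves troublesome, the fallback is to use the $\sqrt{n}$-type bound of Maurer with $2n$ in place of $2(n-1)$, and check that the resulting inequality implies the stated one for $n\geq2$.
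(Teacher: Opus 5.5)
\textbf{Gap in Step 2.} Your architecture (Donsker--Varadhan, Tonelli to swap $\Expval_{\RVsample}$ and $\prior$, Markov, Jensen) is the standard one. It is also the template the paper uses in Appendix~A, with Kantorovich duality in place of Donsker--Varadhan and the moment bound imported from McAllester. The gap is the computation in Step~2, which is exactly where the constant $\ln\frac{n}{\delta}$ is decided. First, a minor slip: for $Y\ge 1$ one has $\Expval[Y]=1+\int_1^\infty\Prob(Y>t)\,dt$, not $1+\int_0^\infty$. More importantly, with the two-sided Hoeffding tail the kink of the minimum is at $t=2^{1-1/n}$, and the integral evaluates to
\[
1+\int_1^{\infty}\min\bigl\{1,\,2t^{-n/(n-1)}\bigr\}\,dt
=2^{1-1/n}+2(n-1)\,2^{-1/n}=2^{1-1/n}\,n>n .
\]
So your argument only yields $\ln\frac{2^{1-1/n}n}{\delta}$, essentially $\ln\frac{2n}{\delta}$. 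Contrary to your remark, a cruder constant does \emph{not} suffice. The statement fixes exactly $\ln\frac{n}{\delta}$ over $2(n-1)$, so any moment bound larger than $n$ at this exponent gives a strictly weaker inequality than the one claimed.

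\textbf{The fix.} The missing idea is that a one-sided conclusion only needs the upper tail. Put $\Delta_+(h)\coloneqq\max\{\gengap(h),0\}$ and use $g=2(n-1)\Delta_+^2$. The one-sided Hoeffding bound $\Prob(\gengap(h)\ge\varepsilon)\le e^{-2n\varepsilon^2}$ then gives
\[
\Prob\bigl(e^{2(n-1)\Delta_+(h)^2}>t\bigr)\le t^{-n/(n-1)}\ \ (t\ge 1),
\qquad
\Expval_{\RVsample}\bigl[e^{2(n-1)\Delta_+(h)^2}\bigr]\le 1+\int_1^{\infty}t^{-n/(n-1)}\,dt=n .
\]
This is the computation behind the moment bound the paper imports from McAllester. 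Note that the paper's Appendix~A also quotes a one-sided Hoeffding bound but writes $\gengap^2$; strictly, that bound controls $\Delta_+^2$. Your Steps~1 and~3 then go through verbatim, since $g$ is bounded. You finish with
\[
\posterior[\risk]-\posterior[\emprisk(\cdot,\RVsample)]=\posterior[\gengap]\le\posterior[\Delta_+]\le\sqrt{\posterior[\Delta_+^2]} .
\]

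\textbf{Your fallback.} The Maurer route can be made to work. Pinsker turns Maurer's bound into $\Expval[e^{2n\gengap^2}]\le 2\sqrt n$. Moreover, $\frac{\ln(2\sqrt n/\delta)}{2n}\le\frac{\ln(n/\delta)}{2(n-1)}$ holds for every $n\ge 2$ and $\delta\in(0,1)$, because $(n-1)\ln(2\sqrt n)\le n\ln n$. However, Maurer's bound is stated for $n\ge 8$, so $2\le n\le 7$ would need a separate check. It is also a far heavier tool than the one-line restriction to the positive part.
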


\section{Main Result: Optimal Transport based PAC-Bayesian Bound}
\label{section: optimal transport based tound}

In this section, we present a general template for PAC-Bayesian generalization bounds in which the KL divergence is replaced by OT distances. Before stating the general bound, we first introduce and discuss a crucial regularity assumption that characterizes the relation between the learning task and the cost function. This condition needs to be verified for the specific learning task and transport cost under consideration.

\begin{assumption}[regularity condition]
        \label{assumption: regularity constant}
        Let the cost function $\map{\costfunc}{\hypoclass \times \samplespace}{[0, \infty)}$ be a measurable function, and $n\in\N$. Assume that there exists $\constgengap > 0$ such that
        \begin{align}
            \label{eq: assumption regularity constant}
            \gengap^2 \oplus - \gengap^2 \leq \constgengap \cdot \costfunc
        \end{align}
        for $\Prob_\RVsample$ almost all $\sample \equiv \{z_i\}_{i=1}^\samplesize \in \samplespace^\samplesize$.
\end{assumption}

Assumption~\ref{assumption: regularity constant} links the learning problem and the OT cost. It ensures that the geometry induced by the cost function $\costfunc$ is meaningful for the underlying learning task. Depending on the samplesize $\samplesize$, the difference of the generalization gap for hypotheses $h$ and $f$ is controlled by their ground cost $\costfunc(h,f)$. This condition is not automatic and must be verified on a case-by-case basis for each specific choice of transport cost. Similar assumptions were made for example by \cite{amit2022integral} and \cite{raginsky2026separating}. While \cite{amit2022integral} use such regularity assumptions to derive PAC-Bayesian bounds based on integral probability metrics, \cite{raginsky2026separating} use related sensitivity assumptions to derive deterministic generalization bounds from the geometry of the learning problem. In particular, they interpret generalization as a sensitivity problem, separating the geometric properties of the learning problem from the probabilistic assumptions on the data. For bounded losses as in McAllester-type bounds, the assumption can be weakened. If $\map{\loss}{\hypoclass \times \samplespace}{[0,1]}$, then $|\gengap(f)| \leq 1$. Hence, it suffices to establish $|\gengap(f)-\gengap(g)| \leq \constgengap \costfunc(f,g)$, since it implies the regularity condition \eqref{eq: assumption regularity constant}. Based on the regularity condition, we derive our following main theorem.

\begin{theorem}
    \label{theorem: optimal transport PAC-Bayes bound}
    Let $\map{\loss}{\hypoclass \times \samplespace}{[0, 1]}$ be a measurable loss function. Let $\prior$ be a prior distribution over $\hypoclass$, let $\samplesize \geq 2$ and let $\delta \in (0,1)$. Suppose that Assumption~\ref{assumption: regularity constant} holds. Then, with probability at least $1-\delta$ over $\RVsample$,
    \begin{align}
        \label{eq: OT PAC-bayes bound}
        \posterior[\risk] \leq \posterior[\emprisk(\cdot, \RVsample)] + \sqrt{\constgengap \opttransdist{\posterior}{\prior} + \frac{\ln\frac{\samplesize}{\delta}}{2(\samplesize-1)}}
    \end{align}
    holds simultaneously for any distribution $\posterior$ over $\hypoclass$, such that $c \in L^1(\prior \otimes \posterior)$.
    In particular, if $\constgengap = \frac{1}{2(\samplesize - 1)}$, then
    \begin{align}
        \label{eq: OT PAC-bayes bound reg const decr}
        \posterior[\risk] \leq \posterior[\emprisk(\cdot, \RVsample)] + \sqrt{\frac{\opttransdist{\posterior}{\prior} + \ln\frac{\samplesize}{\delta}}{2(\samplesize-1)}}
    \end{align}
    holds with probability at least $1-\delta$ over $\RVsample$ simultaneously for any distribution $\posterior$ over $\hypoclass$, such that $c \in L^1(\prior \otimes \posterior)$.
\end{theorem}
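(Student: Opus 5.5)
The plan is to follow the classical McAllester argument, but to replace the Donsker--Varadhan change of measure by Kantorovich duality (Theorem~\ref{theorem: kantorovich duality}) applied to the squared generalization gap. The key point is that the only random event we need to control involves the \emph{prior} $\prior$ alone. Because $\prior$ is fixed before seeing data, the resulting bound holds simultaneously for all $\posterior$.

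\textbf{Step 1 (concentration under the prior).} For each fixed $h$, the variables $\loss(h,\RVZ_i)\in[0,1]$ are i.i.d. Maurer's refinement of the Hoeffding-type exponential moment bound, the same ingredient behind Theorem~\ref{theorem: McAllester}, gives
\[
\Expval_\RVsample\bigl[\exp\bigl(2(\samplesize-1)\gengap(h)^2\bigr)\bigr]\le \samplesize \quad (\samplesize\ge 2).
\]
Integrating against $\prior$ and using Fubini, which applies since everything is nonnegative and jointly measurable, yields $\Expval_\RVsample\bigl[\prior[\exp(2(\samplesize-1)\gengap^2)]\bigr]\le\samplesize$. Markov's inequality then shows that with probability at least $1-\delta$,
\[
\prior\bigl[\exp(2(\samplesize-1)\gengap^2)\bigr]\le \samplesize/\delta .
\]
Jensen's inequality (convexity of $\exp$) turns this into
\[
\prior[\gengap^2]\le \frac{\ln(\samplesize/\delta)}{2(\samplesize-1)}.
\]
We intersect this event with the probability-one event on which Assumption~\ref{assumption: regularity constant} holds; the intersection still has probability at least $1-\delta$.

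\textbf{Step 2 (transport step).} Fix a sample in that event and any $\posterior$ with $\costfunc\in L^1(\prior\otimes\posterior)$. Here the cost is read as a function on $\hypoclass\times\hypoclass$, as the discussion after the assumption makes clear. Since $\loss\in[0,1]$, we have $|\gengap|\le 1$, so $\psi_1\coloneqq\gengap^2/\constgengap$ and $\psi_2\coloneqq-\gengap^2/\constgengap$ are bounded measurable functions on $\hypoclass$. Assumption~\ref{assumption: regularity constant} says exactly that $\psi_1\oplus\psi_2\le\costfunc$. Theorem~\ref{theorem: kantorovich duality}, with $\posterior$ on the first factor and $\prior$ on the second, therefore gives $\posterior[\psi_1]+\prior[\psi_2]\le\opttransdist{\posterior}{\prior}$, that is,
\[
\posterior[\gengap^2]\le \constgengap\,\opttransdist{\posterior}{\prior}+\prior[\gengap^2].
\]

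\textbf{Step 3 (conclusion).} By Jensen, $\bigl(\posterior[\risk]-\posterior[\emprisk(\cdot,\RVsample)]\bigr)^2=\bigl(\posterior[\gengap]\bigr)^2\le\posterior[\gengap^2]$. Combining this with Steps 1 and 2 and taking square roots gives \eqref{eq: OT PAC-bayes bound}. Substituting $\constgengap=\frac{1}{2(\samplesize-1)}$ and factoring out $\frac{1}{2(\samplesize-1)}$ gives \eqref{eq: OT PAC-bayes bound reg const decr}. Simultaneity over $\posterior$ holds because the only probabilistic event used, from Step 1, does not depend on $\posterior$.

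\textbf{Main obstacle.} I expect the delicate part to be the technical bookkeeping in Step 2 rather than any new idea. One must check that $h\mapsto\gengap(h)$ is measurable, which follows from Fubini applied to $\risk$. One must also make sure the duality theorem's hypotheses are met: a cost bounded below and in $L^1$ of the product measure, on Polish spaces. Finally, the almost-sure qualifier in the assumption has to be handled by intersecting null sets, which is harmless since only one sample-dependent event is involved.
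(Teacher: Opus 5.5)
Your proof is correct and follows the paper's argument with the same ingredients: the moment bound $\Expval_\RVsample[e^{2(\samplesize-1)\gengap(h)^2}]\le\samplesize$, Fubini, Markov, Jensen, and duality with $\psi_1=\gengap^2/\constgengap$, $\psi_2=-\gengap^2/\constgengap$. The one difference is ordering: the paper applies Markov to $f(\sample)=\sup_\posterior\bigl(2(\samplesize-1)\posterior[\gengap^2]-2(\samplesize-1)\constgengap\,\opttransdist{\posterior}{\prior}\bigr)$ after bounding it by $\ln\prior[e^{2(\samplesize-1)\gengap^2}]$, whereas you apply Markov directly to $\prior[e^{2(\samplesize-1)\gengap^2}]$, which avoids any measurability question about the supremum over $\posterior$.

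Also, only the trivial direction of duality is used, namely $\pi[\costfunc]\ge\posterior[\psi_1]+\prior[\psi_2]$ for every coupling $\pi$. So the hypotheses you flag as the main obstacle (Polish spaces, integrability of $\costfunc$) are not actually needed.
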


We prove the theorem in the Supplementary Material~\ref{appendix: proof of theorem optimal transport PAC-Bayes bound}. The regularity constant $\constgengap$ plays a crucial role in the growth of the complexity term. If $\constgengap$ increases with $\samplesize$, the bound in \eqref{eq: OT PAC-bayes bound} is only valuable for finite $\samplesize$. However, if $\constgengap$ decreases sufficiently fast with $\samplesize$, \eqref{eq: OT PAC-bayes bound} gives also an asymptotic convergence guarantee as in \eqref{eq: OT PAC-bayes bound reg const decr}. This analysis the subject of Section~\ref{sec: cases of vanishing regularity constant}, where we discuss explicit instances of the template based on OT metrics, including, OT distances with concave costs, the TV distance and Wasserstein-$p$ distances.

Theorem~\ref{theorem: optimal transport PAC-Bayes bound} closely mirrors the classical PAC-Bayesian bound of McAllester in Theorem~\ref{theorem: McAllester}. In particular, the structure of the bound is similar, with the KL divergence being replaced by an OT distance, and the regularity constant $C_n$ which now appears to capture the geometry of the hypothesis space. Consequently, other PAC-Bayesian bounds that are traditionally formulated in terms of the KL divergence can be transformed by substituting the KL divergence with the general OT distance. This includes, for example, the bounds of \cite{catoni2007pacbayes}, \cite{maurer2004note} and \cite{seeger2003pacbayes}. At proof level, the key argument is modifying the variational argument underlying the KL-based bound. Specifically, the Donsker-Varadhan variational formula is replaced by the Kantorovich duality for the OT distance in Theorem~\ref{theorem: kantorovich duality}. See the Supplementary Material~\ref{appendix: proof of theorem optimal transport PAC-Bayes bound} for details.

The use of OT distances in this context offers several advantages. Most notably, OT distances do not require the posterior distribution to be absolutely continuous w.r.t.\ the prior, thereby relaxing a key restriction imposed by KL-based PAC-Bayes bounds and allowing for greater flexibility in the choice of priors. Therefore, OT distances can compare mutually singular measures, including deterministic priors and posteriors represented by Dirac measures. Moreover, OT distances explicitly incorporate the underlying geometry of the hypothesis space, which can lead to tighter and more informative bounds.
OT distances and IPMs both provide alternatives to KL divergence that remain finite for mutually singular distributions, but they encode different notions of discrepancy. IPMs measure differences through expectations over a prescribed class of test functions and thus depend critically on its choice. In contrast, OT distances compare distributions through couplings and an explicit cost on the underlying hypothesis space. Consequently, OT distances incorporate the geometry of the hypothesis space directly, whereas IPMs require this geometry to be encoded implicitly through the choice of the test function class. This distinction is particularly relevant for PAC-Bayesian bounds as an OT-based bound can exploit a meaningful cost between hypotheses without requiring a corresponding test function class, providing a direct way to control how far the posterior moves from the prior in hypothesis space.

\section{Cases of Vanishing Regularity Constant}
\label{sec: cases of vanishing regularity constant}
In our bound in Theorem~\ref{theorem: optimal transport PAC-Bayes bound}, the complexity term vanishes asymptotically only if the regularity constant $\constgengap \to 0$ as $n \to \infty$. In this section, we analyze several such instances using classical learning theory.

\subsection{Concave OT Bound for Linear Regression}
First, we consider linear regression and derive PAC-Bayesian bounds for concave OT distances, a special case of the $\phi$-OT distances defined below.

\begin{definition}[$\phi$-Optimal Transport]
\label{definition: concave optimal transport}
    Let $(\hypoclass,d)$ be a Polish space equipped with its Borel $\sigma$-algebra $\borel{\hypoclass}$ and let $\mu$ and $\nu$ be probability measures on $\hypoclass$. Let $\map{\phi}{[0,\infty)}{[0,\infty)}$ be non-decreasing with $\phi(0)=0$ and $\phi(r)>0$ for $r>0$. The \textit{$\phi$-OT distance} between $\mu$ and $\nu$ is defined as
    \begin{align*}
        \phiOT{\mu}{\nu} \coloneqq \inf_{\pi \in \Pi(\mu,\nu)} \pi [\phi \circ d] \, .
    \end{align*}
\end{definition}

If $\phi$ is additionally concave, we refer to $\mathsf{C}_\phi$ as a concave OT distance. These distances use the natural geometry of $\hypoclass$, while the choice of $\phi$ controls the relative cost of local and long-range transport. Hence for concave $\phi$, long displacements incur a smaller cost per unit distance than short ones. Moreover, concavity implies subadditivity, and hence $d_\phi\coloneqq\phi\circ d$ is itself a metric. Examples include
\begin{align*}
    &\phi_{\alpha}(r) = r^\alpha, \quad 0<\alpha\leq 1 \, , \qquad
    \phi_{\ln}(r) = \ln\left(1+r\right) \\
    &\text{and} \quad \phi_{\exp}(r) = 1-\exp\left(-r\right) \, .
\end{align*}
For $\phi(r)=r$, our regularity condition (Assumption~\ref{assumption: regularity constant}) is $\constgengap$-Lipschitz continuity of $\gengap^2$ w.r.t.\ $d$ and the corresponding OT distance is the Wasserstein-$1$ distance. We thus recover precisely the regularity assumption of \cite{amit2022integral}.  More generally, for concave $\phi$, the assumption is equivalent to $\constgengap$-Lipschitz continuity of $\gengap^2$ w.r.t.\ the transformed metric $d_\phi \coloneqq \phi \circ d$.

The following theorem instantiates Theorem~\ref{theorem: optimal transport PAC-Bayes bound} for linear regression and in particular, bounds the regularity constant at rate $O(\samplesize^{-1})$. Although PAC-Bayesian bounds for linear regression have been considered previously, we are the first, to the best of our knowledge, to derive such a bound using general OT distances in place of the KL divergence.

\begin{theorem}[Concave OT PAC-Bayes for Linear Regression]
\label{theorem: linear regression concave OT}
    Let $\map{\phi}{[0,\infty)}{[0,\infty)}$ be non-decreasing and concave, with $\phi(0)=0$ and $\phi(r)>0$ for $r>0$. Let $\RVZ = (\RVX, \RVY)$ be a random variable in $\R^d\times\R$ satisfying
    \begin{align*}
        \|\RVX\|_2\leq R \qquad\text{and}\qquad |\RVY|\leq Y_{\max}
    \end{align*}
    almost surely for some $R,Y_{\max}>0$.
    Consider the hypothesis class $\hypoclass = \left\{ w\in\R^d:\| w\|_2\leq B\right\}$ for $B>0$, equipped with the Euclidean metric $d(w,v)=\| w-v\|_2$ and squared loss
    \begin{align*}
        \ell(w,(x,y)) = \frac{1}{2}\left(\langle w,x\rangle-y\right)^2.
    \end{align*}
    Let $\delta\in(0,1)$, $\samplesize \geq 2$ and $\prior$ be a prior distribution over $\hypoclass$. Then, with probability at least $1-\delta$ over $\RVsample$,
    \begin{multline*}
        \posterior[\risk] \leq \posterior[\emprisk(\cdot,\RVsample)] 
        + \Bigg( c(B,R,Y_{\max})\\
        \frac{\left(1 + \sqrt{\ln \frac{12}{\delta}}\right)^2}{n} \phiOT{\prior}{\posterior}
        + \frac{\ln\frac{2\samplesize}{\delta}}{2(\samplesize-1)} \Bigg)^\frac{1}{2}
    \end{multline*}
    holds simultaneously for every posterior distribution $\posterior$ over $\hypoclass$, with $c(B,R,Y_{\max}) = \frac{4B}{\phi(2B)} \cdot \left( \frac{B^2R^2 + Y_{\max}^2}{2} + BR\RVY_{\max}\right) \left( BR^2+RY_{\max} \right)$.
\end{theorem}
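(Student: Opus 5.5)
The plan is to reduce the statement to the template of Theorem~\ref{theorem: optimal transport PAC-Bayes bound}. The catch is that Assumption~\ref{assumption: regularity constant} holds for this problem only on a high-probability event, not almost surely. I will therefore reopen the proof of Theorem~\ref{theorem: optimal transport PAC-Bayes bound} rather than cite it as a black box, and split $\delta$ into $\delta/2$ for the exponential-moment step and $\delta/2$ for the regularity event. This splitting produces the $\ln\frac{2n}{\delta}$ term, and the $\ln\frac{12}{\delta}$ term arises from further splitting the second $\delta/2$ among several concentration inequalities.

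\textbf{Step 1: boundedness and Lipschitz structure.} On $\hypoclass$ and the support of $Z$ we have $|\langle w,x\rangle - y|\le BR+Y_{\max}$. Hence $0\le \ell\le L:=\frac{B^2R^2+Y_{\max}^2}{2}+BRY_{\max}$, and I rescale by $L$ so that the loss takes values in $[0,1]$ where needed. Next, $\Delta_n$ is an explicit quadratic:
\[
\Delta_n(w)=\tfrac12 w^\top(\Sigma-\hat\Sigma)w - w^\top(b-\hat b)+\mathrm{const},
\]
with $\Sigma=\Expval[XX^\top]$, $b=\Expval[YX]$ and $\hat\Sigma,\hat b$ their empirical counterparts. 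This gives, deterministically,
\[
|\Delta_n(w)-\Delta_n(v)|\le \bigl(B\|\Sigma-\hat\Sigma\|_{\mathrm{op}}+\|b-\hat b\|_2\bigr)\|w-v\|_2 .
\]
A Hilbert-space concentration argument (McDiarmid plus a symmetrization/Jensen bound on the mean) gives, each with probability $1-\delta'$,
\[
\|\hat\Sigma-\Sigma\|_{\mathrm{op}}\le \frac{2R^2(1+\sqrt{\ln(1/\delta')})}{\sqrt n},\qquad \|\hat b-b\|_2\le \frac{2RY_{\max}(1+\sqrt{\ln(1/\delta')})}{\sqrt n}.
\]
For the operator norm I would bound it by the Frobenius norm, which is a Hilbert-space norm. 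Together these yield a Lipschitz constant of order $(BR^2+RY_{\max})(1+\sqrt{\ln})/\sqrt n$.

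\textbf{Step 2: uniform bound on $|\Delta_n|$ and the regularity constant.} I write
\[
|\Delta_n^2(w)-\Delta_n^2(v)| = |\Delta_n(w)+\Delta_n(v)|\,|\Delta_n(w)-\Delta_n(v)| \le 2\sup_{u\in\hypoclass}|\Delta_n(u)|\cdot|\Delta_n(w)-\Delta_n(v)|.
\]
Then I bound $\sup_u|\Delta_n(u)|$ with probability $1-\delta'$ by $L(1+\sqrt{\ln(1/\delta')})/\sqrt n$ times a constant, using a Rademacher complexity bound for the quadratic loss class, or more directly the same quadratic representation. The product of the two $n^{-1/2}$ rates gives the $1/n$ rate and the factor $(1+\sqrt{\ln\frac{12}{\delta}})^2$. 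Finally, since $d(w,v)\le 2B$ on the ball and $\phi$ is concave with $\phi(0)=0$, the map $r\mapsto\phi(r)/r$ is non-increasing. Hence $d\le \frac{2B}{\phi(2B)}\,\phi(d)$, which converts $\|w-v\|_2$ into $\phi\circ d$ and yields $C_n=c(B,R,Y_{\max})(1+\sqrt{\ln\frac{12}{\delta}})^2/n$ on an event $E$ with $\Prob(E)\ge 1-\delta/2$.

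\textbf{Step 3: rerun the main proof on $E$.} Following the proof of Theorem~\ref{theorem: optimal transport PAC-Bayes bound}, Hoeffding gives $\Expval_{S_n}\prior[e^{2(n-1)\Delta_n^2}]\le n$, so Markov's inequality yields $\prior[\Delta_n^2]\le \frac{\ln(2n/\delta)}{2(n-1)}$ (after Jensen) with probability $1-\delta/2$. On $E$, the pair $\psi_1=\Delta_n^2/C_n$, $\psi_2=-\Delta_n^2/C_n$ is admissible in Theorem~\ref{theorem: kantorovich duality}. It gives $\posterior[\Delta_n^2]\le \prior[\Delta_n^2]+C_n\phiOT{\prior}{\posterior}$ simultaneously for all $\posterior$, and Jensen finishes the argument. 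A union bound over the two events gives total probability $1-\delta$.

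I expect the main obstacle to be Step 2, specifically getting explicit constants for the uniform deviation of $\hat\Sigma$ and of $\sup_w|\Delta_n(w)|$ that match $c(B,R,Y_{\max})$ exactly. I would first try to bound both via the quadratic form, so that everything reduces to concentration of the two Hilbert-space-valued averages $\hat\Sigma$ and $\hat b$. A secondary point to check is that $\Delta_n^2$ is bounded and measurable, so that the duality step applies.
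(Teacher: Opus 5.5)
Your route is the paper's: the quadratic representation of $\gengap$, Hilbert-space concentration with $\|\cdot\|_{\mathrm{op}}\le\|\cdot\|_F$, the factorization $|\gengap(w)^2-\gengap(v)^2|\le 2\sup_u|\gengap(u)|\,|\gengap(w)-\gengap(v)|$, the concavity estimate $t\le\frac{2B}{\phi(2B)}\phi(t)$ on $[0,2B]$, and a union bound with Theorem~\ref{theorem: optimal transport PAC-Bayes bound}. Your Step~3 is more careful than the paper. The paper plugs a regularity condition that holds only with high probability into a theorem whose hypothesis is almost sure. Applying Markov to the $\posterior$-free quantity $\prior[e^{2(\samplesize-1)\gengap^2}]$ and invoking duality only on $E$ is the right way to do it.

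The step that does not go through is ``rescale by $L$ where needed.'' The range $[0,1]$ matters only in $\Expval_{\RVsample}[e^{2(\samplesize-1)\gengap(h)^2}]\le\samplesize$. For $\loss=\frac12(\langle w,x\rangle-y)^2\in[0,L]$ you only get $\Expval_{\RVsample}[e^{2(\samplesize-1)\gengap(h)^2/L^2}]\le\samplesize$, so the last term under the root becomes $L^2\frac{\ln(2\samplesize/\delta)}{2(\samplesize-1)}$. Rescaling merely moves this factor around. If $L\le 1$ nothing needs rescaling. If $L>1$, the displayed bound is false. Take $X\equiv 0$, $Y$ uniform on $\{0,Y_{\max}\}$, $\samplesize$ odd, and $\prior=\posterior$ the Dirac mass at $w=0$. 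Then $\phiOT{\prior}{\posterior}=0$ and $\gengap(0)=\frac{Y_{\max}^2}{2}(\frac12-\hat p)$, where $\hat p$ is the fraction of $Y_i=Y_{\max}$. With probability $\frac12$ this is at least $Y_{\max}^2/(4\samplesize)$, which exceeds $\sqrt{\ln(2\samplesize/\delta)/(2(\samplesize-1))}$ for large $Y_{\max}$. The paper's appendix restatement silently replaces the loss by $\loss/L=(\langle w,x\rangle-y)^2/(BR+Y_{\max})^2$. For that loss your regularity constant is divided by $L^2$, which for $L\ge\sqrt2$ more than absorbs the constant discrepancy noted below. You have to say which version you are proving.

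On constants: your extra factor $2$ comes from symmetrization. Bound the mean by Jensen instead, $\Expval\|\bar V-\Expval V\|\le(\Expval\|\bar V-\Expval V\|^2)^{1/2}\le K/\sqrt{\samplesize}$. Adding McDiarmid with $c_i=2K/\samplesize$ then gives $\frac{K}{\sqrt{\samplesize}}(1+\sqrt{2\ln(1/\eta)})$, as in Lemma~\ref{lemma: concentration bound}. Your ``const'' is random, so bounding $\sup_w|\gengap(w)|$ needs a third event for $\frac1\samplesize\sum_i Y_i^2$, not only $\hat\Sigma$ and $\hat b$. This is what produces $L=\frac{B^2R^2+Y_{\max}^2}{2}+BRY_{\max}$ as the prefactor of $u$. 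The paper allots $\delta/4$ to each of $u$ and $u_{\mathrm{grad}}$ and splits each over these three events, giving $\constgengap=c(B,R,Y_{\max})\,(1+\sqrt{2\ln(12/\delta)})^2/\samplesize$. The displayed $(1+\sqrt{\ln(12/\delta)})^2$, without the $2$, is not what this route yields, the paper's own argument included. Do not spend effort matching it exactly.
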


For the proof, see the Supplementary Material~\ref{appendix: proof of theorem linear regression concave OT}.

\subsection{Total-Variation Bound}
As a second instance, we specialize the OT framework to the TV distance
\begin{align*}
\TVdist{\mu}{\nu} \coloneqq \sup_{A \in \borel{\hypoclass}} |\mu(A)-\nu(A)| , .
\end{align*}
Since the TV distance is an IPM, PAC-Bayesian bounds based on TV have been considered previously. Our contribution is to systematically combine such bound with classical complexity measures to control the regularity constant in Theorem~\ref{theorem: optimal transport PAC-Bayes bound}. In particular, Theorem~\ref{theorem: rademacher complexity TV PAC-Bayes} controls the regularity constant in terms of the Rademacher complexity. While alternative derivations may also yield bounds of this form, to the best of our knowledge, our framework is the first to explicitly establish this connection between TV-based PAC-Bayesian bounds and Rademacher complexity. Moreover, Theorem~\ref{theorem: vc dimension PAC-Bayes} shows that the regularity constant can alternatively be controlled by the VC dimension, recovering existing TV-based PAC-Bayesian bounds as a special case of our framework. First, we define the Rademacher complexity.

\begin{definition}[Expected Rademacher complexity]
    \label{definition: expected Rademacher complexity}
    Let $(\hypoclass,d)$ be a Polish space equipped with its Borel $\sigma$-algebra $\borel{\hypoclass}$. Let
    \begin{align*}
        \mathcal{F}_{\loss} = \left\{ \loss_h \,\middle|\, \map{\loss_h}{\samplespace}{[0,1]}, \quad \loss_h(z)=\loss(h,z), \quad h\in\hypoclass \right\}
    \end{align*}
    be the loss class induced by $\hypoclass$ and $\loss$. Let $\sigma_1,\ldots,\sigma_\samplesize \sim \text{UNIF}\{-1,1\}$ be i.i.d.\ and independent of $\RVsample$. The \textit{expected Rademacher complexity} of $\mathcal{F}_{\loss}$ is defined as
    \begin{align*}
        \rademachercompl_{\samplesize}(\mathcal{F}_{\loss}) \coloneqq \Expval_{\RVsample,\sigma} \left[ \sup_{h\in\hypoclass} \frac{1}{\samplesize} \sum_{i=1}^{\samplesize} \sigma_i\loss(h,\RVZ_i) \right].
    \end{align*}
\end{definition}

\begin{theorem}[Total-variation PAC-Bayes bound based on Rademacher complexity]
\label{theorem: rademacher complexity TV PAC-Bayes}
    Let $\map{\loss}{\hypoclass\times\samplespace}{[0,1]}$ be a measurable loss function and let $\mathcal{F}_{\loss}$ be the corresponding induced loss class.
    Further, let $\prior$ be a prior distribution over $\hypoclass$, $\samplesize \geq 2$ and let $\delta\in(0,1)$. Then, with probability at least $1-\delta$ over $\RVsample$,
    \begin{multline*}
        \posterior[\risk]
        \leq \posterior[\emprisk(\cdot,\RVsample)] \\
        + \sqrt{ \left( 2\rademachercompl_{\samplesize} (\mathcal{F}_{\loss})
        + \sqrt{ \frac{\ln(4/\delta)} {2\samplesize} } \right)^2 \TVdist{\posterior}{\prior} + \frac{\ln(2\samplesize/\delta)}{2(\samplesize-1)}}
    \end{multline*}
    holds simultaneously for every posterior distribution $\posterior$ over $\hypoclass$.
\end{theorem}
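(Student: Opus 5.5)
The plan is to instantiate Theorem~\ref{theorem: optimal transport PAC-Bayes bound} with the discrete cost $\costfunc(h,g) = \indicatorfunc{h \neq g}$, whose OT distance is exactly $\TVdist{\posterior}{\prior}$. This cost is bounded, so $\costfunc \in L^1(\prior\otimes\posterior)$ holds automatically. The delicate point is that Assumption~\ref{assumption: regularity constant} requires a deterministic constant valid for almost all samples. The Rademacher argument only controls $\sup_h |\gengap(h)|$ with high probability. I therefore plan a union bound with budget $\delta/2$ for each of two events, which explains the $\ln(4/\delta)$ and $\ln(2\samplesize/\delta)$ terms.

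\textbf{Step 1 (uniform deviation).} Since $\loss \in [0,1]$, changing one $Z_i$ changes $\sup_{h}|\gengap(h)|$ by at most $1/\samplesize$. Combining McDiarmid's inequality with symmetrization, $\Expval \sup_h |\gengap(h)| \le 2\rademachercompl_\samplesize(\mathcal{F}_\loss)$, gives the following. With probability at least $1-\delta/2$,
\begin{align*}
    \sup_{h\in\hypoclass} |\gengap(h)| \le \epsilon_\samplesize \coloneqq 2\rademachercompl_{\samplesize}(\mathcal{F}_{\loss}) + \sqrt{\frac{\ln(4/\delta)}{2\samplesize}}.
\end{align*}
There is a subtlety with the absolute value. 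The one-sided version is standard. For the two-sided version I would either use that the class of $-\gengap$ has the same Rademacher complexity, since $\sigma$ is symmetric, or split $\delta/2$ further. The stated constant $\ln(4/\delta)$ suggests that a single McDiarmid application on $\sup_h|\gengap|$ with confidence $\delta/2$ is intended, with symmetrization handling the absolute value.

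\textbf{Step 2 (regularity on the good event).} On this event, for all $h,g$ we have $\gengap(h)^2 - \gengap(g)^2 \le \epsilon_\samplesize^2 \,\indicatorfunc{h\neq g}$. The difference vanishes when $h=g$, and otherwise it is at most $\gengap(h)^2 \le \epsilon_\samplesize^2$. This is the inequality \eqref{eq: assumption regularity constant} with $\constgengap = \epsilon_\samplesize^2$, but it holds only on an event rather than almost surely.

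\textbf{Step 3 (rerunning the main proof).} I cannot invoke Theorem~\ref{theorem: optimal transport PAC-Bayes bound} as a black box, so I will go inside its proof. That proof presumably applies Kantorovich duality with $\psi_1 = \gengap^2/\constgengap$ and $\psi_2 = -\gengap^2/\constgengap$. This yields $\posterior[\gengap^2] \le \constgengap\,\opttransdist{\posterior}{\prior} + \prior[\gengap^2]$, and then uses Markov's inequality on $\Expval\,\prior[\gengap^2]$, bounded via Hoeffding by roughly $\ln(\samplesize)/(2(\samplesize-1))$ at confidence $\delta$. The duality step is deterministic given the regularity inequality, so it holds pointwise on the good event of Step 1. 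The Markov/Hoeffding step is run with $\delta/2$, which replaces $\ln(\samplesize/\delta)$ by $\ln(2\samplesize/\delta)$. On the intersection of the two events, which has probability at least $1-\delta$, Jensen gives $\posterior[\gengap] \le \sqrt{\posterior[\gengap^2]}$, and the stated bound follows simultaneously for all $\posterior$.

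The main obstacle is this conditional use of the regularity assumption, which must be made rigorous. A clean alternative is to replace $\gengap$ by the truncation $\tilde\Delta = \operatorname{sign}(\gengap)\min(|\gengap|,\epsilon_\samplesize)$. This satisfies the regularity condition almost surely with $\constgengap=\epsilon_\samplesize^2$, has $\tilde\Delta^2 \le \gengap^2$ so the prior moment bound still applies, and coincides with $\gengap$ on the good event.
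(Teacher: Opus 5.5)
Your proposal follows the paper's route. You realize $\TVdist{\posterior}{\prior}$ as the OT distance for the cost $\indicatorfunc{h\neq g}$, use the Rademacher uniform-convergence bound (Proposition~\ref{proposition: Rademacher uniform convergence} with $\eta=\delta/2$) to get $\constgengap=\epsilon_\samplesize^2$ on an event of probability at least $1-\delta/2$, and union-bound with Theorem~\ref{theorem: optimal transport PAC-Bayes bound} at level $\delta/2$. Your Step~3 is more careful than the paper. The paper union-bounds that theorem with a regularity condition that holds only with high probability, although Assumption~\ref{assumption: regularity constant} requires it almost surely. Both of your repairs are correct: the decoupling works because the step $\Expval\,\prior[e^{2(\samplesize-1)\gengap^2}]\le\samplesize$ never uses regularity, and so does the truncation $\min(\gengap^2,\epsilon_\samplesize^2)$.

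There are two corrections. In Step~1 you must take the split route. With the paper's absolute-value-free $\rademachercompl_\samplesize$, the inequality $\Expval\sup_h|\gengap(h)|\le 2\rademachercompl_\samplesize(\mathcal{F}_\loss)$ is false in general: for a one-element class the right side is $0$ while the left side is not. The constant $\ln(4/\delta)$ comes precisely from two one-sided McDiarmid-plus-symmetrization bounds, for $\sup_h\gengap(h)$ and $\sup_h(-\gengap(h))$, each at level $\delta/4$. A single McDiarmid application to $\sup_h|\gengap(h)|$ at level $\delta/2$ would instead give $\ln(2/\delta)$.

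In Step~3, the term $\ln(2\samplesize/\delta)/(2(\samplesize-1))$ comes from Markov's inequality applied to $\prior[e^{2(\samplesize-1)\gengap^2}]$, combined with Jensen's inequality $\prior[\gengap^2]\le\frac{1}{2(\samplesize-1)}\ln\prior[e^{2(\samplesize-1)\gengap^2}]$. Markov applied to $\prior[\gengap^2]$ itself would only yield a $1/\delta$ dependence.
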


The proof is given in the Supplementary Material~\ref{appendix: proof of theorem rademacher complexity TV PAC-Bayes}.

\begin{remark}
    The Rademacher complexity $\rademachercompl_{\samplesize}(\mathcal{F}_{\loss})$ can be bounded for many standard loss classes. Here, $\rademachercompl_{\samplesize}(\mathcal{F}_{\loss})$ denotes the complexity of the loss class, not the hypothesis class $\hypoclass$. In many standard settings, these two complexities can be related. For example, for Lipschitz loss, the contraction principle provides a direct connection between the complexity of the loss class and that of $\hypoclass$ and similarly in binary classification with the $0$-$1$ loss. See, e.g., \citet{shalev2014understanding, mohri2018foundations}. The resulting rate is typically $O(1/\sqrt{\samplesize})$.
\end{remark}

A related concept is the VC dimension, which provides the combinatorial complexity of binary hypothesis classes. For classes with finite VC dimension, the Rademacher complexity is of order $O(\sqrt{\ln\samplesize/\samplesize})$ \citep{shalev2014understanding}. Thus, the VC dimension can be used in the preceding approach to show that the regularity constant $\constgengap$ vanishes. As shown below for the TV example, the it can also yield a direct bound on $\constgengap$. We first recall its definition.

\begin{definition}
    \label{definition: VC dimension}
    Let $\hypoclass$ be a class of binary-valued functions $\map{h}{\inspace}{\{0,1\}}$. A finite set $\{x_1,\ldots,x_m\}\sub\inspace$ is \textit{shattered} by $\hypoclass$ if, for every $(y_1,\ldots,y_m)\in\{0,1\}^m$, there exists $h\in\hypoclass$ such that
    $ h(x_i)=y_i$ for all $i=1,\ldots,m$.
    The \textit{VC dimension} of $\hypoclass$, denoted by $\VCdim(\hypoclass)$, is the largest $m$ for which there exists a set of $m$ points that is shattered by $\hypoclass$. If sets of arbitrarily large finite cardinality can be shattered, we set $\VCdim(\hypoclass)=\infty$.
\end{definition}

In our framework, we can recover the bound by \cite{amit2022integral}.

\begin{theorem}
\label{theorem: vc dimension PAC-Bayes}
    Consider a binary classification problem, with the 0-1 loss, and hypothesis class $\hypoclass$, with finite VC dimension. Then there exists some constant $c > 0$ s.t. for any prior $\prior$ and $\delta \in(0,1)$, with probability at least $1 - \delta$ over the samples $\RVsample$
    \begin{multline*}
        \posterior[\risk]
        \leq \posterior[\emprisk(\cdot,\RVsample)] \\
        + \sqrt{ c \frac{\VCdim(\hypoclass) + \ln \left(\frac{1}{\delta}\right)}{\samplesize} \TVdist{\posterior}{\prior} + \frac{\ln(2\samplesize/\delta)}{2(\samplesize-1)}}
    \end{multline*}
\end{theorem}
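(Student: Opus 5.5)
Plan: take Theorem~\ref{theorem: optimal transport PAC-Bayes bound} with the cost $\costfunc(h,f)=\indicatorfunc{h\neq f}$, whose OT distance is $\TVdist{\posterior}{\prior}$, and verify Assumption~\ref{assumption: regularity constant} on a high-probability event rather than almost surely. We follow the pattern apparently used for Theorem~\ref{theorem: rademacher complexity TV PAC-Bayes}. Let $\constgengap$ be the regularity constant and split the failure probability $\delta$ into $\delta/2+\delta/2$.

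\emph{Step 1: reduce to a uniform deviation bound.} With the discrete cost, the regularity condition only has content for $h\neq f$. There it reads $\gengap(h)^2-\gengap(f)^2\le \constgengap$. This holds whenever
\[
\constgengap \;\ge\; \sup_{h\in\hypoclass}\gengap(h)^2 .
\]
So it suffices to control $\sup_h|\risk(h)-\emprisk(h,\RVsample)|$.

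\emph{Step 2: the VC uniform convergence bound.} For the 0-1 loss, the loss class $\mathcal{F}_\loss$ has VC dimension at most $\VCdim(\hypoclass)$. The classical uniform convergence theorem for VC classes (e.g.\ via chaining/Dudley, giving rate without $\ln n$; see \citet{shalev2014understanding}) then applies. It gives a universal constant $c'$ such that, with probability at least $1-\delta/2$,
\[
\sup_h|\gengap(h)|\le c'\sqrt{\frac{\VCdim(\hypoclass)+\ln(2/\delta)}{\samplesize}} .
\]
Squaring gives $\sup_h\gengap(h)^2\le c\,(\VCdim(\hypoclass)+\ln(1/\delta))/\samplesize$. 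The constant $\ln 2$ is absorbed into $c$: take $c\ge 2c'^2\max(1,\ln 2/\ln(1/\delta))$, or simply bound $\ln(2/\delta)\le 2\ln(1/\delta)+\ln 2$ and use $\VCdim(\hypoclass)\ge 1$ for a nontrivial class.

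\emph{Step 3: run the proof of Theorem~\ref{theorem: optimal transport PAC-Bayes bound} on the good event.} This step is the main obstacle. The assumption is required almost surely, while here it holds only on an event $E$ of probability $1-\delta/2$. Moreover, $\constgengap$ must be a deterministic constant, which is fine here since $\VCdim(\hypoclass)$, $\samplesize$ and $\delta$ are fixed.

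I would redo the Kantorovich-duality argument as follows. For any $\posterior$,
\[
\posterior[\gengap^2]\le \prior[\gengap^2]+\constgengap\,\TVdist{\posterior}{\prior}
\]
on $E$. This follows from Theorem~\ref{theorem: kantorovich duality} with the bounded potentials $\psi_1=\gengap^2/\constgengap$ and $\psi_2=-\gengap^2/\constgengap$; both are bounded because the loss takes values in $[0,1]$.

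Separately, control $\prior[\gengap^2]$ with the McAllester-type moment bound. Since $\Expval[e^{2(\samplesize-1)\gengap(h)^2}]\le \samplesize$ for each $h$, Fubini and Markov give, with probability at least $1-\delta/2$,
\[
\prior[\gengap^2]\le \frac{\ln(2\samplesize/\delta)}{2(\samplesize-1)} ,
\]
using Jensen.

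A union bound over the two events gives total probability at least $1-\delta$. Finally, Jensen gives $(\posterior[\gengap])^2\le\posterior[\gengap^2]$. Taking square roots yields the claimed inequality simultaneously for all $\posterior$, with the $\ln(2\samplesize/\delta)$ term as stated.
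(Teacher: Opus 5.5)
Your proposal is correct and is essentially the paper's argument: the paper plugs the VC uniform-convergence bound (Proposition~\ref{proposition: uniform bound by boucheron}) into Theorem~\ref{theorem: TV distance PAC-Bayes bound}. That theorem is exactly Theorem~\ref{theorem: optimal transport PAC-Bayes bound} with cost $\indicatorfunc{h\neq f}$, regularity constant $\sup_h\gengap(h)^2$, and a $\delta/2+\delta/2$ union bound. Your Step~3 is more careful than the paper, which applies Theorem~\ref{theorem: optimal transport PAC-Bayes bound} as a black box even though that theorem assumes the regularity condition almost surely. Separating the pathwise duality step from the Markov step on $\prior[e^{2(\samplesize-1)\gengap^2}]$ is what makes the union bound legitimate. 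When absorbing $\ln 2$ into $c$, use your $\VCdim(\hypoclass)\ge 1$ route; the $\max(1,\ln 2/\ln(1/\delta))$ choice would make $c$ depend on $\delta$.
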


The proof is given in the Supplementary Material~\ref{appendix: proof of theorem VC dimension TV PAC-Bayes}.

\begin{remark}
    Using classical learning-theoretic approaches to bound the regularization constant is not limited to the special case of the TV distance. For other OT distances, however, additional assumptions on the hypothesis class and the loss function are required, see Supplementary Material~\ref{appendix: bounding the regularity constant using classical learning theory for general OT distances}.
\end{remark}

\subsection{Wasserstein-\texorpdfstring{$p$}{p} Distance Bounds}
\label{section: p-wasserstein bound}
For $p\geq1$, the Wasserstein-$p$ distance is defined by
\begin{align*}
    \big(\wassersteinpdist{\mu}{\nu}\big)^p
    \coloneqq \inf_{\pi\in\Pi(\mu,\nu)}\pi[d^p].
\end{align*}
Previous PAC-Bayesian bounds based on Wasserstein distances considered the special case $p=1$, which is an IPM. Our framework extends these results to arbitrary $p\geq1$. For compact $\hypoclass$, Theorem~\ref{theorem: optimal transport PAC-Bayes bound}, yields a PAC-Bayesian bound for $\wassersteinpdist{\mu}{\nu}$ for every $p\geq1$. To state the result, define the function class of normalized loss differences
\begin{align*}
    \losshypoclass \coloneqq \left\{\frac{\loss(h, \cdot)-\loss(h', \cdot)}{d(h,h')^p} :h,h'\in\hypoclass,\ h\neq h' \right\}.
\end{align*}
and let $G>0$ be a uniform $p$-Hölder constant of $\loss(\cdot, z)$. We further define the associated entropy integral
\begin{align*}
    J(1,\losshypoclass) \coloneqq \sup_\mu\int_0^1 \sqrt{1+\log \epscovnumb\big(\varepsilon G,\losshypoclass,L_2(\mu)\big)} \,d\varepsilon,
\end{align*}
where the supremum is over finitely supported probability measures $\mu$ on $\samplespace$ and $\epscovnumb(\varepsilon,\losshypoclass, L_2(\mu))$ denotes the $L_2(\mu)$ covering number of $\losshypoclass$ at radius $\varepsilon$.

\begin{theorem}
\label{theorem: finite p-Wasserstein distance PAC-Bayes bound}
    Let $\map{\loss}{\hypoclass \times \samplespace}{[0, 1]}$ be measurable and $(\hypoclass,d)$ be a compact metric space. Assume that $\loss(\cdot, z)$ is a $p$-Hölder continuous with constant $G$ uniformly in $z\in\samplespace$. Let $\prior$ be a prior distribution over $\hypoclass$, $\samplesize \geq 2$, $\delta \in (0,1)$. Then there is a constant $c > 0$ s.t. with probability at least $1-\delta$ over $\RVsample$
    \begin{multline}
        \posterior[\risk] \leq \posterior[\emprisk(\cdot, \RVsample)]
        + \Bigg( \Bigg( \frac{c\,G\,J(1,\losshypoclass)}{\sqrt n} \\
        + G\sqrt{\frac{2\ln(1 / \delta)}{n}} \Bigg) \wassersteinpdist{\posterior}{\prior}^p
        + \frac{\ln(2\samplesize/\delta)}{2(\samplesize - 1)}\Bigg)^\frac{1}{2}
    \end{multline}
    holds simultaneously for any distribution $\posterior$ over $\hypoclass$.
\end{theorem}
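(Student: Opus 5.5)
We instantiate Theorem~\ref{theorem: optimal transport PAC-Bayes bound} with the cost $\costfunc = d^p$ on $\hypoclass\times\hypoclass$. The task is then to show that, with high probability over $\RVsample$, the regularity condition of Assumption~\ref{assumption: regularity constant} holds with
\[
\constgengap = \frac{c\,G\,J(1,\losshypoclass)}{\sqrt n} + G\sqrt{\frac{2\ln(1/\delta)}{n}}.
\]
The two high-probability events (regularity, and the PAC-Bayes event of the main theorem) are combined by a union bound with confidence $\delta/2$ each. This explains the $\ln(2\samplesize/\delta)$ in the final term; the change $\delta\to\delta/2$ in the constant term is absorbed into $c$. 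Because Assumption~\ref{assumption: regularity constant} is stated almost surely, we will actually use the proof of Theorem~\ref{theorem: optimal transport PAC-Bayes bound} rather than its statement. The Kantorovich dual step is carried out on the event where the Lipschitz-type inequality holds, and the resulting bound is intersected with that event. Compactness of $\hypoclass$ makes $d^p$ bounded, so $\costfunc\in L^1(\prior\otimes\posterior)$ for every $\posterior$, and Theorem~\ref{theorem: kantorovich duality} applies.

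The first step is the reduction to bounded losses noted after Assumption~\ref{assumption: regularity constant}. Since $\loss\in[0,1]$, we have $|\gengap|\le1$, so
\[
|\gengap^2(h)-\gengap^2(h')| \le 2|\gengap(h)-\gengap(h')|.
\]
It therefore suffices to control
\[
\sup_{h\neq h'} \frac{|\gengap(h)-\gengap(h')|}{d(h,h')^p} \;=\; \sup_{g\in\losshypoclass}\Big|\Expval[g(\RVZ)]-\frac1n\sum_i g(\RVZ_i)\Big|,
\]
up to the factor $2$, which goes into $c$ (and into the $G\sqrt{2\ln(1/\delta)/n}$ term if needed). The right-hand side is the uniform deviation of the empirical process indexed by $\losshypoclass$. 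By uniform $p$-Hölder continuity, every $g\in\losshypoclass$ satisfies $|g|\le G$, so $\losshypoclass$ is uniformly bounded by $G$.

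The second step is concentration. Replacing one sample changes the supremum by at most $2G/n$, so McDiarmid's inequality shows that, with probability at least $1-\delta$, the supremum is at most its expectation plus $G\sqrt{2\ln(1/\delta)/n}$. For the expectation, we first apply standard symmetrization to bound it by twice the Rademacher complexity of $\losshypoclass$. We then apply Dudley's chaining bound conditionally on the sample, with respect to $L_2(\mu_n)$ for the empirical measure $\mu_n$. Rescaling the radius by the envelope $G$ and taking the supremum over finitely supported $\mu$ gives
\[
\Expval\sup_{g}\big|\cdot\big| \le \frac{c\,G\,J(1,\losshypoclass)}{\sqrt n}.
\]
The chaining diameter is at most $2G$, which is handled by the uniform-entropy form with the $\sqrt{1+\log\epscovnumb}$ integrand; the constant is absorbed into $c$.

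The main obstacle is measurability of the supremum over the uncountable index set $\losshypoclass$, together with the singular denominator as $h'\to h$. Compactness of $(\hypoclass,d)$ gives separability. The Hölder bound keeps each $g$ bounded even near the diagonal, so the supremum can be taken over a countable dense set of pairs $h\neq h'$ and the process is pointwise-measurable. A second, minor point is that the main theorem needs the regularity inequality on a fixed event rather than almost surely. We handle this by rerunning its Markov/Donsker-type step, namely $\Expval_\RVsample\sup$ of the dual term, on the intersection of the two events. This is also where the split of $\delta$ into halves enters.
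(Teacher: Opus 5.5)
Your proposal is correct and is essentially the paper's proof: Theorem~\ref{theorem: optimal transport PAC-Bayes bound} with cost $d^p$, combined by a union bound at level $\delta/2$ with Lemma~\ref{lemma: compact}. That lemma uses the same ingredients you do: a countable dense subset for measurability, the envelope $G$, a uniform-entropy bound on $\Expval[S_n]$ (the paper cites van der Vaart--Wellner, you rederive it via symmetrization and Dudley chaining), and McDiarmid with differences $2G/n$.

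You are more careful than the paper on two points: the almost-sure form of Assumption~\ref{assumption: regularity constant}, and the factor $2$ in $|\gengap^2(h)-\gengap^2(h')|\le 2|\gengap(h)-\gengap(h')|$, which the paper silently drops. Note that with a universal $c$ this factor leaves a term $2G\sqrt{2\ln(1/\delta)/n}$, in your argument and the paper's alike, so the stated coefficient holds only because the statement lets $c$ depend on $\delta$.
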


The proof is given in the Supplementary Material~\ref{appendix: proof of theorem finite p-Wasserstein distance PAC-Bayes bound}.

\section{Experiments}
\label{sec: experiments}
\begin{figure*}[t]
    \centering
    \begin{subfigure}[b]{0.48\textwidth}
        \centering
        \includegraphics[width=\textwidth]{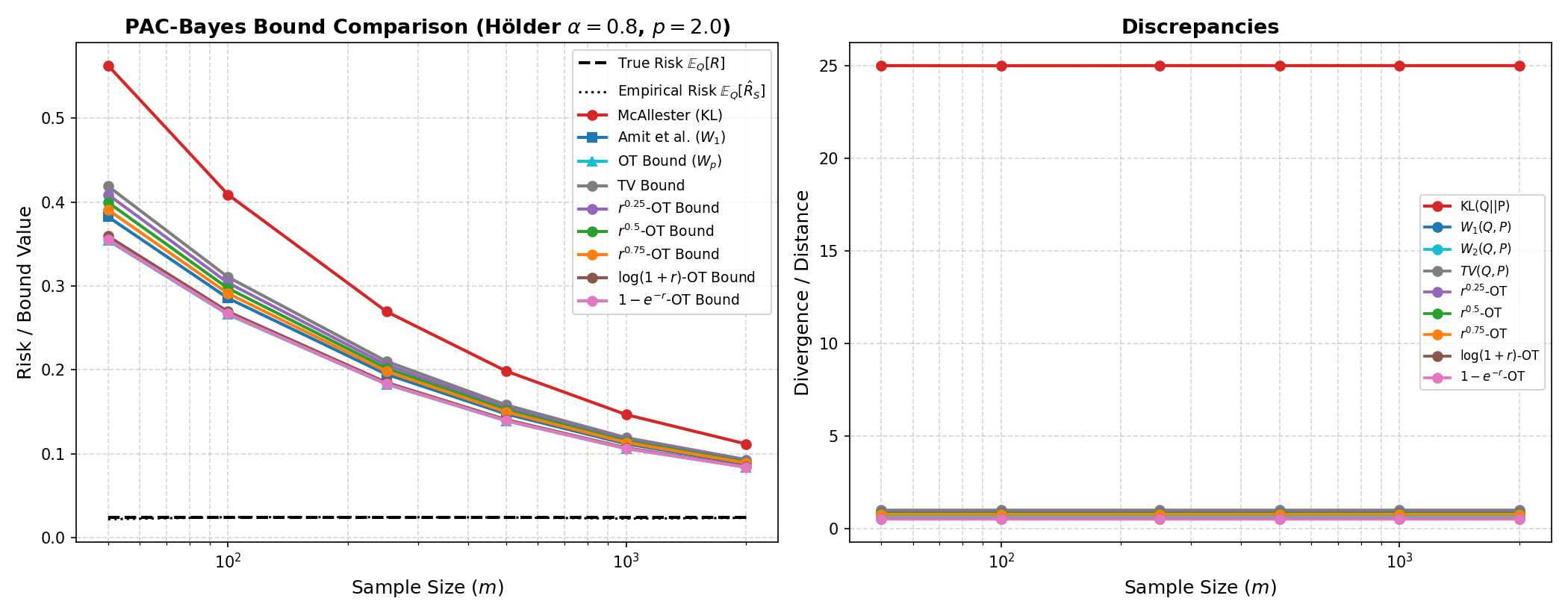} 
        \caption{Sample size $\samplesize$ sweep}
        \label{fig: toy_regr samplesize sweep}
    \end{subfigure}
    \hfill
    \begin{subfigure}[b]{0.48\textwidth}
        \centering
        \includegraphics[width=\textwidth]{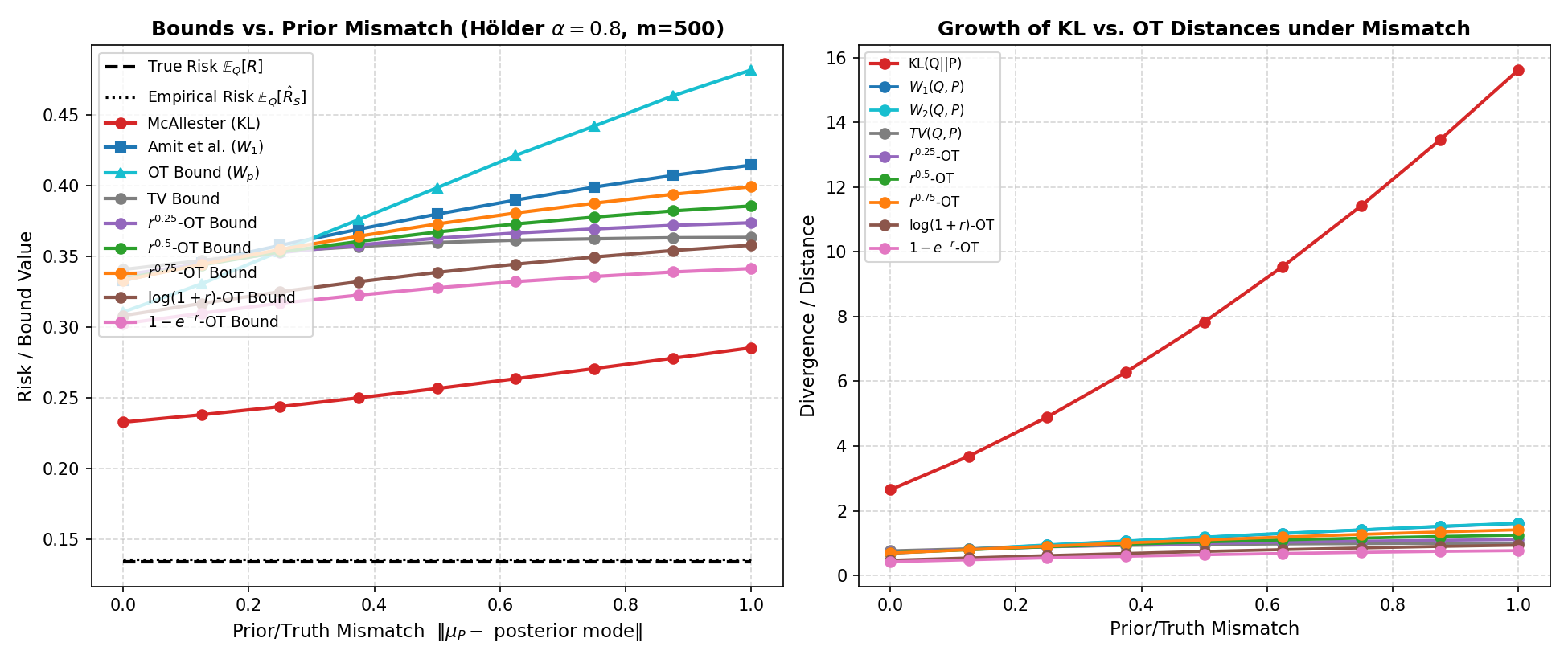}
        \caption{Prior mean sweep}
        \label{fig: toy_regr prior sweep}
    \end{subfigure}
    
    \vspace{0.5cm}
    
    \begin{subfigure}[b]{0.48\textwidth}
        \centering
        \includegraphics[width=\textwidth]{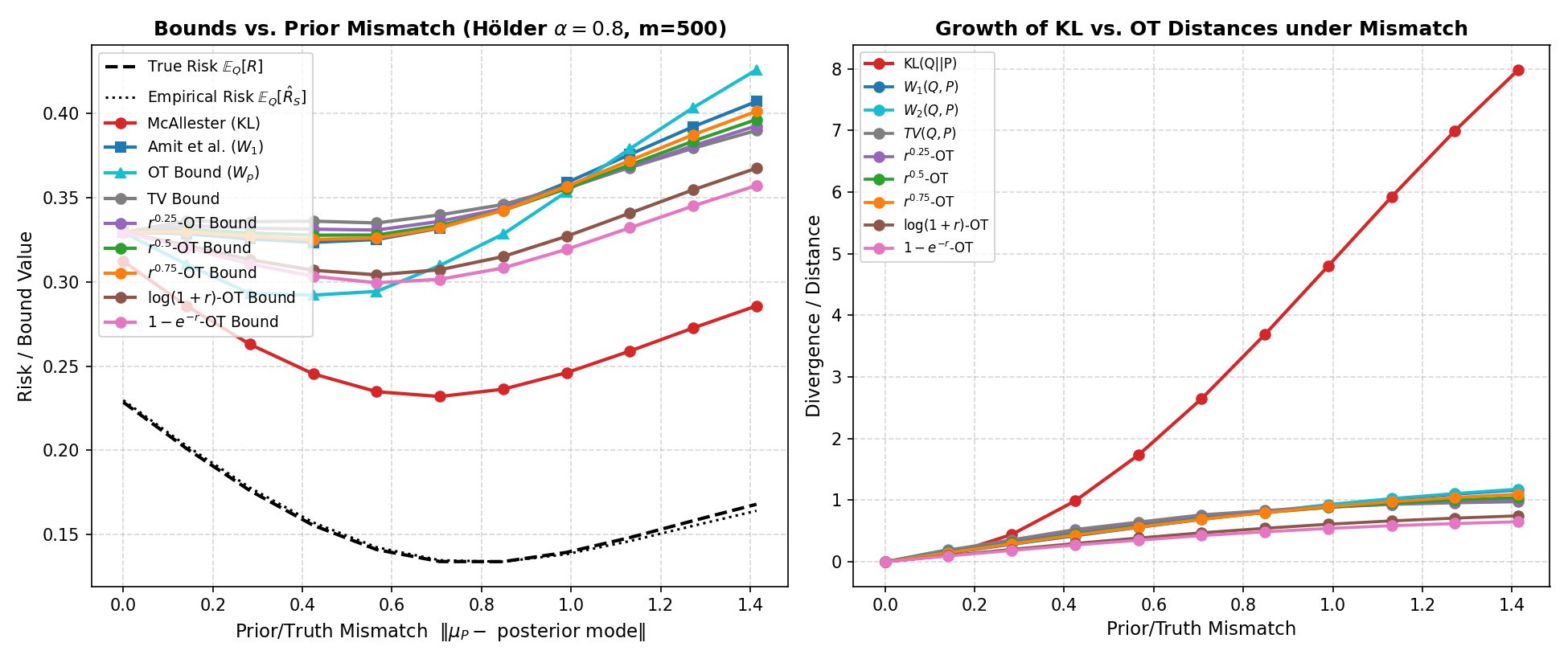}
        \caption{Posterior mean sweep}
        \label{fig: toy_regr posterior sweep}
    \end{subfigure}
    \hfill
    \begin{subfigure}[b]{0.48\textwidth}
        \centering
        \includegraphics[width=\textwidth]{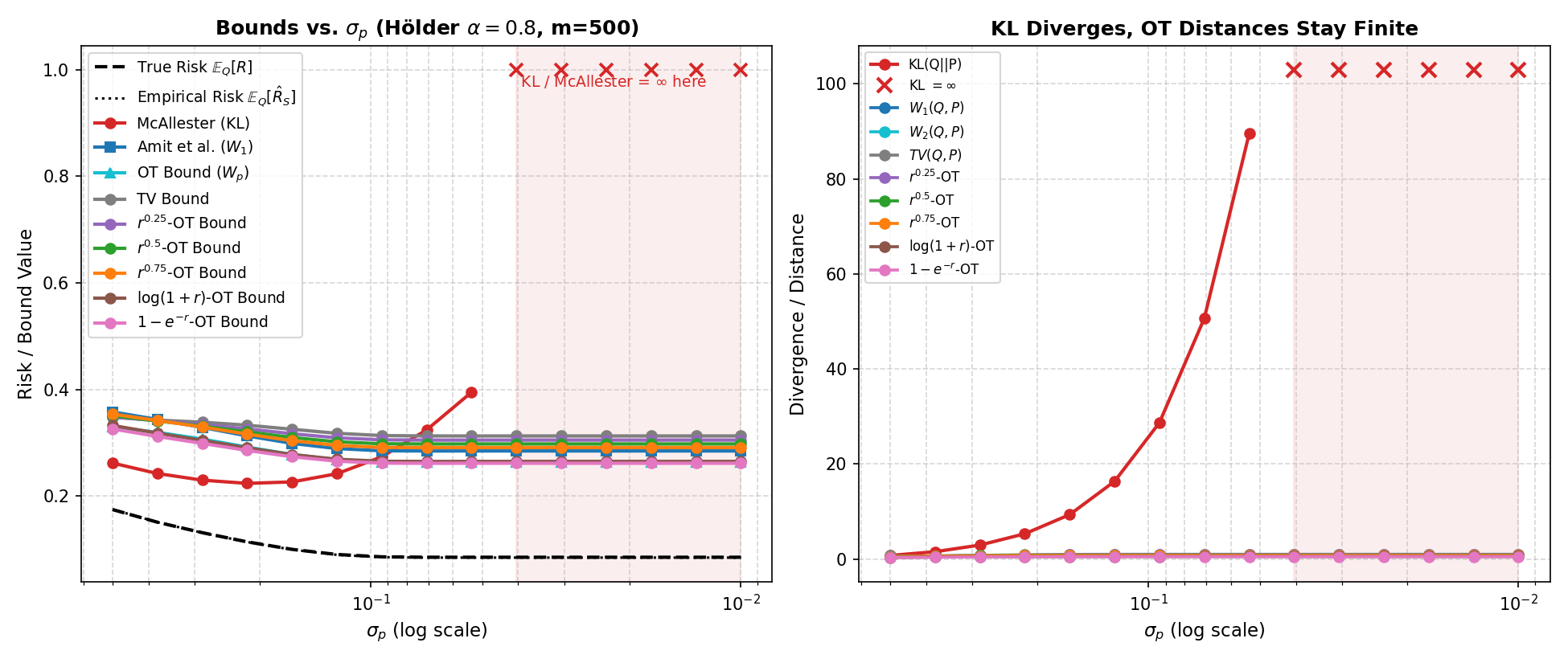}
        \caption{$\sigma$ sweep}
        \label{fig: toy_regr sigma sweep}
    \end{subfigure}

    \caption{In this linear regression experiment, we compare PAC-Bayesian bounds whose remainder terms are based on different statistical distances. We compare the behavior of the bounds for different priors, posteriors and sample sizes. For each experiment, the picture on the left shows the behavior of the distance and the right shows the full bound.}
    \label{fig: toy_regr}
\end{figure*}

We compare McAllester-style PAC-Bayesian bounds based on KL divergence, IPMs, and OT distances. PAC-Bayesian bounds come in several forms, such as the McAllester and Catoni bounds, each with distinct properties and advantages. Rather than comparing these different PAC-Bayesian bound constructions, we fix the McAllester-style framework and vary only the discrepancy measure appearing in the bound. This allows us to isolate how the choice of discrepancy measure affects the resulting generalization bound and to illustrate the distinct properties induced by KL divergence, IPMs, and OT distances. We therefore first study these measures independently of the PAC-Bayesian setting in Supplementary Material~\ref{appendix: properties of the statistical distances}: The KL divergence requires absolute continuity and becomes infinite under support mismatch, whereas IPMs and OT distances remain finite. Further unlike the KL divergence, IPMs and OT distances do not explode under changes in concentration or location of probability mass. In contrast to IPMs, OT distances provide a direct and interpretable notion of how far probability mass has to be moved.

\subsection{Discrete Linear Regression}

The first experiment uses a synthetic dataset. We consider two-dimensional linear regression with inputs sampled uniformly from the Euclidean unit ball and outputs generated according to
\begin{align*}
    y = \langle w_\star,x\rangle + \varepsilon,
    \quad
    w_\star=(0.5,-0.5),
    \quad
    \varepsilon\sim\calN(0,0.1^2).
\end{align*}
The hypothesis class is a finite grid $\hypoclass \sub [-1,1]^2$ of linear predictors $h_w(x)=\langle w,x\rangle$. Priors and posteriors are discretized Gaussian distributions on this grid. We use the bounded Hölder loss
\begin{align*}
    \loss(w,(x,y))=\min\{1,|\langle w,x\rangle-y|/M\}^{\alpha}.
\end{align*}
For each configuration, we compute the empirical Gibbs risk, estimate the population Gibbs risk on an independent test set and compare PAC-Bayesian bounds based on KL divergence, Wasserstein-$1$ and Wasserstein-$2$, TV distance and other OT distances.

We perform four controlled sweeps. First, varying $\samplesize$ in Figure~\ref{fig: toy_regr samplesize sweep} shows similar scaling across all bounds, consistent with their $\mathcal{O}(\sqrt{\ln(\samplesize)/\samplesize})$ complexity terms. In Figure~\ref{fig: toy_regr prior sweep}, we move the prior mean away from a fixed posterior concentrated near $w_\star$, isolating the effect of prior–posterior mismatch. Next, Figures~\ref{fig: toy_regr prior sweep} and~\ref{fig: toy_regr posterior sweep} vary the prior and posterior means, respectively, to isolate the effect of prior-posterior mismatch. Although the KL divergence increases sharply with this mismatch, its bound can remain competitive because its complexity term has a smaller prefactor compared to the OT and IPM based bounds. Finally, Figure~\ref{fig: toy_regr sigma sweep} decreases the common variance $\sigma^2$. The KL term grows rapidly as $\sigma$ decreases and is numerically evaluated as infinite for $\sigma\leq0.05$ due to floating-point overflow, although it remains finite mathematically. The OT-based discrepancies remain finite on the bounded hypothesis grid, yielding meaningful bounds even for small $\sigma$. For larger $\sigma$, however, the KL-based bound is tighter. Performance generally improves as $\phi$ becomes more concave. A possible explanation is that the prior and posterior differ by transporting non-negligible mass over relatively large distances. More concave choices of $\phi$ assign lower marginal cost to such long-range transport, reducing the OT discrepancy and potentially tightening the resulting bound.

\subsection{Classification}
\begin{figure*}[t]
    \centering
    \begin{subfigure}[b]{0.48\textwidth}
        \centering
        \includegraphics[width=\textwidth]{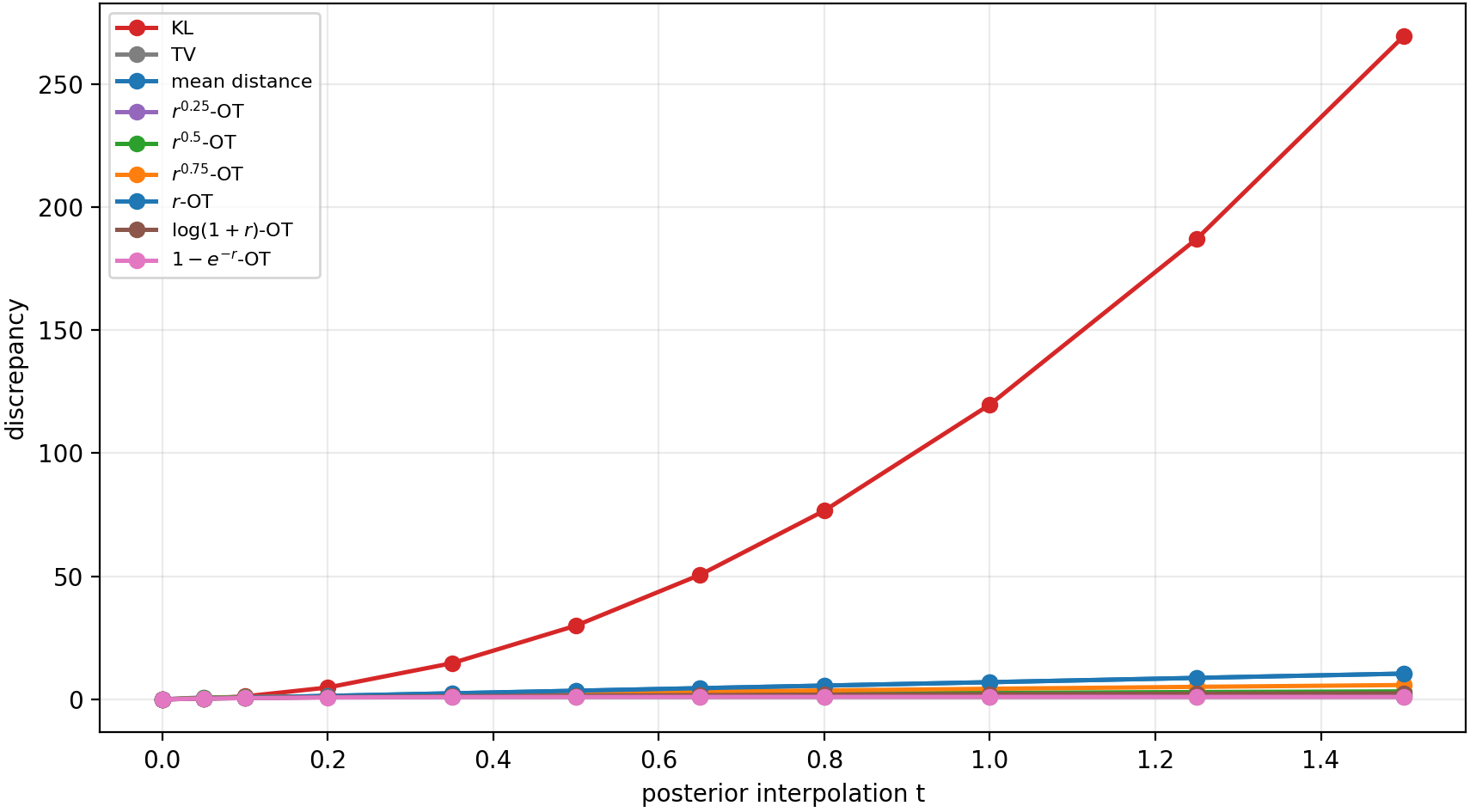}
    \end{subfigure}
    \hfill
    \begin{subfigure}[b]{0.48\textwidth}
        \centering
        \includegraphics[width=\textwidth]{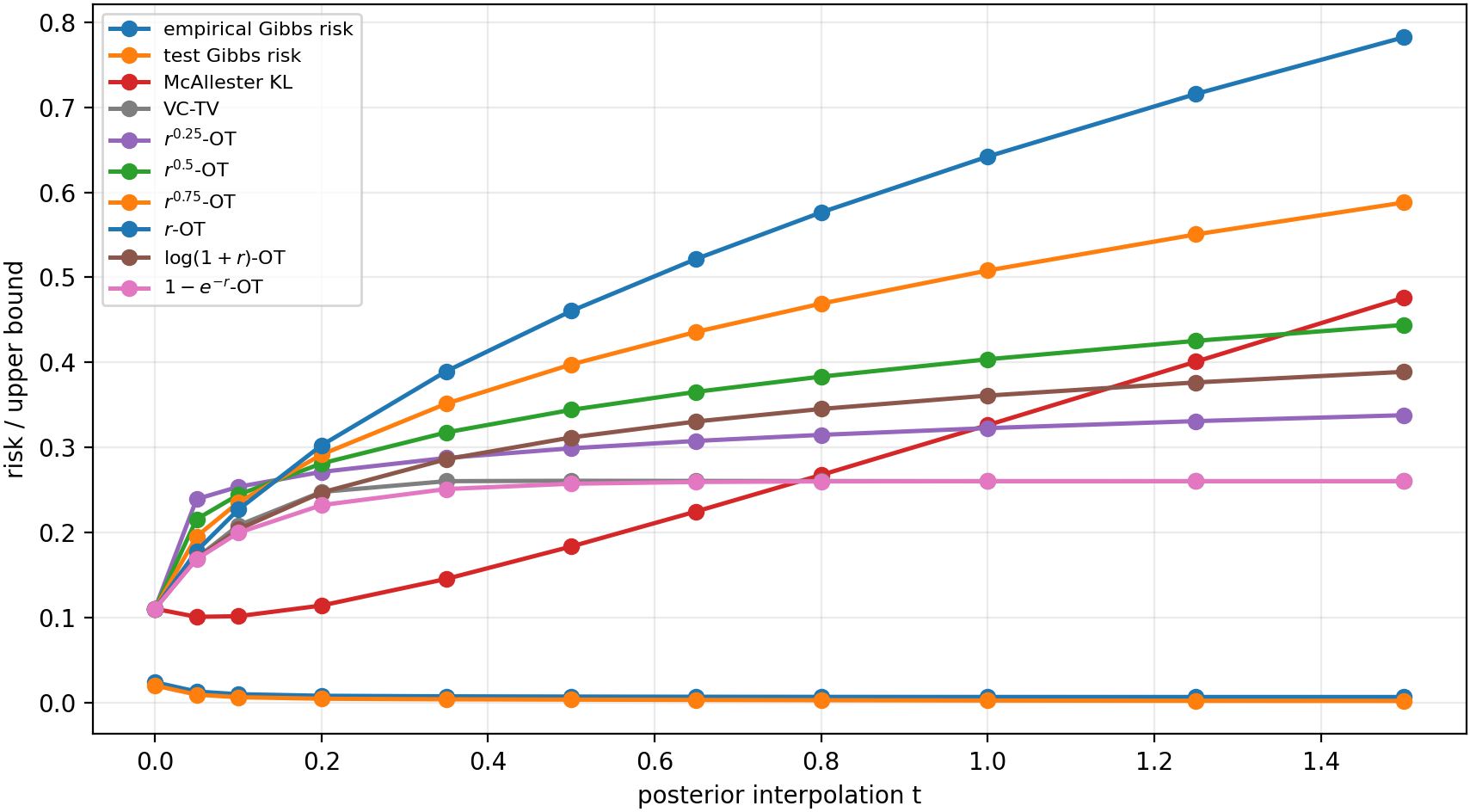} 
    \end{subfigure}
    
    \caption{In this experiment, we compare PAC-Bayesian bounds whose complexity terms are based on different statistical distance. We compare the behavior of the bounds for different posteriors. The picture on the left shows the behavior of the distance and the right shows the full bound.}
    \label{fig: continuous head}
\end{figure*}

We next consider binary classification on the UCI Pendigits dataset \citep[licensed under CC BY 4.0.]{alpaydin1996penbased}. We restrict the task to digits $3$ and $5$, encoded as labels $-1$ and $+1$. We split the data into prior, training, and test sets with proportions $40\%/30\%/30\%$. A two-layer ReLU network is trained as a feature extractor $f$ on the prior set and then frozen. The resulting hypothesis class consists of affine classifiers on the learned representation,
\begin{align*}
    h_\theta(x) = \operatorname{sign}\left( \theta^\top
        \begin{pmatrix}
            f(x) \\
            1
        \end{pmatrix}
        \right),
    \qquad
    \theta\in\R^{d+1}.
\end{align*}

Unlike in the synthetic regression experiment, we use continuous prior and posterior distributions. The prior is the isotropic Gaussian
\begin{align*}
    \prior=\calN(\mu_{\prior},\sigma^2\idmat_{d+1}), 
\end{align*}
where $\mu_{\prior}$ is obtained by training on the prior set. We obtain a target posterior mean $\mu_\star$ by training on the training set and consider the family of posteriors
\begin{align*}
    \posterior_t =\calN\bigl(\mu_{\prior} + t(\mu_\star-\mu_{\prior}),\sigma^2\idmat\bigr), \qquad t\in[0,1.5].
\end{align*}
Values $t \leq 1$ interpolate between the prior mean and the target, while $t > 1$ extrapolates beyond the target to stress-test the discrepancy terms.

For each $t$, we compute the empirical Gibbs risk on the training set and estimate the population Gibbs risk on the test set. We compare McAllester's KL bound with the VC-based bounds involving $\TVdist{\posterior_t}{\prior}$ and $\phiOT{\posterior_t}{\prior}$. See Figure~\ref{fig: continuous head} for the results.

For this experiment, the behavior of the bounds as a function of $t$ can be understood from the exact expressions for KL and $W_p$ given in the Supplementary Material~\ref{appendix: properties of the statistical distances}. In particular, since $\sigma_\prior = \sigma_{\posterior_t}$
\begin{align*}
    \KL{Q_t}{P} &= \frac{t^2\|\mu_\star-\mu_P\|_2^2}{2\sigma^2}, \\
    W_p(Q_t,P) &= t\|\mu_\star-\mu_P\|_2.
\end{align*}
Hence, the KL divergence grows quadratically in $t$, whereas Wasserstein distances grow linearly. For small $t$, the KL-based bound is tighter because its discrepancy term has a smaller coefficient. As $t$ increases, the growth of KL eventually dominates, and the OT-based bounds become tighter. In our experiment, this occurs for $t\geq0.8$. Notably, the KL divergence already exceeds the OT discrepancies in the small-$t$ regime, but the smaller coefficient in the KL bound compensates for this difference. Overall, the results illustrate the complementary behavior predicted by the theory, that KL-based bounds are advantageous for small prior-posterior mismatch, whereas OT-based bounds are more robust to larger shifts.

\section{Conclusion}
We introduced a general framework for PAC-Bayesian generalization bounds that replaces the KL divergence between posterior and prior with general OT distances. This change removes the requirement that the posterior has to be absolutely continuous w.r.t.\ the prior and consequently, allows PAC-Bayesian bounds to be formulated for prior and posterior distributions with disjoint supports. More generally, OT distances account for the geometry of the hypothesis space through the choice of cost function, providing a notion of proximity between distributions that is not captured by the KL divergence or IPMs. The resulting bounds can therefore remain meaningful in settings where KL-based bounds become vacuous due to support mismatch or substantial prior-posterior displacement.

Our numerical experiments illustrate this difference. When the prior is displaced from the posterior or assigns little mass to the region containing the posterior, KL-based bounds become substantially worse, whereas OT-based bounds account for the distance between the corresponding hypotheses and can remain informative. Conversely, when the prior and posterior are well aligned, the KL divergence can provide tighter bounds than the OT-based alternatives. This demonstrates that OT does not uniformly dominate KL, rather, the two complexity measures capture different notions of prior-posterior discrepancy. The choice between them therefore depends on the geometry of the hypothesis space and on how the prior is constructed.

\subsection*{AI use statement}

In this work, we used generative AI tools for assist in the writing of proofs, design or provide feedback on research methodology or experiments, implement methods. We have not used generative AI tools for generate synthetic data sets, help develop theoretical models or conceptual frameworks, formulate mathematical claims, provide critical ingredients for proving mathematical claims, propose or refine hypotheses and assist with translation, clean and reformat datasets, support qualitative and thematic data analysis, interpret results
are not applicable to this work.
Additionally, we used generative AI tools for suggest experimental parameters, create or edit software code, summarize or analyze existing literature, brainstorming, sourcing/searching for information, edit a research paper to improve readability, identify relevant literature.
We have reviewed all AI-assisted work.
LLM-generated code was verified and tested for correctness by two authors. LLM-assisted literature summaries were cross-checked against the original source papers. Any citations or claims derived from these summaries were independently verified to ensure accuracy and appropriate attribution. All information retrieved through AI-assisted search was traced back to primary sources. We confirmed the existence, accuracy, and relevance of each source before incorporating it into the manuscript. AI-assisted edits were reviewed line by line by the authors. No substantive claims, technical content, or logical arguments were altered without explicit author approval.
We take responsibility for the final content of this work, including text, claims, or artifacts produced with the aid of generative AI.

\subsection*{Acknowledgements}
We  acknowledge funding by the ANR-DFG joint project TRINOM-DS under the numbers ANR-20-CE92-0037-01 and OC150/5-1.

\bibliography{references}

\clearpage
\appendix
\thispagestyle{empty}

\onecolumn
\aistatstitle{Optimal Transport PAC-Bayes Bounds: \\
Supplementary Materials}
\label{appendix}

\section{Proof of Theorem~\ref{theorem: optimal transport PAC-Bayes bound}}
\label{appendix: proof of theorem optimal transport PAC-Bayes bound}

\begin{proof}
    Let $\samplesize \geq 2$ and define
    \begin{align*}
        \map{f}{\samplespace^\samplesize}{\R}, \qquad f(\sample) \coloneqq \sup_\posterior \left( 2(\samplesize - 1) \posterior \left[\Delta(\cdot, \sample)^2 \right] - \constgengap \cdot 2(\samplesize - 1) \opttransdist{\posterior}{\prior} \right) \,.
    \end{align*}
    Markov's inequality \citep[Lemma 5.1]{kallenberg2002foundations} yields
    \begin{align}
    \label{eq: proof opt. trans., markov ineq.}
        \Prob[f(\RVsample) \geq \varepsilon ] = \Prob\left[e^{f(\RVsample)} \geq e^{\varepsilon} \right] \leq \frac{\Expval_\RVsample\left[ e^{f}\right]}{e^\varepsilon}.
    \end{align}
    The rest of the proof bounds $\Expval_\RVsample\left[ e^{f}\right]$.
    
    By assumption, we have a bounded loss $\map{\loss}{\hypoclass \times \samplespace}{[0, 1]}$ and therefore for all $f \in \hypoclass$
    \begin{align*}
        \gengap(f) = \left(\Expval_\RVZ [\loss(f, \cdot)] - \frac{1}{\samplesize} \sum_{i=1}^\samplesize \loss(f, z_i)\right) \in [-1,1]
    \end{align*}
    and in particular
    \begin{align*}
        \psi \coloneqq \constgengap^{-1} \gengap^2 \in [0, \constgengap^{-1}].
    \end{align*}
    Furthermore, since $\psi$ is a composition of measurable functions by definition, both $\psi$ and $-\psi$ are  measurable and bounded functions, i.e., they belong to $\bounmeasfunc{\hypoclass}$. Together with the assumption $\constgengap^{-1} \gengap^2 \oplus -\constgengap^{-1} \gengap^2 \leq \costfunc$ holding $\Prob_\RVsample$-a.s., Kantorovich's duality in Theorem~\ref{theorem: kantorovich duality} implies for $\Prob_\RVsample$ almost all $\sample$ and for all distributions $\posterior$ over $\hypoclass$
    \begin{align*}
        \posterior[\constgengap^{-1} \gengap^2] + \prior[-\constgengap^{-1} \gengap^2] \leq \opttransdist{\posterior}{\prior} \, .
    \end{align*}
    Equivalently,
    \begin{align*}
        \posterior[2(\samplesize-1) \gengap^2] + \prior[-2(\samplesize-1) \gengap^2] \leq \constgengap \cdot 2(\samplesize - 1) \cdot \opttransdist{\posterior}{\prior} \, .
    \end{align*}
    Therefore, rearranging and Jensen's inequality applied to the exponential function yield  
    \begin{align*}
        2(\samplesize-1) \posterior \left[\gengap^2\right] - \constgengap \cdot 2(\samplesize - 1) \cdot \opttransdist{\posterior}{\prior}
        \leq \prior \left[2(\samplesize-1) \gengap^2\right]
        \leq \ln{\prior \left[e^{2(\samplesize-1) \gengap^2}\right]} \, .
    \end{align*}Since it holds for all distributions $\posterior$ over $\hypoclass$, we get
    \begin{align*}
        f(\sample) = \sup_\posterior \left( 2(\samplesize-1) \posterior \left[\gengap^2 \right] - \constgengap \cdot 2(\samplesize - 1) \cdot \opttransdist{\posterior}{\prior} \right) \leq \ln{\prior \left[e^{2(\samplesize-1) \gengap^2}\right]}.
    \end{align*}
    By the independence of the prior $\prior$ and the sample set $\RVsample$, we get
    \begin{align*}
        \Expval_\RVsample\left[ e^f \right] \leq \Expval_\RVsample \left[ \prior \left[ e^{2(\samplesize-1) \gengap^2}\right] \right] = \prior \left[ \Expval_{\RVsample}\left[ e^{2(\samplesize-1) \gengap^2}\right]\right].
    \end{align*}
    Let $h \in \hypoclass$. Hoeffding's inequality yields 
    \begin{align*}
    \Prob \left[ \Delta_n(h) \geq \varepsilon \right] \leq e^{-2\samplesize\varepsilon^2},
    \end{align*}
    which implies by Lemma~5 in \citep{mcallester2003simplified}
    \begin{align*}
        \Expval_{\RVsample}\left[ e^{2(\samplesize-1) \gengap(h)^2}\right] \leq n.
    \end{align*}
    Hence $\Expval_\RVsample\left[ e^{f(\RVsample)}\right] \leq \samplesize$. Combining this with \eqref{eq: proof opt. trans., markov ineq.}, we get
    \begin{align*}
        \Prob[f(\RVsample) \geq \varepsilon ] \leq \frac{\samplesize}{e^\varepsilon}.
    \end{align*}
    Choosing $\varepsilon = \ln{\frac{\samplesize}{\delta}}$ and rearranging terms yields the statement.
\end{proof}

\section{Proof of Theorem~\ref{theorem: linear regression concave OT}}
\label{appendix: proof of theorem linear regression concave OT}

We first restate Theorem~\ref{theorem: linear regression concave OT} with all constants made explicit. We then establish the claimed bound by combining three intermediate lemmas.

\begin{theorem}
    Let $\map{\phi}{[0,\infty)}{[0,\infty)}$ be non-decreasing and concave, with $\phi(0)=0$ and $\phi(r)>0$ for every $r>0$. Let $\RVZ = (\RVX, \RVY)$ be a random variable taking values in $\R^d\times\R$. Assume that there exist $R, Y_{\max} > 0$ such that
    \begin{align*}
        \|\RVX\|_2\leq R \qquad\text{and}\qquad |\RVY|\leq Y_{\max}
    \end{align*}
    almost surely. Consider the hypothesis class and squared loss
    \begin{align*}
        \hypoclass &= \left\{ w\in\R^d:\| w\|_2\leq B\right\},\qquad \ell(w,(x,y)) = \frac{1}{(B R + Y_{\max})^2}\left(\langle w,x\rangle-y\right)^2,
    \end{align*}
    where $B>0$. Equip $\hypoclass$ with the Euclidean metric $d(w,v)=\| w-v\|_2$. Let $\delta\in(0,1)$, $\samplesize \geq 2$ and $\prior$ be a prior distribution over $\hypoclass$. Then, for every posterior distribution $\posterior$ over $\hypoclass$,
    \begin{multline*}
        \posterior[\risk] \leq \posterior[\emprisk(\cdot,\RVsample)] \\
        + \sqrt{ \frac{4B}{\phi(2B)}  \cdot \left( \frac{B^2R^2 + Y_{\max}^2}{2} + BR\RVY_{\max}\right) \left( BR^2+RY_{\max} \right) \frac{\left(1 + \sqrt{\ln \frac{12}{\delta}}\right)^2}{n} \phiOT{\prior}{\posterior}+ \frac{\ln\frac{2\samplesize}{\delta}}{2(\samplesize-1)}}
    \end{multline*}
    holds with probability at least $1-\delta$ over an i.i.d.\ sample $\RVsample=(\RVZ_1,\ldots,\RVZ_\samplesize)$.
\end{theorem}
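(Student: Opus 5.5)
The plan is to reduce to Theorem~\ref{theorem: optimal transport PAC-Bayes bound} with cost $\costfunc = \phi\circ d$. That theorem requires Assumption~\ref{assumption: regularity constant} almost surely, whereas here the regularity constant can only be controlled with high probability. So I will split the confidence budget: probability $\delta/2$ for a high-probability Lipschitz estimate, and $\delta/2$ for the PAC-Bayes step. This split is exactly what produces the $\ln\frac{2\samplesize}{\delta}$ term. On the good event $E$ I would rerun the proof of Theorem~\ref{theorem: optimal transport PAC-Bayes bound}: Kantorovich duality gives, pointwise on $E$,
\[
f(\sample)\le \ln \prior\bigl[e^{2(\samplesize-1)\gengap^2}\bigr].
\]
Markov's inequality applied to the right-hand side needs no regularity at all and fails with probability at most $\delta/2$. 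Combining the two events by a union bound gives the claim.

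The steps, in order, are as follows.

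\textbf{Step 1 (deterministic reduction).} The normalized loss takes values in $[0,1]$, so $|\gengap|\le 1$. Hence $|\gengap^2(w)-\gengap^2(v)|\le 2|\gengap(w)-\gengap(v)|$, and it suffices to bound $L_\samplesize \coloneqq \sup_{w\ne v}|\gengap(w)-\gengap(v)|/\|w-v\|_2$.

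\textbf{Step 2 (from Euclidean to $\phi\circ d$).} Since $\phi$ is concave with $\phi(0)=0$, the ratio $\phi(r)/r$ is non-increasing. On $\hypoclass$ every distance is at most $2B$, so
\[
r \le \frac{2B}{\phi(2B)}\,\phi(r) \quad \text{for } r\in[0,2B].
\]
This gives $\gengap^2\oplus-\gengap^2 \le \frac{4B}{\phi(2B)}\,L_\samplesize\,\phi\circ d$, which accounts for the factor $4B/\phi(2B)$ in the constant.

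\textbf{Step 3 (Lipschitz estimate for the gap).} Expanding the quadratic gives
\[
\gengap(w)=\langle w,(\Sigma-\hat\Sigma)w\rangle/2 - \langle w, b-\hat b\rangle + \text{const},
\]
with $\Sigma=\Expval[XX^\top]$, $b=\Expval[YX]$, and $\hat\Sigma,\hat b$ their empirical counterparts. Up to the normalization, this yields
\[
|\gengap(w)-\gengap(v)|\le \bigl(B\|\Sigma-\hat\Sigma\|_{op}+\|b-\hat b\|_2\bigr)\|w-v\|_2.
\]
Alternatively, the gradient $\nabla_w \ell$ is bounded by $(BR^2+RY_{\max})$ times the loss range, which matches the product form of $c(B,R,Y_{\max})$.

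\textbf{Step 4 (concentration).} I would bound $\|\Sigma-\hat\Sigma\|$ and $\|b-\hat b\|$ using vector-valued (Hilbert-space) McDiarmid/Hoeffding inequalities. For $\|\cdot\|$ I would use the Frobenius norm, which upper-bounds the operator norm. Each of the form $\|\frac1\samplesize\sum \xi_i - \Expval\xi\|\le \frac{M}{\sqrt\samplesize}(1+\sqrt{2\ln(1/\delta')})$, and a union bound over the pieces with $\delta'$ of order $\delta/12$ gives the $(1+\sqrt{\ln\frac{12}{\delta}})$ factor. The square-root dependence on $\samplesize$ combines with the mismatch between the $\gengap^2$ and $\gengap$ scalings to give the stated $1/\samplesize$ rate of $\constgengap$.

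\textbf{Main obstacle.} The hard step is Step~4 together with getting the $1/\samplesize$ rather than $1/\sqrt{\samplesize}$ rate. A plain Lipschitz bound on $\gengap^2$ via $2|\gengap|\,L_\samplesize$ gives only $L_\samplesize=O(1/\sqrt\samplesize)$. To reach $O(1/\samplesize)$ I would additionally use that, on the same event, $\sup_w|\gengap(w)|=O(1/\sqrt\samplesize)$. This follows from the identical concentration of $\Sigma-\hat\Sigma$, $b-\hat b$ and the scalar $\Expval Y^2-\frac1\samplesize\sum Y_i^2$, which explains the first factor $\frac{B^2R^2+Y_{\max}^2}{2}+BRY_{\max}$. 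The product of the two $O(1/\sqrt\samplesize)$ terms then gives the constant $c(B,R,Y_{\max})(1+\sqrt{\ln\frac{12}{\delta}})^2/\samplesize$.
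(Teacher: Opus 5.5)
Your route is the paper's. The paper combines Lemma~\ref{lemma: linear-regression-concave-modulus} with Lemma~\ref{lemma: OT PAC-Bayes bound uniform bounds}. The first lemma uses Hilbert-space McDiarmid on the three moment deviations to get a uniform bound $u$ on $|\gengap|$, and, via $t\le\frac{2B}{\phi(2B)}\phi(t)$, a bound $u_{\text{grad}}$ on $|\gengap(w)-\gengap(v)|/\phi(\|w-v\|_2)$. The second lemma sets $\constgengap=2u\,u_{\text{grad}}$, which is exactly the product in your ``main obstacle'' paragraph. Your Steps 1--2 as displayed, with the crude factor $2$, only give $\constgengap=O(\samplesize^{-1/2})$ and should be replaced by that product.

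On the probabilistic side you are more careful than the paper. Theorem~\ref{theorem: optimal transport PAC-Bayes bound} assumes Assumption~\ref{assumption: regularity constant} almost surely, and the paper just plugs in a high-probability constant. Your argument uses duality on the good event to get $f\le\ln\prior[e^{2(\samplesize-1)\gengap^2}]$, then applies Markov to the right-hand side, which needs no regularity. This makes that step rigorous and also sidesteps measurability of the supremum over $\posterior$.

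The real gap is your final claim that the product gives $c(B,R,Y_{\max})(1+\sqrt{\ln\frac{12}{\delta}})^2/\samplesize$; it does not. Write $\kappa=\samplesize^{-1/2}\bigl(1+\sqrt{2\ln(1/\delta')}\bigr)$ for the deviation level from your Step 4.

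\textbf{Normalization.} The factors $\frac{B^2R^2+Y_{\max}^2}{2}+BRY_{\max}=\frac{(BR+Y_{\max})^2}{2}$ and $BR^2+RY_{\max}$ are the sup and Lipschitz bounds for the loss $\frac12(\langle w,x\rangle-y)^2$. That loss is only $[0,\frac{(BR+Y_{\max})^2}{2}]$-valued, so Theorem~\ref{theorem: optimal transport PAC-Bayes bound} does not apply to it in general. For the loss in the statement, $\gengap$ is scaled by $2/(BR+Y_{\max})^2$. On your event this gives $\sup_w|\gengap(w)|\le\kappa$ and $L_\samplesize\le\frac{2R}{BR+Y_{\max}}\kappa$, hence $\constgengap=\frac{8BR}{\phi(2B)(BR+Y_{\max})}\kappa^2=\frac{4}{(BR+Y_{\max})^4}\,c(B,R,Y_{\max})\,\kappa^2$. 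So ``up to the normalization'' hides a factor that can exceed one.

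\textbf{Missing factor $2$.} With $\delta'=\delta/12$, your own concentration formula gives $(1+\sqrt{2\ln\frac{12}{\delta}})^2/\samplesize$, not $(1+\sqrt{\ln\frac{12}{\delta}})^2/\samplesize$. Taking $\delta'=\delta/6$ does not fix this, since $2\ln\frac{6}{\delta}>\ln\frac{12}{\delta}$.

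Since $(1+\sqrt{2a})^2\le 2(1+\sqrt{a})^2$, your argument implies the displayed inequality when $(BR+Y_{\max})^4\ge 8$, but not in general. What it actually proves is the bound with $\constgengap=\frac{8BR}{\phi(2B)(BR+Y_{\max})}\cdot\frac{(1+\sqrt{2\ln(12/\delta)})^2}{\samplesize}$.

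The paper's own proof has the same two mismatches. Its Lemma~\ref{lemma: linear-regression-concave-modulus} is stated for the $\frac12(\cdot)^2$ loss, and $\kappa_\samplesize(\delta/12)^2$ carries the factor $2$. So the defect lies in the displayed constant rather than in your strategy, but you should track the constant explicitly instead of asserting it.
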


\begin{assumption}[Uniform convergence \citep{shalev2014understanding}]
\label{assumption: UC}
    There exists a bound function $u(\samplesize, \delta) > 0$ such that for $\delta \in (0,1)$, $u(\samplesize, \delta) \xrightarrow{\samplesize \to \infty} 0$ and for any $\samplesize \in \N$
    \begin{align*}
        \Prob \left[ |\gengap(h)| \leq u(\samplesize, \delta), \ \forall h \in \hypoclass \right] \geq 1 - \delta.
    \end{align*}
\end{assumption}

\begin{assumption}[Loss-Gradient UC Property]
\label{assumption: UC grad}
    There exists a bound function $u_{\text{grad}}(\samplesize, \delta) > 0$ such that for any $\delta \in (0,1)$, $u_{\text{grad}}(\samplesize, \delta) \xrightarrow{\samplesize \to \infty} 0$ and for any $\samplesize \in \N$
    \begin{align*}
        \Prob\left[\sup_{f \neq g}\frac{|\gengap(f,\RVsample) - \gengap(g,\RVsample)|}{\phi(d(f,g))} \leq u_{\mathrm{grad}}(\samplesize,\delta)\right]\ge1-\delta.
    \end{align*}
\end{assumption}

\begin{remark}
\label{remark: uc grad}
    Assumption~\ref{assumption: UC grad} generalizes the Loss-Gradient Uniform Convergence (UC) property introduced by \cite{amit2022integral}, which requires that with probability at least $1 - \delta$
    \begin{align*}
        \sup_{h \in \operatorname{Int}(\hypoclass)} \left\| \Expval_\RVZ\big[\nabla_h \loss(h, Z)\big] - \frac{1}{\samplesize} \sum_{i=1}^{\samplesize} \nabla_h \loss(h, z_i) \right\|_2 \le u_{\text{grad}}(\samplesize, \delta),
    \end{align*}
    under the assumption that $\loss: \hypoclass \times \samplespace \to \R$ is differentiable w.r.t.\ $h$ on $\operatorname{Int}(\hypoclass) \times \samplespace$ and continuous on $\hypoclass \times \samplespace$.
\end{remark}

\begin{lemma}
    \label{lemma: OT PAC-Bayes bound uniform bounds}
    Let $(\hypoclass,d)$ be a metric space, $\map{\loss}{\hypoclass \times \samplespace}{[0,1]}$ be measurable, $\prior$ be a prior distribution over $\hypoclass$, $\delta\in(0,1)$, $\samplesize \geq 2$ and let $\map{\phi}{[0, \infty)}{[0, \infty)}$ be non-decreasing with $\phi(0)=0$ and $\phi(r) > 0$ for $r \neq 0$. Assume that the Assumptions~\ref{assumption: UC} and \ref{assumption: UC grad} are satisfied. Then, for every posterior distribution $\posterior$ over $\hypoclass$,
    \begin{align*}
        \posterior[\risk] \leq \posterior[\emprisk(\cdot,\RVsample)] + \sqrt{ 2 \cdot u\left(\samplesize, \frac{\delta}{4}\right) u_{\text{grad}}\left(\samplesize, \frac{\delta}{4}\right) \phiOT{\prior}{\posterior}+ \frac{\ln\frac{2\samplesize}{\delta}}{2(\samplesize-1)}}
    \end{align*}
    holds with probability at least $1-\delta$.
\end{lemma}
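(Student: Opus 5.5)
The plan is to reduce the lemma to the proof of Theorem~\ref{theorem: optimal transport PAC-Bayes bound}. There is one twist: Assumption~\ref{assumption: regularity constant} is not available almost surely. It only holds on a high-probability event produced by Assumptions~\ref{assumption: UC} and \ref{assumption: UC grad}. So I will run the argument with the cost $\costfunc = \phi\circ d$ and take a union bound over the bad events, splitting the failure probability $\delta$ into $\delta/4+\delta/4+\delta/2$.

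\textbf{Step 1 (good event and regularity).} Let $E_1$ be the event of Assumption~\ref{assumption: UC} at level $\delta/4$, so $|\gengap(h)|\le u \coloneqq u(\samplesize,\delta/4)$ for all $h$. Let $E_2$ be the event of Assumption~\ref{assumption: UC grad} at level $\delta/4$, so $|\gengap(f)-\gengap(g)|\le u_{\mathrm{grad}}\,\phi(d(f,g))$ with $u_{\mathrm{grad}} \coloneqq u_{\text{grad}}(\samplesize,\delta/4)$. Then $\Prob[E_1\cap E_2]\ge 1-\delta/2$. On $E_1\cap E_2$ we have
\[
\gengap(f)^2-\gengap(g)^2=(\gengap(f)+\gengap(g))(\gengap(f)-\gengap(g))\le 2u\,u_{\mathrm{grad}}\,\phi(d(f,g)).
\]
This is exactly inequality \eqref{eq: assumption regularity constant} with $\constgengap = 2u\,u_{\mathrm{grad}}$ and $\costfunc=\phi\circ d$. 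The cost is nonnegative and measurable. Since I cannot assume boundedness of $\phi\circ d$, I will either note that integrability is needed, or argue that if $\phiOT{\prior}{\posterior}=\infty$ the bound is trivial. Kantorovich duality (Theorem~\ref{theorem: kantorovich duality}) applied with $\psi_1=\constgengap^{-1}\gengap^2$ and $\psi_2=-\constgengap^{-1}\gengap^2$, which are bounded since the loss takes values in $[0,1]$, then gives, on $E_1\cap E_2$ and for every $\posterior$,
\[
\posterior[\gengap^2]\le \prior[\gengap^2]+\constgengap\,\phiOT{\posterior}{\prior}.
\]
The two arguments of $\phiOT{\cdot}{\cdot}$ can be swapped because $\phi\circ d$ is symmetric.

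\textbf{Step 2 (prior term).} The term $\prior[\gengap^2]$ does not involve $\posterior$, so I will control it exactly as in Appendix~\ref{appendix: proof of theorem optimal transport PAC-Bayes bound}. By Hoeffding and Lemma~5 of \citep{mcallester2003simplified}, $\Expval_\RVsample\prior[e^{2(\samplesize-1)\gengap^2}]\le\samplesize$. Markov's inequality then gives, on an event $E_3$ of probability at least $1-\delta/2$,
\[
2(\samplesize-1)\prior[\gengap^2]\le\ln\prior[e^{2(\samplesize-1)\gengap^2}]\le\ln\frac{2\samplesize}{\delta},
\]
where the first inequality is Jensen.

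\textbf{Step 3 (conclusion).} On $E_1\cap E_2\cap E_3$, which has probability at least $1-\delta$, combining the two displays and applying Jensen once more gives
\[
(\posterior[\gengap])^2\le\posterior[\gengap^2]\le 2u\,u_{\mathrm{grad}}\,\phiOT{\prior}{\posterior}+\frac{\ln(2\samplesize/\delta)}{2(\samplesize-1)}
\]
simultaneously for all $\posterior$. Taking square roots and writing $\posterior[\gengap]=\posterior[\risk]-\posterior[\emprisk(\cdot,\RVsample)]$ yields the claim; the exchange of $\posterior$-integration and the definition of $\gengap$ is justified by Fubini and boundedness of the loss.

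The main obstacle is Step 1. Theorem~\ref{theorem: optimal transport PAC-Bayes bound} cannot be invoked as a black box, because it requires the regularity condition almost surely with a deterministic $\constgengap$, whereas here it only holds on an event. The fix is to rerun its proof and separate the data-dependent duality step, which holds pointwise on the event, from the Markov step, which concerns only the prior term and needs no regularity. A minor technical point is the $L^1$ and lower-boundedness hypotheses for duality; here $\costfunc\ge0$, and infinite transport cost makes the bound vacuous.
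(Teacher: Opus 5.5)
Your proposal is correct and follows the paper's proof: the same factorization of $\gengap(f)^2-\gengap(g)^2$, the same choice $\constgengap = 2u(\samplesize,\delta/4)\,u_{\text{grad}}(\samplesize,\delta/4)$ with cost $\phi\circ d$, and the same $\delta/4+\delta/4+\delta/2$ union bound. The paper invokes Theorem~\ref{theorem: optimal transport PAC-Bayes bound} as a black box and appeals to ``the union bound''; you instead note that the theorem's almost-sure regularity hypothesis only holds on an event, so you rerun its proof, applying duality pointwise on $E_1\cap E_2$ and Markov only to $\prior[e^{2(\samplesize-1)\gengap^2}]$. This is what makes the paper's union-bound step rigorous, and since only weak duality is used, no integrability of $\phi\circ d$ is needed, so your hedging on that point is unnecessary.
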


\begin{proof}
    The proof of this theorem is based on Theorem~\ref{theorem: optimal transport PAC-Bayes bound}, where the Lipschitz constant $\constgengap$ will be bounded using Assumptions~\ref{assumption: UC} and \ref{assumption: UC grad}. For this, let $f,g \in \hypoclass$, then
    \begin{align*}
        |\gengap(f)^2 - \gengap(g)^2| = |\gengap(f) - \gengap(g)| \cdot |\gengap(f) + \gengap(g)|.
    \end{align*}
    Assumptions~\ref{assumption: UC} provides, that
    \begin{align*}
        |\gengap(f) + \gengap(g)| \leq 2 \sup_{h \in \hypoclass} |\gengap(h)| \leq 2 \cdot u\left(\samplesize, \frac{\delta}{4}\right)
    \end{align*}
    holds with probability at least $1-\frac{\delta}{4}$. Furthermore, Assumption~\ref{assumption: UC grad} yields
    \begin{align*}
        |\gengap(f) - \gengap(g)| \leq u_{\text{grad}}\left(\samplesize, \frac{\delta}{4}\right) \cdot \phi(d(f,g))
    \end{align*}
    with probability at least $1-\frac{\delta}{4}$. Combining both, by the union bound argument,
    \begin{align*}
        |\gengap(f)^2 - \gengap(g)^2| \leq 2 \cdot u\left(\samplesize, \frac{\delta}{4}\right) u_{\text{grad}}\left(\samplesize, \frac{\delta}{4}\right) \cdot \phi(d(f,g))
    \end{align*}
    holds with probability at least $1-\frac{\delta}{2}$. Taking $\constgengap = 2 \cdot u\left(\samplesize, \frac{\delta}{4}\right) u_{\text{grad}}\left(\samplesize, \frac{\delta}{4}\right)$ in Theorem~\ref{theorem: optimal transport PAC-Bayes bound} and using again the union bound implies the statement.
\end{proof}

\begin{definition}[Bounded Differences Property]
    A function $\map{f}{\inspace_1 \times \inspace_2 \times \dots \times \inspace_\samplesize}{\R}$ satisfies the \textit{bounded differences property} if substituting the value of the $i$-th coordinate $x_i$ changes the value of $f$ by at most $c_i$. More precisely, if there are constants $c_1, c_2, \dots, c_\samplesize$ such that for all $i \in [\samplesize]$ and all $x_1 \in \inspace_1, x_2 \in \inspace_2, \dots, x_\samplesize \in \inspace_\samplesize$,
    \begin{align*}
        \sup_{x_i' \in \inspace_i} \left| f(x_1, \dots, x_{i-1}, x_i, x_{i+1}, \dots, x_\samplesize) - f(x_1, \dots, x_{i-1}, x_i', x_{i+1}, \dots, x_\samplesize) \right| \leq c_i.
    \end{align*}
\end{definition}

\begin{lemma}[McDiarmid's Inequality, \citep{mcdiarmid1989bounded}]
\label{lemma: McDiarmid's inequality}
    Let $\map{f}{\inspace_1 \times \inspace_2 \times \dots \times \inspace_\samplesize}{\R}$ satisfy the bounded differences property with bounds $c_1, c_2, \dots, c_\samplesize$. Consider independent random variables $\RVX_1, \RVX_2, \dots, \RVX_\samplesize$ where $\RVX_i \in \inspace_i$ for all $i$. Then, for any $\varepsilon > 0$,
    \begin{align*}
        \Prob \left[ f(\RVX_1, \RVX_2, \dots, \RVX_\samplesize) - \Expval\left[ f(\RVX_1, \RVX_2, \dots, \RVX_\samplesize) \right] \geq \varepsilon \right] &\leq \exp \left( -\frac{2\varepsilon^2}{\sum_{i=1}^\samplesize c_i^2} \right),\\
        \Prob \left[ f(\RVX_1, \RVX_2, \dots, \RVX_\samplesize) - \Expval\left[ f(\RVX_1, \RVX_2, \dots, \RVX_\samplesize) \right] \leq -\varepsilon \right] &\leq \exp \left( -\frac{2\varepsilon^2}{\sum_{i=1}^\samplesize c_i^2} \right),
    \end{align*}
    and as an immediate consequence,
    \begin{align*}
    \Prob \left[ \left| f(\RVX_1, \RVX_2, \dots, \RVX_\samplesize) - \Expval\left[ f(\RVX_1, \RVX_2, \dots, \RVX_\samplesize) \right] \right| \geq \varepsilon \right] \leq 2 \exp \left( -\frac{2\varepsilon^2}{\sum_{i=1}^\samplesize c_i^2} \right).
    \end{align*}
\end{lemma}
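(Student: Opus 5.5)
The plan is the standard Chernoff argument: a Chernoff bound on each tail, then a union bound. By symmetry (replace $f$ by $-f$, which has the same bounded differences constants) it suffices to prove the upper-tail inequality. The two-sided statement then follows from a union bound over the two one-sided events, which produces the factor $2$.

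For the upper tail, write $V \coloneqq f(\RVX_1,\dots,\RVX_\samplesize) - \Expval[f(\RVX_1,\dots,\RVX_\samplesize)]$ and decompose it as a martingale difference sum. Let $\sigF_i$ be the $\sigma$-algebra generated by $\RVX_1,\dots,\RVX_i$. Set $V_i \coloneqq \Expval[f\mid\sigF_i]-\Expval[f\mid\sigF_{i-1}]$, so that $V=\sum_{i=1}^\samplesize V_i$.

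The key step is to show that, conditionally on $\sigF_{i-1}$, the increment $V_i$ lies in an interval $[L_i, U_i]$ with $U_i - L_i \leq c_i$, where $L_i$ and $U_i$ are $\sigF_{i-1}$-measurable. This is where independence is used. Because the coordinates are independent, $\Expval[f\mid\sigF_i]$ can be written as $g_i(\RVX_1,\dots,\RVX_i)$, with $g_i$ obtained by integrating out $x_{i+1},\dots,x_\samplesize$ against the product of the marginal laws. Define
\begin{align*}
U_i &\coloneqq \sup_{x} g_i(\RVX_1,\dots,\RVX_{i-1},x) - \Expval[f\mid\sigF_{i-1}],\\
L_i &\coloneqq \inf_{x} g_i(\RVX_1,\dots,\RVX_{i-1},x) - \Expval[f\mid\sigF_{i-1}].
\end{align*}
The bounded differences property, integrated over the remaining coordinates, gives $U_i-L_i\le c_i$. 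Measurability of these suprema is a minor technical point; one can sidestep it by bounding $g_i(\ldots,x)-g_i(\ldots,x')\le c_i$ pointwise, which is all Hoeffding's lemma needs.

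Next apply Hoeffding's lemma conditionally. A centered variable supported in an interval of length $c_i$ satisfies $\Expval[e^{\lambda V_i}\mid\sigF_{i-1}]\le e^{\lambda^2c_i^2/8}$, since $\Expval[V_i\mid\sigF_{i-1}]=0$. Peel off the conditional expectations one at a time, starting from $i=\samplesize$ and using the tower property at each step, to obtain $\Expval[e^{\lambda V}]\le \exp(\lambda^2\sum_i c_i^2/8)$. Markov's inequality then gives $\Prob[V\ge\varepsilon]\le \exp(-\lambda\varepsilon+\lambda^2\sum_ic_i^2/8)$. Optimizing at $\lambda=4\varepsilon/\sum_ic_i^2$ yields $\exp(-2\varepsilon^2/\sum_ic_i^2)$.

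I expect the main obstacle to be the conditional range step: $V_i$ is confined to an interval of length $c_i$, not $2c_i$, and the endpoints must be predictable, meaning $\sigF_{i-1}$-measurable. Getting this right is what produces the sharp constant $2$ in the exponent, and it relies essentially on independence of the $\RVX_i$. The remaining steps are routine.
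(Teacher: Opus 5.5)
The paper gives no proof of this lemma; it is quoted from \citep{mcdiarmid1989bounded}. Your argument is the standard martingale proof (Azuma--Hoeffding with predictable ranges), which is essentially McDiarmid's own, and it is correct.

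The key range step holds. By independence, $V_i = g_i(\RVX_1,\dots,\RVX_{i-1},\RVX_i)-\int g_i(\RVX_1,\dots,\RVX_{i-1},x)\,d\Prob_{\RVX_i}(x)$, which for frozen $(x_1,\dots,x_{i-1})$ ranges over an interval of length at most $c_i$. Applying Hoeffding's lemma to the marginal law of $\RVX_i$, which is its conditional law given $\sigF_{i-1}$, therefore legitimately sidesteps measurability of the suprema. The peeling step and $\lambda = 4\varepsilon/\sum_i c_i^2$ then give exactly $\exp(-2\varepsilon^2/\sum_i c_i^2)$.

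The only implicit assumptions are measurability of $f$ and $\sum_i c_i^2>0$. Bounded differences then makes $f$ bounded, so all conditional expectations exist, and the degenerate case is trivial.
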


The following lemma provides a dimension-free Hoeffding-type concentration bound for empirical means of bounded Hilbert-space-valued random elements.

\begin{lemma}
\label{lemma: concentration bound}
    Let $K > 0$. Let $\RVX, \RVX_1,\ldots,\RVX_\samplesize$ be i.i.d. random vectors taking values in a Hilbert space $(\inspace, \langle \cdot , \cdot \rangle)$ satisfying $\|\RVX_i\|\leq K$ almost surely. Then with probability at least $1-\eta$,
    \begin{align*}
        \left\| \Expval[\RVX] - \frac{1}\samplesize\sum_{i=1}^\samplesize\RVX_i \right\| \leq \frac{K}{\sqrt \samplesize} \cdot \left( \sqrt{2\ln(1/\eta)} + 1 \right).
    \end{align*}
\end{lemma}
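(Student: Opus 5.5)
The plan is to split the deviation into its expectation plus a fluctuation. Write $V_\samplesize \coloneqq \frac{1}{\samplesize}\sum_{i=1}^\samplesize \RVX_i$ and consider the real-valued function $F(\RVX_1,\ldots,\RVX_\samplesize) \coloneqq \|\Expval[\RVX] - V_\samplesize\|$. We bound $\Expval[F]$ by a second-moment computation in the Hilbert space, and then control $F - \Expval[F]$ with McDiarmid's inequality (Lemma~\ref{lemma: McDiarmid's inequality}).

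\textbf{Step 1: the mean.} Set $\mu \coloneqq \Expval[\RVX]$; it exists as a Bochner integral because $\|\RVX\|\leq K$. Jensen's inequality gives $\Expval[F] \leq \sqrt{\Expval[F^2]}$. Expanding the square with the inner product,
\begin{align*}
\Expval\Big\|\frac{1}{\samplesize}\sum_{i=1}^\samplesize(\RVX_i-\mu)\Big\|^2 = \frac{1}{\samplesize^2}\sum_{i,j}\Expval\langle \RVX_i-\mu,\RVX_j-\mu\rangle .
\end{align*}
For $i\neq j$ the cross terms vanish by independence and because the vectors are centred; to justify this, pull the inner product inside the Bochner integral. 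The diagonal terms satisfy $\Expval\|\RVX_i-\mu\|^2 = \Expval\|\RVX_i\|^2 - \|\mu\|^2 \leq K^2$. Together these give $\Expval[F]\leq K/\sqrt{\samplesize}$. This step is where we get dimension-freeness: only the parallelogram-type expansion is used, with no coordinates and no covering argument.

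\textbf{Step 2: bounded differences.} Replace $\RVX_i$ by some $\RVX_i'$ with $\|\RVX_i'\|\leq K$. Since $F$ is a norm of an affine function of the sample, the reverse triangle inequality shows $F$ changes by at most $\|\RVX_i-\RVX_i'\|/\samplesize \leq 2K/\samplesize$. So $c_i = 2K/\samplesize$ and $\sum_i c_i^2 = 4K^2/\samplesize$. The one-sided McDiarmid bound gives
\begin{align*}
\Prob\big[F - \Expval[F]\geq \varepsilon\big]\leq \exp\!\left(-\frac{\samplesize\varepsilon^2}{2K^2}\right).
\end{align*}
Setting the right-hand side to $\eta$ gives $\varepsilon = K\sqrt{2\ln(1/\eta)/\samplesize}$.

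\textbf{Step 3: combine.} With probability at least $1-\eta$,
\begin{align*}
F \leq \frac{K}{\sqrt\samplesize} + K\sqrt{\frac{2\ln(1/\eta)}{\samplesize}},
\end{align*}
which is exactly the claimed bound.

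The steps are routine, and the constants match exactly. The only delicate points are technical rather than conceptual. First, McDiarmid is stated for functions of the sample, so we must restrict to the almost-sure event $\|\RVX_i\|\leq K$; this amounts to taking each $\inspace_i$ to be the closed ball of radius $K$. Second, the vanishing of the cross terms in Step 1 requires justifying the interchange of expectation and inner product for Hilbert-valued random elements; we handle it by Bochner integrability together with independence, using $\Expval\langle \RVX_i-\mu, \RVX_j-\mu\rangle = \langle \Expval[\RVX_i-\mu], \Expval[\RVX_j-\mu]\rangle = 0$.
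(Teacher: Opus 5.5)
Your proposal is correct and matches the paper's proof essentially step for step: Jensen plus the second-moment identity give $\Expval[F]\le K/\sqrt{\samplesize}$, and then one-sided McDiarmid with $c_i = 2K/\samplesize$ and $\varepsilon = K\sqrt{2\ln(1/\eta)/\samplesize}$ finishes the argument. Your tail bound $\exp(-\samplesize\varepsilon^2/(2K^2))$ is the correct form of the inequality that the paper displays with a stray ``$1-$'' in front of the exponential.
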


\begin{proof}
     Jensen's inequality and the i.i.d. assumption entail
    \begin{align*}
        \Expval \left[ \left\| \Expval[\RVX] - \frac{1}{\samplesize}\sum_{i=1}^\samplesize \RVX_i \right\|\right]^2 
        \leq \Expval \left[ \left\| \Expval[\RVX] - \frac{1}{\samplesize}\sum_{i=1}^\samplesize \RVX_i \right\|^2 \right]
        = \frac{1}{\samplesize} \Expval \left[ \left\| \Expval[\RVX] - \RVX \right\|^2 \right] = \frac{1}{\samplesize} \left (\Expval \left[ \left\|\RVX \right\|^2 \right] - \left\|\Expval[\RVX]\right\|^2 \right) .
    \end{align*}
    Therefore
    \begin{align}
    \label{eq: bound xi and expacval xi}
        \Expval \left[ \left\| \Expval[\RVX] - \frac{1}{\samplesize}\sum_{i=1}^\samplesize \RVX_i \right\|\right] 
        &\leq \left( \frac{1}{\samplesize} \Expval \left[\left\| \RVX-\Expval[\RVX] \right\|^2 \right] \right)^{1/2}
        \leq \frac{\Expval\left[\|\RVX\|^2\right]^\frac{1}{2}}{\sqrt{\samplesize}}
        \leq \frac{K}{\sqrt \samplesize}.
    \end{align}
    We apply McDiarmid's bounded-differences inequality from Lemma~\ref{lemma: McDiarmid's inequality} to $\map{f}{\inspace^\samplesize}{\R}$, where $f(x_1, \dots, x_\samplesize) \coloneqq \left\| \Expval[\RVX]- \frac{1}\samplesize\sum_{i=1}^\samplesize x_i \right\|$. Since $\|\RVX_i\|\leq K$ almost surely, the bounded differences property holds almost surely for $c_i \coloneqq \frac{2K}{\samplesize}$. McDiarmid's inequality then implies 
    \begin{align*}
        \Prob \left[f (\RVX_1, \dots, \RVX_\samplesize) - \Expval \left[f (\RVX_1, \dots, \RVX_\samplesize)\right] \geq \varepsilon \right] \leq 1 - \exp{\left(-\frac{\varepsilon^2 \samplesize}{2K^2}\right)} .
    \end{align*}
   Taking $\varepsilon \coloneqq K \sqrt \frac{2\ln{(1/\eta)}}{\samplesize}$ and using \eqref{eq: bound xi and expacval xi}, we obtain
    \begin{align*}
        \Prob \left[ \left\| \Expval[\RVX] - \frac{1}\samplesize\sum_{i=1}^\samplesize\RVX_i \right\| \geq \frac{K}{\sqrt \samplesize} \cdot \left( \sqrt{2\ln(1/\eta)} + 1 \right)\right] 
        \leq
        \Prob \left[f (\RVX_1, \dots, \RVX_\samplesize) \geq K \sqrt{\frac{2 \ln(1/\eta)}{\samplesize}} + \Expval \left[f (\RVX_1, \dots, \RVX_\samplesize)\right] \right] \leq \eta.
    \end{align*}
\end{proof}

\begin{lemma}[Bounded linear regression under a concave modulus]
    \label{lemma: linear-regression-concave-modulus}
    Let $\map{\phi}{[0,\infty)}{[0,\infty)}$ be non-decreasing and concave, with $\phi(0)=0$ and $\phi(r)>0$ for every $r>0$. Let $\RVZ = (\RVX, \RVY)$ be a random variable taking values in $\R^d\times\R$. Assume that there exist $R, Y_{\max} > 0$ such that
    \begin{align*}
        \|\RVX\|_2\leq R \qquad\text{and}\qquad |\RVY|\leq Y_{\max}
    \end{align*}
    almost surely. Consider the hypothesis class and squared loss
    \begin{align*}
        \hypoclass &= \left\{ w\in\R^d:\| w\|_2\leq B\right\},\qquad \ell(w,(x,y)) = \frac{1}{2}\left(\langle w,x\rangle-y\right)^2,
    \end{align*}
    where $B>0$. Equip $\hypoclass$ with the Euclidean metric $d(w,v)=\| w-v\|_2$. Let
    \begin{align*}
        \kappa_\samplesize(\eta) = \frac{1}{\sqrt n} + \sqrt{\frac{2\ln(1/\eta)}\samplesize}
    \end{align*}
    and $\delta\in(0,1)$. Then with probability at least $1-\delta$ over an i.i.d.\ sample $\RVsample=(\RVZ_1,\ldots,\RVZ_\samplesize)$, the following two inequalities hold simultaneously
    \begin{align*}
        \sup_{w\in\hypoclass}|\gengap(w)| \leq u(\samplesize,\delta)  \quad \text{and} \quad \sup_{w,v\in\hypoclass:\ w \neq v} \frac{|\gengap(w)-\gengap(v)|} {\phi(\| w-v\|_2)} \leq u_{\text{grad}}(\samplesize,\delta),
    \end{align*}
    where
    \begin{align*}
        u(\samplesize,\delta) &= \left( \frac{B^2R^2 + Y_{\max}^2}{2} + BR\RVY_{\max}\right) \kappa_\samplesize(\delta/3), \quad \text{and}\\
        u_{\text{grad}}(\samplesize,\delta) &=  \frac{2B}{\phi(2B)} \left( BR^2+RY_{\max} \right) \kappa_\samplesize(\delta/3).
    \end{align*}
\end{lemma}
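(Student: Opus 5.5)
We reduce both uniform estimates to concentration of three empirical means of bounded Hilbert-space-valued random elements. Each is controlled by Lemma~\ref{lemma: concentration bound} with confidence $\eta=\delta/3$. A union bound over the three events then gives probability at least $1-\delta$.

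\textbf{Step 1: decomposition of the loss.} Expanding the square gives
\begin{align*}
\loss(w,(x,y))=\tfrac12\langle w\otimes w, xx^\top\rangle - y\langle w,x\rangle+\tfrac12 y^2 .
\end{align*}
Hence $\gengap(w)=\tfrac12\langle ww^\top, A\rangle - \langle w,b\rangle + \tfrac12 e$, with
\begin{align*}
A=\Expval[\RVX\RVX^\top]-\tfrac1\samplesize\sum_i \RVX_i\RVX_i^\top,\quad b=\Expval[\RVY\RVX]-\tfrac1\samplesize\sum_i\RVY_i\RVX_i,\quad e=\Expval[\RVY^2]-\tfrac1\samplesize\sum_i\RVY_i^2.
\end{align*}
We apply Lemma~\ref{lemma: concentration bound} in three Hilbert spaces:
\begin{itemize}
\item to $\RVX\RVX^\top$ in Frobenius norm, with $K=R^2$;
\item to $\RVY\RVX$ in $\R^d$, with $K=RY_{\max}$;
\item to $\RVY^2$ in $\R$, with $K=Y_{\max}^2$.
\end{itemize}
With probability $\ge 1-\delta$ we then have $\|A\|_F\le R^2\kappa_\samplesize(\delta/3)$, $\|b\|\le RY_{\max}\kappa_\samplesize(\delta/3)$ and $|e|\le Y_{\max}^2\kappa_\samplesize(\delta/3)$. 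The form of $\kappa_\samplesize$ matches the lemma exactly.

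\textbf{Step 2: the bound $u$.} For $\|w\|\le B$, Cauchy–Schwarz gives $|\langle ww^\top,A\rangle|\le \|w\|^2\|A\|_F$. Therefore
\begin{align*}
|\gengap(w)|\le \left(\tfrac12 B^2R^2+BRY_{\max}+\tfrac12 Y_{\max}^2\right)\kappa_\samplesize(\delta/3),
\end{align*}
which is $u(\samplesize,\delta)$.

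\textbf{Step 3: the Lipschitz bound in $d$.} We use the identity
\begin{align*}
ww^\top-vv^\top=\tfrac12\big((w-v)(w+v)^\top+(w+v)(w-v)^\top\big).
\end{align*}
Since $\|w+v\|\le 2B$, this gives
\begin{align*}
|\gengap(w)-\gengap(v)|\le \|w-v\|\,(B\|A\|_F+\|b\|)\le (BR^2+RY_{\max})\kappa_\samplesize(\delta/3)\,\|w-v\|.
\end{align*}

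\textbf{Step 4: passing to the concave modulus.} This is the step needing care: we must convert Euclidean Lipschitz continuity into a bound in terms of $\phi(\|w-v\|)$ on the bounded domain. Every difference satisfies $r=\|w-v\|\in[0,2B]$. Because $\phi$ is concave with $\phi(0)=0$, the ratio $\phi(r)/r$ is non-increasing. Hence $\phi(r)\ge r\,\phi(2B)/(2B)$, that is, $r\le \tfrac{2B}{\phi(2B)}\phi(r)$. Substituting into Step 3 yields the constant $u_{\text{grad}}(\samplesize,\delta)=\tfrac{2B}{\phi(2B)}(BR^2+RY_{\max})\kappa_\samplesize(\delta/3)$.

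Both inequalities hold on the same intersection of the three concentration events. So they hold simultaneously with probability at least $1-\delta$.
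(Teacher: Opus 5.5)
Your proposal is correct and follows the paper's proof essentially step by step. It uses the same decomposition of $\gengap(w)$ into the quadratic, linear and constant parts $A$, $b$, $e$, and three applications of Lemma~\ref{lemma: concentration bound} at level $\delta/3$ joined by a union bound. It then uses the same estimate $|\gengap(w)-\gengap(v)|\le \|w-v\|_2\,(B\|A\|+\|b\|_2)$ via $\|w+v\|_2\le 2B$, and the same concavity bound $\phi(r)\ge \frac{r}{2B}\phi(2B)$ on $[0,2B]$. Your sign $-\langle w,b\rangle$ on the linear term is the correct one (the paper writes $+w^\top b_S$), which is harmless since only norms enter the bounds.
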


\begin{proof}
    First, expanding the squared loss gives
    \begin{align*}
        \gengap(w) &= \risk(w) - \emprisk(w, \RVsample)
        = \Expval\left[ \frac{1}{2} \left(\langle w, \RVX\rangle - \RVY \right)^2 \right] - \frac{1}{\samplesize} \sum_{i=1}^\samplesize \frac{1}{2} \left(\langle w, \RVX_i\rangle - \RVY_i \right)^2\\
        &= \frac{1}{2} w \Expval[\RVX \RVX^\top] w^\top - w^\top\Expval[\RVX \RVY] + \frac{1}{2} \Expval[\RVY^2]
        - \sum_{i=1}^\samplesize\left( \frac{1}{2} w \Expval[\RVX_i \RVX_i^\top] w^\top - w^\top\Expval[\RVX_i \RVY_i] + \frac{1}{2}\Expval[\RVY_i^2] \right)\\
        &=\frac{1}{2}w^\top A_Sw + w^\top b_S + \frac{1}{2}c_S,
    \end{align*}
    where
    \begin{align*}
        A_S = \Expval[\RVX\RVX^\top] - \frac{1}{\samplesize}\sum_{i=1}^\samplesize \RVX_i \RVX_i^\top, \quad
        b_S = \Expval[\RVY\RVX] - \frac{1}{\samplesize}\sum_{i=1}^\samplesize \RVY_i\RVX_i, \quad \text{and} \quad 
        c_S = \Expval[\RVY^2] - \frac{1}{\samplesize}\sum_{i=1}^\samplesize \RVY_i^2.
    \end{align*}
    Next, we apply Lemma~\ref{lemma: concentration bound} to $\RVX\RVX^\top$, viewed as an element of the Hilbert space of matrices equipped with the Frobenius norm, to $\RVY\RVX\in\R^d$ and to $\RVY^2\in\R$, where
    \begin{align*}
            \| \RVX\RVX^\top\|_{F} = \|\RVX\|_2^2 \leq R^2, \qquad
            \| \RVY\RVX\|_2 \leq R\RVY_{\max}, \quad \text{and} \quad
            \RVY^2 \leq Y_{\max}^2,
    \end{align*}
    almost surely. Hence, a union bound shows that, with probability at least $1-\delta$,
    \begin{align*}
            \| A_S\|_{\mathrm{op}} \leq \| A_S\|_{F} \leq R^2\kappa_\samplesize(\delta/3),  \qquad
            \| b_S\|_2 \leq RY_{\max}\kappa_\samplesize(\delta/3),  \qquad
            |c_S| \leq Y_{\max}^2\kappa_\samplesize(\delta/3)
    \end{align*}
    hold simultaneously. Consequently, with probability at least $1-\delta$, for every $w\in\hypoclass$,
    \begin{align*}
        |\gengap(w)| &\leq \frac{1}{2}\| w\|_2^2 \| A_S\|_{\mathrm{op}} + \| w\|_2\| b_S\|_2 + \frac{1}{2}|c_S|
        \leq \left(\frac{B^2R^2}{2} + BRY_{\max} + \frac{Y_{\max}^2}{2}\right) \kappa_\samplesize(\delta/3),
    \end{align*}
    which proves the first inequality.
    
    For $w,v\in\hypoclass$, the symmetry of $A_S$ yields
    \begin{align*}
        \gengap(w)-\gengap(v) &= \frac{1}{2}w^\top A_Sw + w^\top b_S + \frac{1}{2}c_S - \left( \frac{1}{2}v^\top A_S v + v^\top b_S + \frac{1}{2}c_S \right)\\
        &= \frac{1}{2} (w-v)^\top A_S(w+v) + (w-v)^\top b_S.
    \end{align*}
    Hence, by the triangle inequality and the Cauchy Schwarz inequality
    \begin{align*}
        |\gengap(w)-\gengap(v)| &\leq \| w-v\|_2 \left(\frac{1}{2} \| A_S\|_{\mathrm{op}} \| w+v\|_2 + \| b_S\|_2\right)\\    &\leq \| w-v\|_2 \left( B\| A_S\|_{\mathrm{op}} + \| b_S\|_2\right)
    \end{align*}
    and therefore with probability at least $1 - \delta$
    \begin{align*}
        \frac{|\gengap(w)-\gengap(v)|}{\| w-v\|_2} \leq \left( BR^2+RY_{\max} \right) \kappa_\samplesize(\delta/3).
    \end{align*}
    It remains to compare the Euclidean distance with its modulus under $\phi$. Since $0 \leq \| w-v\|_2 \leq 2B$, let $t \in [0,2B]$. Every $t\in[0,2B]$ can be written as a convex combination of $0$ and $2B$. Concavity and $\phi(0)=0$ therefore imply
    \begin{align*}
        \phi(t) &= \phi\left( \frac{t}{2B}2B + \left(1-\frac{t}{2B}\right)0\right)
        \geq \frac{t}{2B}\phi(2B).
    \end{align*}
    Hence, for every $t\in(0,2B]$,
    \begin{align*}
        \frac{t}{\phi(t)} \leq \frac{2B}{\phi(2B)}.
    \end{align*}
    Combining the preceding inequalities gives
    \begin{align*}
        \sup_{ w,v\in\hypoclass: \ w \neq v} \frac{|\gengap(w)-\gengap(v)|} {\phi(\| w-v\|_2)}
        &\leq \sup_{ w,v\in\hypoclass: \ w \neq v} \frac{2B}{\phi(2B)} \frac{|\gengap(w)-\gengap(v)|} {\| w-v\|_2}\\
        &\leq \frac{2B}{\phi(2B)} \left(BR^2+RY_{\max}\right) \kappa_\samplesize(\delta/3)
    \end{align*}
    with probability at least $1-\delta$ which proves the second inequality.
\end{proof}

\begin{proof}[Proof of Theorem~\ref{theorem: linear regression concave OT}]
    Combining Lemma~\ref{lemma: linear-regression-concave-modulus} and Lemma~\ref{lemma: OT PAC-Bayes bound uniform bounds} proves Theorem~\ref{theorem: linear regression concave OT}
\end{proof}

\section{Proof of Theorem~\ref{theorem: rademacher complexity TV PAC-Bayes}}
\label{appendix: proof of theorem rademacher complexity TV PAC-Bayes}

For the proof of Theorem~\ref{theorem: rademacher complexity TV PAC-Bayes}, we first rederive the PAC-Bayesian bound for the TV distance from \citep{amit2022integral} through the lens of optimal transport. We then combine this bound with a standard result on Rademacher complexity from classical learning theory.

\begin{theorem}\citep[Theorem 6]{amit2022integral}
    \label{theorem: TV distance PAC-Bayes bound}
    Let $\map{\loss}{\hypoclass \times \samplespace}{[0, 1]}$ be a measurable loss function. Assume Assumption~\ref{assumption: UC} holds with bound function $\map{u}{\N \times (0,1)}{\R}$. Further, let $\prior$ be a prior distribution over $\hypoclass$, $\samplesize \geq 2$ and let $\delta \in (0,1)$. Then for any distribution $\posterior$ over $\hypoclass$
    \begin{align*}
        \posterior[\risk] \leq \posterior[\emprisk(\cdot, \RVsample)] + \sqrt{u^2\left(\samplesize, \frac{\delta}{2}\right) \ \TVdist{\posterior}{\prior} + \frac{\ln\frac{2\samplesize}{\delta}}{2(\samplesize-1)}}
    \end{align*}
    holds with probability at least $1-\delta$, where $\RVsample=\{\RVZ_i\}_{i=1}^\samplesize$ are $\samplesize$ i.i.d. copies of $\RVZ$.
\end{theorem}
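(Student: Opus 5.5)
The plan is to obtain this bound as an instance of Theorem~\ref{theorem: optimal transport PAC-Bayes bound}, taking as cost the discrete metric $\costfunc(h,h') \coloneqq \indicatorfunc{h\neq h'}$. For this cost, the optimal transport distance coincides with the total variation distance: $\opttransdist{\posterior}{\prior} = \inf_{\pi\in\Pi(\posterior,\prior)} \pi[h\neq h'] = \TVdist{\posterior}{\prior}$. This is the standard maximal-coupling identity, which holds on a standard Borel/Polish hypothesis space. The cost is bounded, so it lies in $L^1(\prior\otimes\posterior)$ for every $\posterior$, and no integrability restriction on the posterior arises.

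Next we verify Assumption~\ref{assumption: regularity constant} on a high-probability event. Let $E$ be the event $\{\sup_{h}|\gengap(h)|\le u(\samplesize,\delta/2)\}$; by Assumption~\ref{assumption: UC} it has probability at least $1-\delta/2$. On $E$, for $h\neq h'$ we have
\begin{align*}
\gengap(h)^2-\gengap(h')^2 \le \sup_h \gengap(h)^2 \le u^2(\samplesize,\delta/2) = u^2(\samplesize,\delta/2)\,\costfunc(h,h'),
\end{align*}
because both squares lie in $[0,u^2]$. For $h=h'$ both sides vanish. Hence on $E$ the regularity inequality $\gengap^2\oplus-\gengap^2\le \constgengap\costfunc$ holds with $\constgengap = u^2(\samplesize,\delta/2)$. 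This argument uses only the squared gap itself, not a Lipschitz property of the gap.

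Finally we run the proof of Theorem~\ref{theorem: optimal transport PAC-Bayes bound} at confidence $\delta/2$ and combine by a union bound. This step is the main obstacle. Assumption~\ref{assumption: regularity constant} asks for the regularity inequality for almost all samples, whereas here it holds only on $E$, so Theorem~\ref{theorem: optimal transport PAC-Bayes bound} cannot be quoted verbatim. I would instead repeat its proof with the random function
\begin{align*}
f(\sample) = \sup_\posterior\bigl(2(\samplesize-1)\posterior[\gengap^2] - \constgengap\, 2(\samplesize-1)\TVdist{\posterior}{\prior}\bigr).
\end{align*}
On $E$, Kantorovich duality (Theorem~\ref{theorem: kantorovich duality}) together with Jensen's inequality gives $f\le \ln \prior[e^{2(\samplesize-1)\gengap^2}]$. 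This yields $\Expval[e^{f}\mathbb{1}_E]\le \Expval[\prior[e^{2(\samplesize-1)\gengap^2}]]\le \samplesize$, where the last step uses Fubini and Lemma~5 of McAllester, exactly as in the original proof.

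Markov's inequality then gives $\Prob[\{f\ge \ln(2\samplesize/\delta)\}\cap E]\le\delta/2$, and together with $\Prob[E^c]\le\delta/2$ the bad event has probability at most $\delta$. Outside it, for every $\posterior$ we have
\begin{align*}
\posterior[\gengap^2] \le u^2(\samplesize,\delta/2)\,\TVdist{\posterior}{\prior} + \frac{\ln(2\samplesize/\delta)}{2(\samplesize-1)}.
\end{align*}
Applying Jensen's inequality, $(\posterior[\gengap])^2\le\posterior[\gengap^2]$, and taking square roots yields the claimed bound simultaneously for all $\posterior$.

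A subtlety to check is that $u(\samplesize,\delta/2)$ is deterministic. The constant $\constgengap$ inside $f$ is therefore fixed and does not depend on the sample, so the Markov step remains valid.
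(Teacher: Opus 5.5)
Your proposal is correct and follows the same route as the paper. The cost $\indicatorfunc{h\neq h'}$ turns the OT distance into $\TVdist{\posterior}{\prior}$, Assumption~\ref{assumption: UC} gives the regularity inequality with $\constgengap = u^2(\samplesize,\delta/2)$ on an event of probability at least $1-\delta/2$, and a union bound at level $\delta/2$ finishes the argument. Your one departure is also where the paper is loose: it cites Theorem~\ref{theorem: optimal transport PAC-Bayes bound} plus a union bound even though that theorem needs the regularity condition for almost every sample, and your rerun of the Markov step on $e^{f}\mathbb{1}_E$ with deterministic $u$ is exactly the repair that makes this rigorous.
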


\begin{proof}
    Let $n \in \N$ and $\delta \in (0,1)$ be fixed. Let $\prior$ and $\posterior$ be distributions over $\hypoclass$ and let $\map{h}{\hypoclass}{\R}$ be given by $h(f) \coloneqq 2 (\samplesize -1) \gengap(f)^2$. 
    The Kantorovich duality in Theorem~\ref{theorem: kantorovich duality} provides the following dual formulation for the TV distance
    \begin{align*}
        \TVdist{\posterior}{\prior} = \sup_{\substack{h_1, h_2 \in \bounmeasfunc{\hypoclass}: \\ h_1 \oplus h_2 \leq \mathbb{1}_{\cdot \neq \cdot}}} \posterior [h_1] + \prior[h_2] \,.
    \end{align*}
    Given $u(\samplesize, \delta) > 0$, it can be reformulated to
    \begin{align}
    \label{eq: reformulation of kantorovich duality TV}
        2(\samplesize-1)u^2(\samplesize, \delta) \cdot \wassersteinpdist{\posterior}{\prior}^p = \sup_{\substack{h_1, h_2 \in \bounmeasfunc{\hypoclass}: \\ h_1 \oplus h_2 \leq 2(\samplesize-1)u^2(\samplesize, \delta) \mathbb{1}_{\cdot \neq \cdot}}} \posterior [h_1] + \prior[h_2] \,.
    \end{align}
    To apply Theorem~\ref{theorem: optimal transport PAC-Bayes bound}, we need to verify that $h(f)-h(g) \leq 2(\samplesize-1) u^2(\samplesize, \delta) \mathbb{1}_{f \neq g}$ for all $f, g \in \hypoclass$. By Assumption~\ref{assumption: UC}
    \begin{align*}
        \Prob \left[ |\gengap(f)| \leq u(\samplesize, \delta), \ \forall f \in \hypoclass \right] \geq 1 - \delta.
    \end{align*}
    Since the loss function $\ell$ maps to the interval $[0,1]$, also the generalization gap 
    \begin{align*}
        |\gengap| = \left| \Expval \left[ \ell(\cdot, \RVZ) \right] - \frac{1}\samplesize \sum_{i=1}\samplesize \ell(\cdot, \RVZ_i) \right| \in [0,1].
    \end{align*}
    Therefore, also
    \begin{align}
    \label{eq: PAC squared gengap}
        \Prob \left[ \gengap^2(f) \leq u^2(\samplesize, \delta), \ \forall f \in \hypoclass \right] \geq 1 - \delta
    \end{align}
    holds. This implies $h(f)-h(g) \leq 2(\samplesize-1) u^2(\samplesize, \delta) \mathbb{1}_{f \neq g}$ for all $f, g \in \hypoclass$ with propbability at least $1-\delta$.
    
    The union bound of the statements in Theorem~\ref{theorem: optimal transport PAC-Bayes bound} and \eqref{eq: PAC squared gengap} both with confidence parameter $\frac{\delta}{2}$ implies the statement.
\end{proof}

The following standard uniform-convergence bound follows from symmetrization and McDiarmid's inequality, compare \citep[Theorem~8 and Appendix~B]{bartlett2002rademacher}.

\begin{proposition}[Rademacher uniform-convergence bound {\citep{bartlett2002rademacher}}]
    \label{proposition: Rademacher uniform convergence}
    Let
    \begin{align*}
        \map{\loss}{\hypoclass\times\samplespace}{[0,1]}
    \end{align*}
    be measurable and let $\mathcal{F}_{\loss}$ be the induced loss class from Definition~\ref{definition: expected Rademacher complexity}. Assume that the relevant suprema are measurable. Then, for every $\eta\in(0,1)$,
    \begin{align*}
        \Prob
        \left(
            \sup_{h\in\hypoclass}
            \left|
                \risk(h)
                -
                \emprisk(h,\RVsample)
            \right|
            \leq
            2\rademachercompl_{\samplesize}(\mathcal{F}_{\loss})
            +
            \sqrt{
                \frac{\ln(2/\eta)}
                     {2\samplesize}
            }
        \right)
        \geq
        1-\eta.
    \end{align*}
\end{proposition}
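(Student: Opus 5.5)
The plan is the classical route through symmetrization and McDiarmid's inequality (Lemma~\ref{lemma: McDiarmid's inequality}), applied to the two one-sided suprema separately. Set
\begin{align*}
\Phi^+(\sample) \coloneqq \sup_{h\in\hypoclass}\bigl(\risk(h)-\emprisk(h,\sample)\bigr), \qquad \Phi^-(\sample) \coloneqq \sup_{h\in\hypoclass}\bigl(\emprisk(h,\sample)-\risk(h)\bigr),
\end{align*}
so that $\sup_h|\gengap(h)| = \max\{\Phi^+,\Phi^-\}$. Since $\loss\in[0,1]$, replacing a single coordinate $z_i$ changes each empirical risk $\emprisk(h,\cdot)$ by at most $1/\samplesize$, uniformly in $h$. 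Hence each supremum has the bounded differences property with $c_i = 1/\samplesize$.

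First, bound the expectations. Introduce a ghost sample $\RVsample'$, an independent copy of $\RVsample$, and write $\risk(h) = \Expval_{\RVsample'}[\emprisk(h,\RVsample')]$. Pulling the supremum outside the inner expectation (Jensen) gives
\begin{align*}
\Expval[\Phi^+] \le \Expval \sup_h \frac1\samplesize\sum_i\bigl(\loss(h,\RVZ_i')-\loss(h,\RVZ_i)\bigr).
\end{align*}
Each summand is symmetric, so its law is unchanged when multiplied by an independent Rademacher sign $\sigma_i$. Splitting the supremum of the sum into two suprema then yields $\Expval[\Phi^+]\le 2\rademachercompl_\samplesize(\mathcal{F}_\loss)$. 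The same argument, using that $-\sigma_i$ has the same law as $\sigma_i$, gives $\Expval[\Phi^-]\le 2\rademachercompl_\samplesize(\mathcal{F}_\loss)$. The measurability assumption in the statement is exactly what makes these manipulations of suprema legitimate.

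Next, apply the upper-tail bound of McDiarmid to each of $\Phi^\pm$, using $\sum_i c_i^2 = 1/\samplesize$ and threshold $\varepsilon = \sqrt{\ln(2/\eta)/(2\samplesize)}$. Each tail is then at most $\exp(-2\samplesize\varepsilon^2) = \eta/2$. A union bound over the two events gives, with probability at least $1-\eta$,
\begin{align*}
\max\{\Phi^+,\Phi^-\}\le 2\rademachercompl_\samplesize(\mathcal{F}_\loss)+\sqrt{\ln(2/\eta)/(2\samplesize)},
\end{align*}
which is the claim.

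The main obstacle is the symmetrization step. The key point is that $\sigma_i(\loss(h,\RVZ_i')-\loss(h,\RVZ_i))$ has the same joint law, over all $h$ simultaneously, as the unsigned differences. This holds because swapping $\RVZ_i$ and $\RVZ_i'$ preserves the product measure. Once this exchangeability is established, the rest of the argument is routine bookkeeping of the constants.
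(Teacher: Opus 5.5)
Your proposal is correct and follows the route the paper indicates: the paper gives no self-contained proof and only states that the bound follows from symmetrization and McDiarmid's inequality, citing Bartlett and Mendelson. Splitting into the two one-sided suprema $\Phi^{\pm}$ and taking a union bound is what this statement requires, since the paper's $\rademachercompl_{\samplesize}(\mathcal{F}_{\loss})$ has no absolute value, and it is what produces the $\ln(2/\eta)$ in the confidence term.
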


\begin{proof}[Proof of Theorem~\ref{theorem: rademacher complexity TV PAC-Bayes}]
    The statement follows directly from combining Proposition~\ref{proposition: Rademacher uniform convergence} and Theorem~\ref{theorem: TV distance PAC-Bayes bound} with $u(\samplesize, \delta) = 2\rademachercompl_{\samplesize}(\mathcal{F}_{\loss}) + \sqrt{ \frac{\ln(2/\eta)} {2\samplesize}}$.
\end{proof}

\section{Proof of Theorem~\ref{theorem: vc dimension PAC-Bayes}}
\label{appendix: proof of theorem VC dimension TV PAC-Bayes}

In this section, we proof Theorem~\ref{theorem: vc dimension PAC-Bayes}. The following proposition by \cite{boucheron2005theory} gives a well known uniform bound on the generalization gap at a rate of $O(\sqrt{\VCdim(\hypoclass)/\samplesize})$.

\begin{proposition}
\label{proposition: uniform bound by boucheron}
    Consider a binary classification problem, with the 0-1 loss, and hypothesis class $\hypoclass$, with finite VC dimension. Then there exists some constant $c > 0$, s.t. for any $\delta \in (0,1)$ we have
    \begin{align*}
        \Prob \left[ \gengap(h) \leq c \sqrt{\frac{\VCdim(\hypoclass) + \ln\frac{1}{\delta}}{\samplesize}}, \; \forall h \in \hypoclass \right] \geq 1 - \delta.
    \end{align*}
\end{proposition}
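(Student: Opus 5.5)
The plan is the classical three-step route: concentration of the supremum, symmetrization, and a chaining bound on the Rademacher complexity of a VC class. Write $\mathcal{F}_\loss=\{z=(x,y)\mapsto \indicatorfunc{h(x)\neq y}: h\in\hypoclass\}$ for the $0$-$1$ loss class and $\Phi(\RVsample)\coloneqq\sup_{h\in\hypoclass}\gengap(h)$. Since $\gengap(h)\le\Phi(\RVsample)$ for every $h$, it suffices to bound $\Phi$ with probability at least $1-\delta$. As elsewhere in the paper, we assume the relevant suprema are measurable.

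\textbf{Step 1 (concentration).} Replacing one sample $z_i$ changes each $\emprisk(h,\cdot)$ by at most $1/\samplesize$, because the loss takes values in $[0,1]$. Hence $\Phi$ has the bounded differences property with $c_i=1/\samplesize$. McDiarmid's inequality (Lemma~\ref{lemma: McDiarmid's inequality}) then gives, with probability at least $1-\delta$,
\begin{align*}
\Phi(\RVsample)\le \Expval[\Phi(\RVsample)]+\sqrt{\frac{\ln(1/\delta)}{2\samplesize}}\,.
\end{align*}

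\textbf{Step 2 (symmetrization).} The standard ghost-sample argument gives $\Expval[\Phi]\le 2\rademachercompl_\samplesize(\mathcal{F}_\loss)$, exactly as in Proposition~\ref{proposition: Rademacher uniform convergence}. Next I pass from $\mathcal{F}_\loss$ to $\hypoclass$. We have $\indicatorfunc{h(x)\neq y}=h(x)(1-2y)+y$ for $y\in\{0,1\}$. The additive term $y$ does not depend on $h$, so it vanishes in expectation against the Rademacher signs. The factor $(1-2y)\in\{\pm1\}$ can be absorbed into $\sigma_i$ without changing its distribution. Therefore $\rademachercompl_\samplesize(\mathcal{F}_\loss)=\rademachercompl_\samplesize(\hypoclass)$, computed on the inputs $x_i$.

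\textbf{Step 3 (chaining; the main obstacle).} A direct route would combine Sauer's lemma with Massart's finite-class lemma. It gives only $\rademachercompl_\samplesize\lesssim\sqrt{\VCdim(\hypoclass)\ln(e\samplesize/\VCdim(\hypoclass))/\samplesize}$, which carries an unwanted $\ln\samplesize$ factor. To remove it, I would apply Dudley's entropy integral conditionally on the sample:
\begin{align*}
\Expval_\sigma\Big[\sup_{h}\tfrac1\samplesize\textstyle\sum_i\sigma_i h(x_i)\Big]\le \frac{C}{\sqrt\samplesize}\int_0^1\sqrt{\ln\epscovnumb(\varepsilon,\hypoclass,L_2(\mu_\samplesize))}\,d\varepsilon\,,
\end{align*}
where $\mu_\samplesize$ is the empirical measure of the sample. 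I would then insert Haussler's packing bound $\epscovnumb(\varepsilon,\hypoclass,L_2(\mu))\le (C'/\varepsilon)^{C''\VCdim(\hypoclass)}$, which holds uniformly in $\mu$. The integral becomes $\sqrt{\VCdim(\hypoclass)}\int_0^1\sqrt{\ln(C'/\varepsilon)}\,d\varepsilon<\infty$, so $\rademachercompl_\samplesize(\hypoclass)\le c_0\sqrt{\VCdim(\hypoclass)/\samplesize}$ with a universal constant $c_0$. I would quote Haussler's bound and Dudley's inequality from the literature rather than reprove them; this is the step that carries all of the combinatorial content.

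\textbf{Step 4 (combine).} Steps 1--3 give, with probability at least $1-\delta$,
\begin{align*}
\Phi(\RVsample)\le 2c_0\sqrt{\VCdim(\hypoclass)/\samplesize}+\sqrt{\ln(1/\delta)/(2\samplesize)}\,.
\end{align*}
Using $\sqrt a+\sqrt b\le\sqrt{2(a+b)}$, this is at most $c\sqrt{(\VCdim(\hypoclass)+\ln\frac1\delta)/\samplesize}$ with $c=\sqrt2\max\{2c_0,1/\sqrt2\}$. This proves the claim for every $h\in\hypoclass$ simultaneously.
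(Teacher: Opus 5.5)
Your argument is correct: McDiarmid applied to $\sup_h \gengap(h)$, symmetrization, the sign-flip reduction from the $0$-$1$ loss class to $\hypoclass$, and Dudley's entropy integral with Haussler's packing bound (which removes the $\ln \samplesize$ factor) together give $c\sqrt{(\VCdim(\hypoclass)+\ln\frac1\delta)/\samplesize}$ with a universal $c$. The paper gives no proof of this proposition and only cites \cite{boucheron2005theory}; your proof is essentially the standard argument behind that citation.
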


\begin{proof}[Proof of Theorem~\ref{theorem: vc dimension PAC-Bayes}]
    By Proposition~\ref{proposition: uniform bound by boucheron} is the uniform convergence assumption satisfied for $u(\samplesize, \delta) = c \sqrt{\frac{\VCdim(\hypoclass) + \ln\frac{1}{\delta}}{\samplesize}}$. Therefore, Theorem~\ref{theorem: TV distance PAC-Bayes bound} implies the statement.
\end{proof}

\section{Bounding the regularity constant using classical learning theory for general OT distances}
\label{appendix: bounding the regularity constant using classical learning theory for general OT distances}

In the main part of the paper, when using classical learning theory to control the regularity constant, we focused on the TV distance because it provides a particularly simple and compact special case. Classical learning-theoretic approaches typically control the generalization gap uniformly over the hypothesis class and therefore provide a bound for the uniform convergence quantity in Assumption~\ref{assumption: UC}. They do not, in general, directly provide bounds for the loss-gradient UC quantity in Assumption~\ref{assumption: UC grad}. The TV distance is special in this respect because its corresponding loss-gradient UC condition can be reduced to the standard UC condition. Since the TV distance is obtained for $\phi(r) = \indicatorfunc{r > 0}$, we have
\begin{align*}
    \sup_{f\neq g} \frac{|\gengap(f,\RVsample) - \gengap(g,\RVsample)|}{\phi(d(f,g))}
    &= \sup_{f\neq g} \frac{|\gengap(f,\RVsample) - \gengap(g,\RVsample)|}{\indicatorfunc{f \neq g}} 
    = \sup_{f\neq g}  |\gengap(f,\RVsample) - \gengap(g,\RVsample)| \\
    &\leq \sup_{f\neq g} |\gengap(f,\RVsample)| + |\gengap(g,\RVsample)| 
    \leq 2 \sup_{f} |\gengap(f,\RVsample)|.
\end{align*}
Thus, Assumption~\ref{assumption: UC} immediately implies Assumption~\ref{assumption: UC grad} for the TV distance, with $u_{\mathrm{grad}}(\samplesize,\delta) = 2u(\samplesize,\delta)$.

Consequently, the regularity constant in Lemma~\ref{lemma: OT PAC-Bayes bound uniform bounds} for general $\phi$-OT distances is bounded by the product of the two quantities $u$ and $u_{\text{grad}}$ from Assumptions~\ref{assumption: UC} and~\ref{assumption: UC grad}, whereas for the TV distance, Theorem~\ref{theorem: TV distance PAC-Bayes bound} yields a bound involving only $u^2$. Hence, standard uniform-convergence arguments can be applied directly to control the regularity constant in the TV case.

For general OT distances, however, controlling the generalization gap alone is not sufficient. In addition to controlling the magnitude of the generalization gap uniformly over the hypothesis class, one must control how rapidly the generalization gap can change between hypotheses. In our framework, this additional requirement is precisely captured by the loss-gradient UC condition in Assumption~\ref{assumption: UC grad}. Standard uniform-convergence results generally provide no such bound, since they control the pointwise magnitude of the generalization gap but not its variation across hypotheses. This is the reason why the classical learning-theoretic arguments used for the TV case do not directly extend to arbitrary OT distances.

The extension to more general OT distances is nevertheless possible under additional regularity assumptions. For example, if $\loss$ is differentiable with respect to the hypothesis and the corresponding class of loss gradients has finite VC dimension under suitable assumptions, one can apply uniform-convergence arguments separately to the gradient class. This yields a high-probability bound on the quantity in Assumption~\ref{assumption: UC grad}, which can then be combined with the standard UC bound from Assumption~\ref{assumption: UC} to control the regularity constant in Lemma~\ref{lemma: OT PAC-Bayes bound uniform bounds}. Thus, the limitation is not specific to OT distances themselves, but rather reflects the additional regularity information required. Whereas the TV case only requires uniform control of the generalization gap, general OT distances require both uniform control of its magnitude and uniform control of its variation across the hypothesis class.

\section{Proof of Theorem~\ref{theorem: finite p-Wasserstein distance PAC-Bayes bound}}
\label{appendix: proof of theorem finite p-Wasserstein distance PAC-Bayes bound}

In this section, we prove Theorem~\ref{theorem: finite p-Wasserstein distance PAC-Bayes bound}. We work under the assumptions of Theorem~\ref{theorem: finite p-Wasserstein distance PAC-Bayes bound}. For $h\ne h'$, let
\[
g_{h,h'}(z)=\frac{\ell(h,z)-\ell(h',z)}{d(h,h')^p},\qquad
\losshypoclass=\{g_{h,h'}:\ h,h'\in \hypoclass,\ h\neq h'\}.
\]
Denote 
\[
P f := \Expval[f(Z)] \quad \text{and} \quad
P_n f := \frac{1}{n}\sum_{i=1}^n f(Z_i).
\]
By boundedness of $\hypoclass$, $\|g\|_\infty\leq G$ for all $g\in\losshypoclass$, so $Pg$ is well defined.
Denote
\[
S_n := \sup_{g\in\losshypoclass}|(P_n-P)g|.
\]
The problem boils down to a standard bounded empirical-process supremum problem. For a probability measure $Q$ on $\mathcal Z$ with finite support, let
$N(\varepsilon,\losshypoclass,L_2(Q))$ be the covering number of $\losshypoclass$ with $L_2(Q)$-balls of radius $\varepsilon$.
Define
\[
J(1,\losshypoclass):=\sup_{Q}\int_0^1\sqrt{1+\ln \epscovnumb\big(\varepsilon L,\losshypoclass,L_2(Q)\big)}\;d\varepsilon,
\]
the supremum being over all finitely supported $Q$ on $\mathcal Z$. 

Note that $J(1,\losshypoclass) \geq 1$. On the other hand, the uniform H\"older continuity assumption of $\hypoclass$ does not imply $J(1,\losshypoclass)<\infty$. H\"older continuity gives $\|g\|_\infty\leq G$ but no control of the complexity of $\losshypoclass$ near the diagonal $h'\to h$.

\begin{lemma}\label{lemma: compact}
Under the assumptions of Theorem~\ref{theorem: finite p-Wasserstein distance PAC-Bayes bound}, there is a universal constant $c$ such that for all $n\ge1$ and $\delta\in(0,1)$, with probability at least $1-\delta$,
\[
\sup_{h\neq h'}\frac{|\gengap(h)-\gengap(h')|}{d(h,h')^p} \leq \frac{c\,G\,J(1,\losshypoclass)}{\sqrt n} + G\sqrt{\frac{2\ln(1 \delta)}{n}} .
\]
In particular the bound is $O(n^{-1/2})$ whenever $J(1,\losshypoclass)<\infty$.
\end{lemma}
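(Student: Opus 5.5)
The plan is to recognise that the quantity to be bounded is exactly the empirical-process supremum $S_\samplesize$. For $h\neq h'$ we have
\begin{align*}
    \gengap(h)-\gengap(h') = (P-P_\samplesize)\big(\loss(h,\cdot)-\loss(h',\cdot)\big) = d(h,h')^p\,(P-P_\samplesize)g_{h,h'} ,
\end{align*}
so that
\begin{align*}
    \sup_{h\neq h'}\frac{|\gengap(h)-\gengap(h')|}{d(h,h')^p}=S_\samplesize .
\end{align*}
The proof then splits into two standard steps: a concentration step showing that $S_\samplesize$ is close to $\Expval[S_\samplesize]$, and an expectation step bounding $\Expval[S_\samplesize]$ by Dudley's entropy integral. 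Throughout I use $\|g\|_\infty\le G$ for every $g\in\losshypoclass$, which follows from the uniform $p$-Hölder assumption.

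\textbf{Concentration step.} Viewed as a function of $(\RVZ_1,\dots,\RVZ_\samplesize)$, $S_\samplesize$ has the bounded differences property with $c_i=2G/\samplesize$. Indeed, replacing one $\RVZ_i$ changes each $(P_\samplesize-P)g$ by at most $2G/\samplesize$, and a supremum of functions that each move by at most $c_i$ also moves by at most $c_i$. McDiarmid's inequality (Lemma~\ref{lemma: McDiarmid's inequality}) then gives
\begin{align*}
    S_\samplesize\le \Expval S_\samplesize+G\sqrt{\frac{2\ln(1/\delta)}{\samplesize}}
\end{align*}
with probability at least $1-\delta$. This matches the second term of the claim.

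\textbf{Expectation step.} By symmetrization, $\Expval S_\samplesize\le 2\,\Expval\sup_{g}\big|\frac1\samplesize\sum_i\sigma_i g(\RVZ_i)\big|$. Conditionally on the sample, the Rademacher process is subgaussian with respect to the $L_2(P_\samplesize)$ metric scaled by $1/\sqrt\samplesize$, where $P_\samplesize$ is the finitely supported empirical measure. The $L_2(P_\samplesize)$-diameter of $\losshypoclass$ is at most $2G$. Dudley's chaining bound therefore gives
\begin{align*}
    \Expval_\sigma\sup_g\Big|\frac1\samplesize\sum_i\sigma_i g(\RVZ_i)\Big|\lesssim \frac{1}{\sqrt\samplesize}\int_0^{2G}\sqrt{1+\ln\epscovnumb(\varepsilon,\losshypoclass,L_2(P_\samplesize))}\,d\varepsilon .
\end{align*}
The additive $1$ under the root absorbs the single-element term $|\frac1\samplesize\sum\sigma_i g_0(\RVZ_i)|\le G/\sqrt\samplesize$ for a fixed $g_0$. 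Substituting $\varepsilon=G u$ and bounding by the supremum over finitely supported measures yields $\lesssim G\,J(1,\losshypoclass)/\sqrt\samplesize$, with the constant $J$ read with $L=G$. Taking expectation over the sample and combining with the concentration step gives the lemma with a universal constant $c$.

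\textbf{Main obstacle.} The main obstacle is technical rather than conceptual. Measurability of the supremum is needed for symmetrization and McDiarmid. I would handle it via compactness of $\hypoclass$: the map $(h,h')\mapsto g_{h,h'}(z)$ is continuous off the diagonal, so the supremum can be taken over a countable dense subset of pairs. A second point is the range of the entropy integral. The diameter $2G$ requires the integral up to $2$ in units of $G$, whereas $J$ integrates only up to $1$. For $\varepsilon\ge G$ the covering number is at most that at radius $G$, so the tail is bounded by a constant times $J$, which is absorbed into $c$ (using $J\ge1$). If $J(1,\losshypoclass)=\infty$, the bound is trivially true.
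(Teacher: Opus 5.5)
Your proposal is correct and follows the paper's proof. You use the same compactness-based measurability reduction and the same McDiarmid step with bounded differences $2G/\samplesize$, and for $\Expval[S_\samplesize]$ you unpack, via symmetrization and Dudley chaining, the uniform-entropy bound \cite[Theorem~2.14.1]{vanderVaartWellner96} that the paper cites, including a correct treatment of the range $[1,2]$ by monotonicity of covering numbers. The only slip is that $|\frac{1}{\samplesize}\sum_i\sigma_i g_0(\RVZ_i)|\le G/\sqrt{\samplesize}$ holds in $\sigma$-expectation rather than pointwise, which is all the chaining bound requires.
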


\begin{proof}
Since $\hypoclass$ is compact, it has a countable dense subset $\hypoclass_0$. Let $\losshypoclass_0=\{g_{h,h'}:h,h'\in \hypoclass_0,\ h\ne h'\}$. Given $h\ne h'$, choose $h_m,h_m'\in \hypoclass_0$ with $h_m\to h$, $h_m'\to h'$, $h_m\neq h_m'$.
By uniform H\"older continuity, $g_{h_m,h_m'}(z)\to g_{h,h'}(z)$ for every $z$. Hence $\losshypoclass$ is pointwise measurable, in the sense that every element is a pointwise limit of a sequence from the countable class $\losshypoclass_0$. In particular $\losshypoclass$ is $P$-measurable in the sense of \cite[Example~2.3.4]{vandervaart98}, and $S_n$ is measurable.

$\losshypoclass$ is a $P$-measurable class with constant envelope $G$. 
It then follows from \cite[Theorem~2.14.1]{vanderVaartWellner96} that
\[
\Expval[S_n] \leq \frac{c\,G\,J(1,\losshypoclass)}{\sqrt n}.
\]
for a universal constant $c$. 

Now, replacing one $Z_i$ by $Z_i'$ changes each $(P_n-P)g$ by at most $|g(Z_i)-g(Z_i')|/n\leq 2G/n$, hence changes $S_n$ by at most $2G/n$. McDiarmid's inequality then gives
\[
\Prob(S_n-\Expval[S_n] \geq t) \leq \exp\left(-\frac{nt^2}{2G^2}\right),
\]
Taking $t=G\sqrt{2\ln(1/\delta)/n}$ yields, with probability at least $1-\delta$,
\[
S_n \leq \Expval[S_n]+G\sqrt{2\ln(1/\delta)/n} \leq \frac{c\,G\,J(1,\losshypoclass)}{\sqrt n} +G\sqrt{\frac{2\ln(1/\delta)}{n}}
\]
as claimed.
\end{proof}

\begin{proof}[Proof of Theorem~\ref{theorem: finite p-Wasserstein distance PAC-Bayes bound}]
    Combining Theorem~\ref{theorem: optimal transport PAC-Bayes bound} and Lemma~\ref{lemma: compact} yields the statement.
\end{proof}

\paragraph{Case of finite $\hypoclass$}
In this case, the elements of $\losshypoclass$ form a cover at every scale, and $|\losshypoclass| \leq |\hypoclass|(\hypoclass-1) \leq |\hypoclass|^2$. The supremum is over a finite set of at most $|\losshypoclass|$ unordered pairs and $S_n$ is a maximum of finitely many measurable functions. It then follows from Lemma~\ref{lemma: compact} that
\[
\sup_{h\neq h'}\frac{|\gengap(h)-\gengap(h')|}{d(h,h')^p} \leq G\sqrt{\frac{2\big(2\ln |\hypoclass| +\ln(1/\delta)\big)}{n}}\;
\]
with probability at least $1-\delta$.

\paragraph{In terms of VC-class index} Let $V$ be the VC-index of $\losshypoclass$ (the VC-index of the class of subgraphs). It follows from \cite[Theorem~2.6.7]{vanderVaartWellner96} that $\sup_Q \ln \epscovnumb(\varepsilon L,\losshypoclass,L_2(Q)) \leq cV+2(V-1)\ln(1/\varepsilon)$ for a universal constant $c$. Using $\sqrt{a+b}\le\sqrt a+\sqrt b$ and $\int_0^1\sqrt{\ln(1/\varepsilon)}\,d\varepsilon=\sqrt\pi/2\le 1$, we get
\[
J(1,\losshypoclass)\le\sqrt{1+cV}+\sqrt{2V}\lesssim\sqrt V .
\]

\paragraph{Case of $\ell(\cdot,z)$ is $C^{1,\alpha}$}
Let $\hypoclass \subset \R^k$ be compact and convex with $D=\operatorname{diam}(\hypoclass)$, $d(h,h')=|h-h'|$, and $p=1$. Assume in addition that $\ell(\cdot,z)$ is $C^1$ on a neighbourhood of $\hypoclass$ for each $z$, and its gradient is $\alpha$-H\"older continuous with modulus $M$ for all $z$. Set
\[
\kappa=1+\frac1\alpha,\qquad \rho=D\Big(\frac MG\Big)^{1/\alpha},\qquad
\Lambda=2^{\kappa}\big(16+32\cdot 8^{1/\alpha}\rho\big)\ (\ge1).
\]

Then with probability at least $1-\delta$,
\[
\sup_{h\neq h'}\frac{|\gengap(h)-\gengap(h')|}{|h-h'|} \leq
\frac{C G}{\sqrt n}\Big[\sqrt{1+(2k+1)\ln\Lambda}+\sqrt{(2k+1)\kappa}\Big]+G\sqrt{\frac{2\ln(1/\delta)}{n}} .
\]
Note that this scales as
\[
\sup_{h\neq h'}\frac{|\gengap(h)-\gengap(h')|}{|h-h'|} \lesssim
\sqrt{\frac{k}{n}}+\sqrt{\frac{\ln(1/\delta)}{n}} .
\]

\begin{proof}
In view of Lemma~\ref{lemma: compact}, it is sufficient to bound $J(1,\losshypoclass)$. 

Let $\Theta=\{(h,r,w): h\in \hypoclass,\ r\in(0,D],\ |w|=1,\ h-rw\in \hypoclass\}\subset\R^k\times\R\times\R^k$.
For $h\neq h'$ put $r=|h-h'|$, $w=(h-h')/r$. By convexity the segment $[h',h]$ lies in $\hypoclass$, and the fundamental theorem of calculus gives
\[
g_{h,h'}(z)=\varphi(\theta,z):=\int_0^1\langle\nabla\ell(h-s\,rw,z),w\rangle \,ds ,\qquad \theta=(h,r,w)\in\Theta .
\]
Hence $\losshypoclass=\{\varphi(\theta,\cdot):\theta\in\Theta\}$. For $\theta,\theta'\in\Theta$ and all $z$, by assumption,
\[
|\varphi(\theta,z)-\varphi(\theta',z)|\le M\sup_s|(h-srw)-(h'-sr'w')|^\alpha+L|w-w'|.
\]
Since $|(h-srw)-(h'-sr'w')|\leq |h-h'|+|r-r'|+D|w-w'|\leq (2+D)|\theta-\theta'|$, the first term is at most $M(2+D)^\alpha|\theta-\theta'|^\alpha$.
Since $|w-w'|\leq 2$, we have $|w-w'|\leq 2^{1-\alpha}|w-w'|^\alpha\leq 2|\theta-\theta'|^\alpha$. Therefore
\[
\sup_z|\varphi(\theta,z)-\varphi(\theta',z)|\leq K|\theta-\theta'|^\alpha=:\rho(\theta,\theta').
\]
Because $\alpha\in(0,1]$, $\rho$ is a metric on $\Theta$ (a concave power of a metric is a metric).

We now apply \cite[Theorem~2.7.11]{vanderVaartWellner96} with $T=\Theta$, $f_\theta=\varphi(\theta,\cdot)$, the metric $\rho$ above, envelope $F\equiv1$, and $\|\cdot\|=\|\cdot\|_\infty$. It gives
\[
\epscovnumb_{[\,]}(2\varepsilon,\losshypoclass,\|\cdot\|_\infty)\le \epscovnumb(\varepsilon,\Theta,\rho)=\epscovnumb\big(\Theta,|\cdot|,(\varepsilon/K)^{1/\alpha}\big),\qquad \varepsilon>0 .
\]
Here the bracketing number $\epscovnumb_{[\,]}(\varepsilon,\losshypoclass,\|\cdot\|_\infty)$ is the minimal number of brackets $[l, \bar{l}] = \{f:\mathcal{Z} \to \R: l \leq f \leq \bar{l}\}$ of size $\|\bar{l}-l\|_{\infty}$ smaller than $\varepsilon$ needed to cover $\losshypoclass$. A bracket $[l,\bar l]$ with $\|\bar l-l\|_\infty\le2\varepsilon$ is contained in the sup-norm ball of radius $\varepsilon$ around $(l+\bar l)/2$, so
\[
\epscovnumb(\varepsilon,\losshypoclass,\|\cdot\|_\infty)\leq \epscovnumb_{[\,]}(2\varepsilon,\losshypoclass,\|\cdot\|_\infty)\le \epscovnumb\big(\Theta,|\cdot|,(\varepsilon/K)^{1/\alpha}\big).
\]
The set $\Theta\subset\R^{2k+1}$ has diameter at most $2\Delta=2(D+1)$. By the volumetric bound (e.g. \cite[Corollary~4.2.13]{vershynin2018}, applied to the ball of radius $\Delta$ around a point of $\Theta$), 
\[
\epscovnumb(\varepsilon,\losshypoclass,\|\cdot\|_\infty)\le\Big(1+2\Delta\,(K/u)^{1/\alpha}\Big)^{2k+1}.
\]
Sup-norm balls are contained in $L_2(Q)$-balls of the same radius, so the same bound holds for $N(\varepsilon,\losshypoclass,L_2(Q))$ for every finitely supported $Q$.

Replace $\varepsilon$ with $\varepsilon G$ with $\varepsilon\in(0,1]$ above. Then $1+2\Delta(K/(\varepsilon G))^{1/\alpha}\le\Lambda\,\varepsilon^{-1/\alpha}$, so
\[
\ln \epscovnumb(\varepsilon L,\losshypoclass,L_2(Q)) \leq (2k+1)\big(\ln\Lambda+\frac{1}{\alpha}\ln(1/\varepsilon)\big).
\]
Using $\sqrt{a+b}\leq \sqrt a+\sqrt b$ and $\int_0^1\sqrt{\ln(1/\varepsilon)}\,d\varepsilon=\sqrt\pi/2\leq 1$,
\[
J(1,\losshypoclass)\le\sqrt{1+(2k+1)\ln\Lambda}+\sqrt{(2k+1)/\alpha}<\infty .
\]
\end{proof}

For finite dimensions, we have the following special case of the theorem.

\begin{theorem}
\label{theorem: compact p-Wasserstein distance PAC-Bayes bound}
    Let $\map{\loss}{\hypoclass \times \samplespace}{[0, 1]}$ be a measurable loss function and $\hypoclass \sub \R^d$ be a compact hypothesis metric space. Assume that $\loss(\cdot, z)$ is a $p$-Hölder continuous function on $\hypoclass$ w.r.t.\ the Euclidean norm $\| \cdot \|_2$ uniformly in $z \in \samplespace$, and denote $G$ its uniform Hölder continuity constant. Let $\prior$ be a prior distribution over $\hypoclass$, $\samplesize \geq 2$ and let $\delta \in (0,1)$. For $\varepsilon > 0$, let $\epscovnumb(\hypoclass, \|\cdot\|_2,\varepsilon)$ denote the $\varepsilon$-covering number of $\hypoclass$ w.r.t.\ $\|\cdot\|_2$. Then for any $p \geq 1$
    \begin{multline}
        \posterior[\risk] \leq \posterior[\emprisk(\cdot, \RVsample)] \\
        + \left( 4G  \left (\sqrt{\frac{\ln(2/\delta)+d \ln(3 \operatorname{diam}(\hypoclass)) + \frac{d}{2p} \ln\left(G^2\samplesize \right) }{2\samplesize}} + \frac{2}{\sqrt{\samplesize}} \right) \wassersteinpdist{\posterior}{\prior}^p 
        + \frac{\ln(2\samplesize/\delta)}{2(\samplesize - 1)}\right)^\frac{1}{2}
    \end{multline}
    holds with probability at least $1-\delta$ over $\RVsample$ simultaneously for any distribution $\posterior$ over $\hypoclass$.
\end{theorem}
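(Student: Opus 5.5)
Following the proof of Theorem~\ref{theorem: finite p-Wasserstein distance PAC-Bayes bound}, the plan is to verify Assumption~\ref{assumption: regularity constant} for the cost $\costfunc(h,h')=\|h-h'\|_2^p$ with a high-probability constant, and then invoke Theorem~\ref{theorem: optimal transport PAC-Bayes bound} with confidence $\delta/2$, together with a union bound; this yields the $\ln(2\samplesize/\delta)$ term. Since $|\gengap|\le 1$, we have $|\gengap(h)^2-\gengap(h')^2|\le 2|\gengap(h)-\gengap(h')|$. It therefore suffices to show that, with probability at least $1-\delta/2$,
\begin{align*}
\sup_{h\neq h'}\frac{|\gengap(h)-\gengap(h')|}{\|h-h'\|_2^p}\le 2G\left(\sqrt{\frac{\ln(2/\delta)+d\ln(3\operatorname{diam}(\hypoclass))+\frac{d}{2p}\ln(G^2\samplesize)}{2\samplesize}}+\frac{2}{\sqrt{\samplesize}}\right),
\end{align*}
so that $\constgengap$ is twice this quantity, i.e. the stated prefactor $4G(\cdots)$.

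Rather than the entropy integral, I would use an explicit net argument. Fix a scale $\varepsilon=(G^2\samplesize)^{-1/(2p)}$, so that $G\varepsilon^p=1/\sqrt{\samplesize}$, and take an $\varepsilon$-net $\hypoclass_\varepsilon$ of $\hypoclass$. Its cardinality is at most $(3\operatorname{diam}(\hypoclass)/\varepsilon)^d$ by the volumetric bound, so
\begin{align*}
\ln|\hypoclass_\varepsilon|\le d\ln(3\operatorname{diam}(\hypoclass))+\tfrac{d}{2p}\ln(G^2\samplesize).
\end{align*}
For net pairs $a\neq b$, the normalized differences $g_{a,b}=(\loss(a,\cdot)-\loss(b,\cdot))/\|a-b\|^p$ are bounded by $G$. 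Hoeffding's inequality with a union bound over the finitely many pairs then gives $|(P_n-P)g_{a,b}|\le 2G\sqrt{(\ln(2/\delta)+\ln|\hypoclass_\varepsilon|)/(2\samplesize)}$ up to constant bookkeeping. The pair count is $|\hypoclass_\varepsilon|^2$, which produces a factor $2$ absorbed into the prefactor.

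The main obstacle is the extension from net pairs to arbitrary pairs $h\neq h'$. I would project $h,h'$ to the nearest net points $a,b$. Hölder continuity gives $|\gengap(h)-\gengap(a)|\le 2G\varepsilon^p=2/\sqrt{\samplesize}$, and likewise for $h'$. The difficulty is that after dividing by $\|h-h'\|^p$ this error blows up when $h$ and $h'$ are close. I would split into two cases:
\begin{itemize}
\item If $\|h-h'\|\le$ a constant multiple of $\varepsilon$, use the trivial bound $|\gengap(h)-\gengap(h')|\le 2G\|h-h'\|^p$. This is not small, so the argument needs care: the bound claimed has an additive $2/\sqrt{\samplesize}$ term rather than $2G$.
\item Otherwise, $\|a-b\|^p$ is comparable to $\|h-h'\|^p$, and the approximation error divided by $\|h-h'\|^p$ is $O(G\varepsilon^p/\varepsilon^p)$, which again is not vanishing.
\end{itemize}
So the honest route is likely to establish the regularity inequality directly in the form Assumption~\ref{assumption: regularity constant} requires. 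Specifically, one would bound $\gengap^2(h)-\gengap^2(h')\le \constgengap\|h-h'\|^p+\text{(additive } O(1/\sqrt{\samplesize})\text{ slack)}$ and absorb the slack via the $+2/\sqrt{\samplesize}$ term. I expect this discretization step to require the most care, possibly weakening to an additive error term.
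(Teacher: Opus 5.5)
\textbf{There is a genuine gap.} Your reduction puts the vanishing rate on the wrong factor, and the intermediate inequality you then try to prove is false under the stated hypotheses.

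The step $|\gengap(h)^2-\gengap(h')^2|\le 2|\gengap(h)-\gengap(h')|$ discards $|\gengap(h)+\gengap(h')|$. That is the factor that actually tends to zero. What remains is the requirement $\sup_{h\neq h'}|\gengap(h)-\gengap(h')|/\|h-h'\|_2^p\to 0$. Uniform H\"older continuity gives no control of these normalized differences near the diagonal. This is why Lemma~\ref{lemma: compact} needs $J(1,\losshypoclass)<\infty$, which is not assumed here.

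Here is a concrete failure. Take $\hypoclass=[0,1]$, $p=1$, $Z$ uniform on $\{0,1\}^{\mathbb{N}}$, and let $\loss(0,z)=\tfrac12$ with $\partial_h\loss(h,z)=\tfrac12(1-2z_j)$ for $h\in(2^{-j},2^{-j+1}]$. Then $\loss\in[0,1]$ and $G=\tfrac12$. Almost surely some coordinate $j$ equals $0$ in all $\samplesize$ samples. For $h\neq h'$ in that interval, the risk difference is $0$ while the empirical-risk difference is $\tfrac12(h-h')$. Hence $|\gengap(h)-\gengap(h')|/|h-h'|=\tfrac12=G$ for every $\samplesize$, while your target right-hand side tends to $0$.

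This is exactly the obstruction you hit in both cases of your net argument, and no refinement of the net removes it. The additive-slack fallback does not rescue the statement either. Assumption~\ref{assumption: regularity constant} is purely multiplicative, and a slack $\eta$ would appear as an extra $+\eta$ under the square root, which the stated bound does not contain.

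\textbf{The missing idea.} Keep the factorization $\gengap(h)^2-\gengap(h')^2=(\gengap(h)-\gengap(h'))(\gengap(h)+\gengap(h'))$ and treat the two factors the other way round.
\begin{itemize}
\item Bound the difference deterministically: $|\gengap(h)-\gengap(h')|\le|\risk(h)-\risk(h')|+|\emprisk(h,\RVsample)-\emprisk(h',\RVsample)|\le 2G\|h-h'\|_2^p$.
\item Bound the sum with high probability: $|\gengap(h)+\gengap(h')|\le 2\sup_{h}|\gengap(h)|$.
\end{itemize}
Your net is the right tool for the second bound once you apply it to single net points instead of normalized pairs. Hoeffding and a union bound over $\hypoclass_\varepsilon$ give $\max_{a\in\hypoclass_\varepsilon}|\gengap(a)|\le\sqrt{\ln(2|\hypoclass_\varepsilon|/\delta)/(2\samplesize)}$. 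The projection error $2G\varepsilon^p=2/\sqrt{\samplesize}$ is now harmlessly additive, because nothing is divided by $\|h-h'\|_2^p$. This is where the $+2/\sqrt{\samplesize}$ term comes from.

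With your choice $\varepsilon=(G^2\samplesize)^{-1/(2p)}$ and the volumetric bound, the product gives $\constgengap=2G\cdot 2\,(\cdots)=4G(\cdots)$. No pair count, and hence no $|\hypoclass_\varepsilon|^2$, is needed. Theorem~\ref{theorem: optimal transport PAC-Bayes bound} with cost $\|h-h'\|_2^p$, plus a union bound, then finishes as you planned. This is the paper's proof.
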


\begin{lemma}
\label{lemma: compact, bound hölder const}
    Let $\hypoclass \sub \R$ be a compact hypothesis class and $\delta \in (0, 1)$. Assume that for any fixed $z \in \samplespace$, $|\loss(h, z) - \loss(h', z)| \leq G d(h, h')^p$. For $\varepsilon > 0$, let $\epscovnumb(\hypoclass, \| \cdot \|_2,\varepsilon)$ denote the $\varepsilon$-covering number of $\hypoclass$ w.r.t.\ $\|\cdot\|_2$. Then, with probability at least $1-\delta$ over $\RVsample=\{\RVZ_i\}_{i=1}^\samplesize$,
    \begin{align*}
        |\gengap(f)^2-\gengap(g)^2| \leq \constgengap(\varepsilon) d(f,g)^p
    \end{align*}
    holds for all $f,g\in\hypoclass$, where
    \begin{align*}
        \constgengap(\varepsilon) = 4G  \left (\sqrt{\frac{\ln(2/\delta)+d \ln(3 \operatorname{diam}(\hypoclass)) + \frac{d}{2p} \ln\left(G^2\samplesize \right) }{2\samplesize}} + \frac{2}{\sqrt{\samplesize}} \right).
    \end{align*}
\end{lemma}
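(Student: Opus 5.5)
The plan is to factor the difference of squares,
\begin{align*}
    |\gengap(f)^2-\gengap(g)^2| = |\gengap(f)-\gengap(g)|\cdot|\gengap(f)+\gengap(g)|,
\end{align*}
and to bound the two factors separately. The first factor will be controlled deterministically by the Hölder assumption. The second factor will be controlled with high probability by a uniform convergence bound over $\hypoclass$, obtained from a finite net and Hoeffding's inequality. I will read $\hypoclass\sub\R^d$ throughout, since the constant involves $d$.

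\textbf{Step 1: the difference factor.} Since $|\loss(f,z)-\loss(g,z)|\le G\,d(f,g)^p$ for every $z$, integrating against $\Prob_\RVZ$ gives $|\risk(f)-\risk(g)|\le G\,d(f,g)^p$. Averaging over the sample gives the same bound for the empirical risk. Hence $|\gengap(f)-\gengap(g)|\le 2G\,d(f,g)^p$ for every sample.

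\textbf{Step 2: the sum factor.} We have $|\gengap(f)+\gengap(g)|\le 2\sup_{h}|\gengap(h)|$, so it suffices to bound $\sup_h|\gengap(h)|$.
\begin{itemize}
    \item Fix $\varepsilon\in(0,\operatorname{diam}(\hypoclass)]$ and an $\varepsilon$-net $\hypoclass_\varepsilon$ of $\hypoclass$. The standard volumetric bound gives $|\hypoclass_\varepsilon|\le N\coloneqq(3\operatorname{diam}(\hypoclass)/\varepsilon)^d$.
    \item Apply two-sided Hoeffding at each net point and take a union bound. With probability at least $1-\delta$,
    \begin{align*}
        \max_{h\in\hypoclass_\varepsilon}|\gengap(h)|\le\sqrt{\frac{\ln(2N/\delta)}{2\samplesize}}.
    \end{align*}
    \item Any $h\in\hypoclass$ has a net point $h'$ with $d(h,h')\le\varepsilon$. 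By Step 1, $|\gengap(h)-\gengap(h')|\le 2G\varepsilon^p$.
\end{itemize}
Combining these, on the same event,
\begin{align*}
    \sup_h|\gengap(h)|\le\sqrt{\frac{\ln(2/\delta)+d\ln(3\operatorname{diam}(\hypoclass))+d\ln(1/\varepsilon)}{2\samplesize}}+2G\varepsilon^p .
\end{align*}

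\textbf{Step 3: choice of $\varepsilon$ and conclusion.} Choose $\varepsilon$ with $\varepsilon^p=1/(G\sqrt{\samplesize})$. Then the discretization term becomes $2G\varepsilon^p=2/\sqrt{\samplesize}$. The net term becomes $d\ln(1/\varepsilon)=\frac{d}{p}\ln(G\sqrt{\samplesize})=\frac{d}{2p}\ln(G^2\samplesize)$. Multiplying the factor $2G\,d(f,g)^p$ from Step 1 by $2\sup_h|\gengap(h)|$ yields exactly $\constgengap(\varepsilon)\,d(f,g)^p$, as claimed. This holds simultaneously for all $f,g$, on the single event of probability at least $1-\delta$.

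\textbf{Main obstacle.} The delicate point is the covering step. The bound $(3\operatorname{diam}/\varepsilon)^d$ requires $\varepsilon\le\operatorname{diam}(\hypoclass)$, that is, $G\sqrt{\samplesize}\,\operatorname{diam}(\hypoclass)^p\ge1$. If this fails, I would instead use a one-point net ($N=1$). The Hölder term is then at most $2G\operatorname{diam}^p<2/\sqrt{\samplesize}$. The logarithmic terms can be dropped only if they are nonnegative; otherwise the stated expression needs a $\max\{\cdot,0\}$ or a mild normalization such as $\operatorname{diam}(\hypoclass)\ge 1/3$. I would check this edge case explicitly rather than assume it away.
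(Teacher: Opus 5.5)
Your proposal is correct and follows essentially the same route as the paper. Both arguments factor the difference of squares, bound $|\gengap(f)-\gengap(g)|\le 2G\,d(f,g)^p$ deterministically, control $2\sup_h|\gengap(h)|$ via an $\varepsilon$-net with Hoeffding and a union bound, and choose $\varepsilon=(G^2\samplesize)^{-1/(2p)}$. Your edge-case caveat is well taken: the paper uses $(1+2\operatorname{diam}(\hypoclass)/\varepsilon)^d\le(3\operatorname{diam}(\hypoclass)/\varepsilon)^d$ ``for $\varepsilon\le D$'' without checking this for the chosen $\varepsilon$. Your one-point-net fallback recovers the stated constant whenever $\operatorname{diam}(\hypoclass)<\varepsilon\le 3\operatorname{diam}(\hypoclass)$, that is, as long as the combined logarithmic term $d\ln(3\operatorname{diam}(\hypoclass)/\varepsilon)$ is nonnegative.
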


\begin{proof}
    By the definition of the generalization gap, 
    \begin{align}
        \constgengap &= \sup_{h, h' \in \hypoclass : h \ne h'} \frac{\left|\gengap^2(h) - \gengap^2(h')\right|}{d(h, h')^p} \nonumber \\
        &= \sup_{h, h' \in \hypoclass : h \ne h'} \frac{|\gengap(h) - \gengap(h')| \, |\gengap(h) + \gengap(h')|}{d(h, h')^p} \nonumber \\
        &\leq \left( \sup_{h, h' \in \hypoclass : h \neq h'} \frac{|\gengap(h) - \gengap(h')|}{d(h, h')^p} \right) 2 \sup_{h \in \hypoclass} |\gengap(h)| \nonumber \\
        &= 2 \left( \sup_{h, h' \in \hypoclass : h \neq h'} \frac{\left|\left(\risk(h)-\emprisk(h, \RVsample)\right) - \left(\risk(h') - \emprisk(h', \RVsample)\right)\right|}{d(h, h')^p} \right) \sup_{h \in \hypoclass} |\gengap(h)| \nonumber \\
        &= 2 \left( \sup_{h, h' \in \hypoclass : h \neq h'} \frac{\left|\Expval \left[\loss(h, \RVZ) - \loss(h', \RVZ) \right] - \frac{1}\samplesize\sum_{i=1}^\samplesize \left( \loss(h,  \RVZ_i) - \loss(h',  \RVZ_i)\right) \right|}{d(h, h')^p} \right) \sup_{h \in \hypoclass} |\gengap(h)|. \label{eq: rewrite hölder const}
    \end{align}

For the rest of the proof, we will derive bounds for both $\sup_{h \in \hypoclass} |\gengap(h)|$ and $|\Expval \left[\loss(h, \RVZ) - \loss(h', \RVZ) \right] - \frac{1}\samplesize\sum_{i=1}^\samplesize \left( \loss(h,  \RVZ_i) - \loss(h',  \RVZ_i)\right) | / d(h, h')^p$ individually. The statement follows then by combining both bounds using a union bound argument. We first derive a uniform high probability bound for $\sup_{h \in \hypoclass} |\gengap(h)|$ using a covering argument.

Let $\varepsilon > 0$, and let $\hypoclass_\varepsilon = \{h_1,\dots,h_{\epscovnumb_\varepsilon}\}$ be an $\varepsilon$-covering of $\hypoclass$, where $\epscovnumb_\varepsilon = \epscovnumb(\hypoclass,\|\cdot\|_2,\varepsilon)$ is the $\varepsilon$-covering number. For every $h \in \hypoclass$, choose $h_\varepsilon \in \hypoclass_\varepsilon$ such that $d(h,h_\varepsilon)\leq \varepsilon$. By the Hölder assumption,
\begin{align*}
    |\risk(h)-\risk(h_\varepsilon)|
    &= \left |\Expval \left[\loss(h, \RVZ) - \loss(h_\varepsilon, \RVZ) \right] \right|
    \leq G\varepsilon^p, \nonumber \\
    |\emprisk(h,\RVsample)-\emprisk(h_\varepsilon,\RVsample)|
    &= \left| \frac{1}\samplesize\sum_{i=1}^\samplesize \left( \loss(h,  \RVZ_i) - \loss(h',  \RVZ_i)\right) \right| 
    \leq G\varepsilon^p.
\end{align*}
Consequently,
\begin{align}
    |\gengap(h)-\gengap(h_\varepsilon)|
    \leq 2G\varepsilon^p.
    \label{eq: compact approximation gengap}
\end{align}

Since $\hypoclass$ is compact, $\epscovnumb_\varepsilon<\infty$. For each fixed $h_j\in\hypoclass_\varepsilon$, Hoeffding's inequality yields
\begin{align*}
    \Prob\left( |\gengap(h_j)| \leq \sqrt{\frac{\ln(2\epscovnumb_\varepsilon/\delta)}{2\samplesize}} \right)
    \geq 1 - \frac{\delta}{\epscovnumb_\varepsilon}.
\end{align*}
A union bound over the finite $\varepsilon$-covering therefore yields
\begin{align}
\label{eq: uniform gengap bound compact}
    \Prob\left( \sup_{h_j \in \hypoclass_\varepsilon} |\gengap(h_j)| \leq \sqrt{\frac{\ln(2\epscovnumb_\varepsilon/\delta)}{2\samplesize}} \right)
    \geq 1-\delta.
\end{align}
Combining \eqref{eq: compact approximation gengap} and
\eqref{eq: uniform gengap bound compact}, we obtain
\begin{align}
    \sup_{h\in\hypoclass}|\gengap(h)|
    \leq \sup_{h\in\hypoclass,h_\varepsilon \in \hypoclass_\varepsilon} |\gengap(h_\varepsilon)| + |\gengap(h_\varepsilon) - \gengap(h)|
    \leq \sqrt{\frac{\ln(2\epscovnumb_\varepsilon/\delta)}{2\samplesize}} + 2G\varepsilon^p
    \label{eq: uniform gengap bound compact full}
\end{align}
with probability at least $1-\delta$.
By boundedness of $\hypoclass$, $\operatorname{diam}(\hypoclass) < \infty$. Therefore the volumetric bound in \citep[Corollary 4.2.13]{vershynin2018} yields
\begin{align*}
    \epscovnumb_\varepsilon \leq \left( 1 + \frac{2 \operatorname{diam}(\hypoclass)}{\epsilon} \right)^d
\end{align*}
and \eqref{eq: uniform gengap bound compact full} becomes
\begin{align*}
    \sup_{h\in\hypoclass}|\gengap(h)|
    \leq \sqrt{\frac{\ln\left(2\left( 1 + \frac{2 \operatorname{diam}(\hypoclass)}{\epsilon} \right)^d/\delta\right)}{2\samplesize}} + 2G\varepsilon^p
\end{align*}
For \(\varepsilon\leq D\),
\begin{align*}
    \sqrt{\frac{\ln\left(2\left( 1 + \frac{2 \operatorname{diam}(\hypoclass)}{\epsilon} \right)^d/\delta\right)}{2\samplesize}} + 2G\varepsilon^p
    \leq \sqrt{\frac{\ln(2/\delta) + d \ln\left( \frac{3 \operatorname{diam}(\hypoclass)}{\epsilon} \right)}{2\samplesize}} + 2G\varepsilon^p
\end{align*}
Choosing
\begin{align*}
    \varepsilon = \left(\frac{1} {G^2\samplesize}\right)^{1/(2p)}
\end{align*}
and substituting yields
\begin{align}
\label{eq: compact hölder const bound part 1}
    \sup_{h\in\hypoclass}|\gengap(h)| \leq \sqrt{\frac{\ln(2/\delta)+d \ln(3 \operatorname{diam}(\hypoclass)) + \frac{d}{2p} \ln\left(G^2\samplesize\right) }{2\samplesize}} + \frac{2} {\sqrt{\samplesize}}.
\end{align}
For the second term in \eqref{eq: rewrite hölder const}, let
$h,h'\in\hypoclass$ be arbitrary. By the Hölder assumption,
\begin{align*}
    \left| \Expval \left[\loss(h, \RVZ) - \loss(h', \RVZ) \right] \right|
    &\leq Gd(h,h')^p, \\
    \left| \frac{1}\samplesize \sum_{i=1}^\samplesize \left( \loss(h,\RVZ_i)-\loss(h',\RVZ_i) \right) \right|
    &\leq Gd(h,h')^p.
\end{align*}
Hence, 
\begin{align}
    &\sup_{h,h'\in\hypoclass:h\neq h'} \frac{ \left| \Expval \left[\loss(h, \RVZ) -  \loss(h', \RVZ) \right] - \frac{1}\samplesize \sum_{i=1}^\samplesize \left( \loss(h,\RVZ_i)-\loss(h',\RVZ_i) \right) \right|}{d(h,h')^p} \nonumber \\
    &\leq\sup_{h,h'\in\hypoclass:h\neq h'} \frac{ \left| \Expval \left[\loss(h, \RVZ) -  \loss(h', \RVZ) \right] \right| + \left|\frac{1}\samplesize \sum_{i=1}^\samplesize \left( \loss(h,\RVZ_i)-\loss(h',\RVZ_i) \right) \right|}{d(h,h')^p}
    \leq 2G.
    \label{eq: deterministic hoeffding bound compact}
\end{align}
Combining \eqref{eq: rewrite hölder const},
\eqref{eq: compact hölder const bound part 1}, and
\eqref{eq: deterministic hoeffding bound compact} yields the statement.
\end{proof}

\begin{proof}[Proof of Theorem~\ref{theorem: finite p-Wasserstein distance PAC-Bayes bound}]
    Applying a union bound to combine Theorem~\ref{theorem: optimal transport PAC-Bayes bound} for $\phi (r) = r^p$ and Lemma~\ref{lemma: compact, bound hölder const}, each evaluated with confidence parameter $\frac{\delta}{2}$, yields the statement.
\end{proof}

\section{Properties of the Statistical Distances}
\label{appendix: properties of the statistical distances}

\begin{figure}
    \centering
    \includegraphics[width=0.7\linewidth]{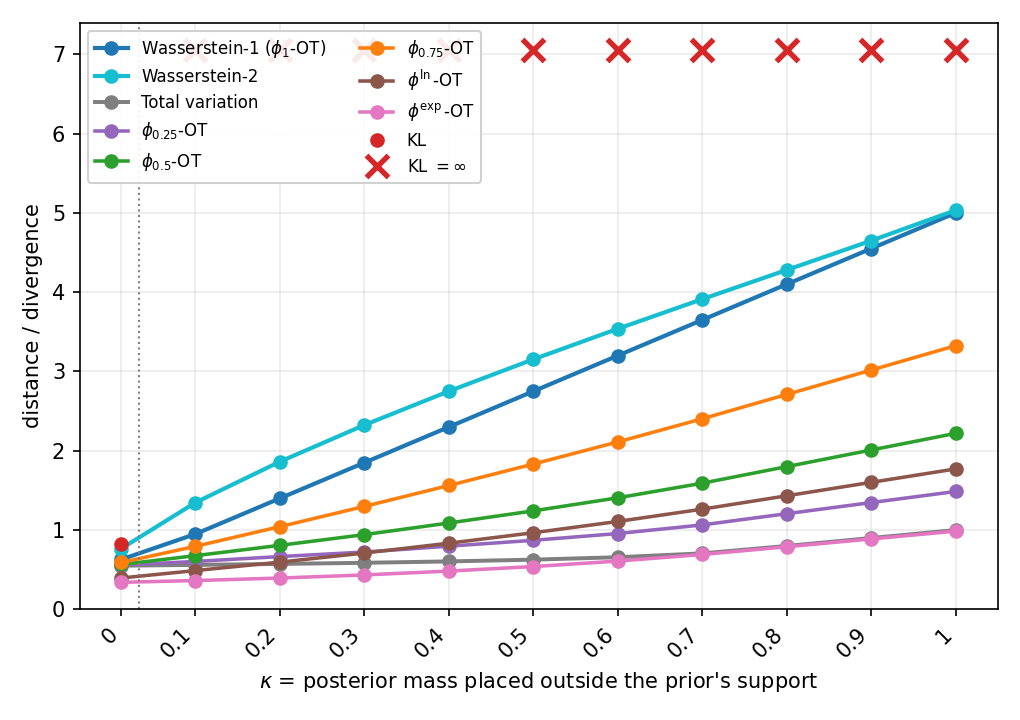}
    \caption{The posterior places a fraction $\kappa$ of its mass outside the support of the prior. The Wasserstein distance remains finite and varies continuously with $\kappa$, whereas the KL divergence is infinite for every $\kappa>0$.}
    \label{fig: support missmatch}
\end{figure}

\begin{figure}
    \centering
    \includegraphics[width=0.7\linewidth]{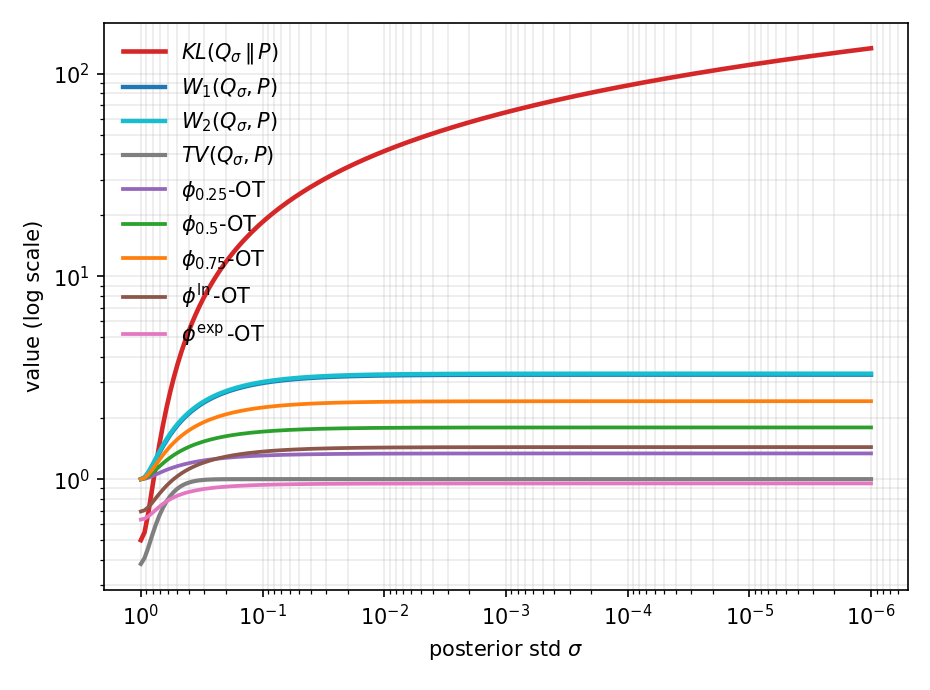}
    \caption{The posterior variance $\sigma^2$ decreases toward zero while the prior remains fixed. The KL divergence diverges as the posterior collapses, whereas the Wasserstein distance converges to a finite value.}
    \label{fig: gaussian collapse}
\end{figure}

\begin{figure}
    \centering
    \includegraphics[width=0.7\linewidth]{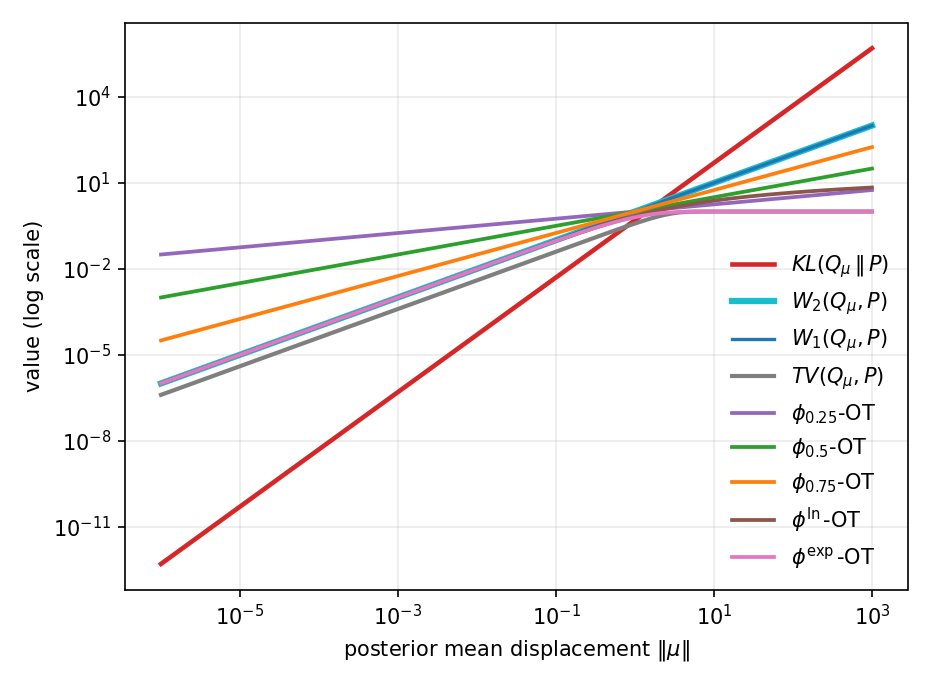}
    \caption{The posterior mean is shifted by $\mu$ while the prior and posterior covariances remain fixed. The KL divergence grows quadratically in $|\mu|_2$, whereas the $W_2$ distance grows linearly.}
    \label{fig: mean shift}
\end{figure}

We begin with three controlled experiments illustrating how KL divergence and OT distances respond to support mismatch, posterior concentration and displacement of the posterior mean.

Figure~\ref{fig: support missmatch} visualizes the main advantage of OT distances compared to the KL divergence. OT distances do not require absolute continuity and can therefore remain finite even for posteriors that are not absolutely contious w.r.t.\ the prior. Specifically, the prior is supported on $[-2,2]$ and the posterior is defined as $\posterior_\kappa=(1-\kappa)\posterior_{\mathrm{in}}+\kappa\posterior_{\mathrm{out}}$, where $\posterior_{\mathrm{in}}$ is supported on $[-2,2]$ and $\posterior_{\mathrm{out}}$ is supported outside this interval. Thus, $\kappa$ controls the fraction of posterior mass lying outside the support of the prior. At $\kappa=0$, the posterior is absolutely continuous w.r.t.\ the prior and the KL divergence is finite. For every $\kappa>0$, absolute continuity is violated and the KL divergence becomes infinite. In contrast, the Wasserstein distance remains finite and increases smoothly with $\kappa$, reflecting the amount and distance of transported mass.

Wasserstein distances can also remain well behaved when the posterior and prior have common support and the posterior is absolutely continuous w.r.t.\ to the prior. In Figure~\ref{fig: gaussian collapse}, we consider $\prior=\calN(0,\sigma_\prior^2 \idmat_d)$ and $\posterior_\sigma=\calN(\mu,\sigma^2 \idmat_d)$, both of which have support on $\R^d$ for $\sigma>0$ and the posterior is absolutely continuous w.r.t.\ the prior. We progressively decrease $\sigma$, causing the posterior to concentrate around $\mu$. The behavior visualized in the figure can be obtained directly from the closed-form expressions. In particular,
\begin{align*}
    \KL{\posterior_\sigma}{\prior} = d\ln\frac{\sigma_\prior}{\sigma} + \frac{d\sigma^2+\|\mu\|_2^2}{2\sigma_\prior^2} - \frac{d}{2},
\end{align*}
so, as $\sigma \to 0$, KL is dominated by the logarithmic term $d\ln(\sigma_\prior/\sigma)$ and therefore blows up. In contrast, the Bures-Wasserstein formula gives
\begin{align*}
    W_2^2(\posterior_\sigma,\prior) = \|\mu\|_2^2+d(\sigma_\prior-\sigma)^2,
\end{align*}
and hence
\begin{align*}
    W_2^2(\posterior_\sigma,\prior) \longrightarrow \|\mu\|_2^2+d\sigma_\prior^2 \qquad\text{as }\sigma\to0.
\end{align*}
More generally, for any $p\geq1$, the Wasserstein distance can be bounded by
\begin{align*}
    	W_p^p(\posterior_\sigma,\prior) \leq \Expval\left[ \left\|\mu+(\sigma_\prior-\sigma)\RVZ\right\|^p \right],
\end{align*}
where $\RVZ \sim \calN(0,\idmat_d)$. Since the right-hand side converges to
\begin{align*}
    \Expval\left[\left\|\mu+\sigma_\prior Z\right\|^p\right] < \infty
\end{align*}
as $\sigma\to0$, so that the Wasserstein distance remains bounded for any $p \geq 1$. Thus, while the KL divergence diverges as the posterior collapses to the deterministic distribution $\delta_\mu$, the Wasserstein distances of every order $p\geq1$ remain bounded and retain a direct geometric interpretation. 
The plots in Figure~\ref{fig: gaussian collapse} reflect precisely this behavior.

Finally, Figure~\ref{fig: mean shift} examines the effect of translating the posterior while keeping its covariance fixed. We set $\prior = \calN(0,\sigma^2 \idmat_d)$ and $\posterior_\mu=\calN(\mu,\sigma^2 \idmat_d)$, so that the posterior remains absolutely continuous w.r.t.\ the prior. In this case, the general KL expression simplifies to
\begin{align*}
    \KL{\posterior_\mu}{\prior} = \frac{\|\mu\|_2^2}{2\sigma^2},
\end{align*}
whereas the Bures-Wasserstein formula yields
\begin{align*}
    W_2(\posterior_\mu,\prior) = \|\mu\|_2.
\end{align*}
Consequently, both discrepancies increase with the magnitude of the translation, but at different rates. The KL divergence grows quadratically in $|\mu|_2$, while the Wasserstein distance grows linearly. More generally, for any $p \geq 1$, the equal-scale Gaussian case also admits the exact expression
\begin{align*}
    W_p(\posterior_\mu,\prior)=\|\mu\|_2.
\end{align*}

Thus, the qualitative behavior shown in Figure~\ref{fig: mean shift} follows directly from the corresponding closed-form expressions.

Together, these experiments demonstrate that OT distances respond continuously to geometric changes in the posterior. By contrast, KL divergence becomes singular under support mismatch or posterior collapse and grows quadratically under mean shifts. These differences provide the motivation for replacing KL-based complexity terms with geometry-aware discrepancies in the PAC-Bayesian bounds studied below.

\section{Adult Income Classification}
\begin{figure}[htbp]
    \centering
    \begin{subfigure}[b]{0.48\textwidth}
        \centering
        \includegraphics[width=\textwidth]{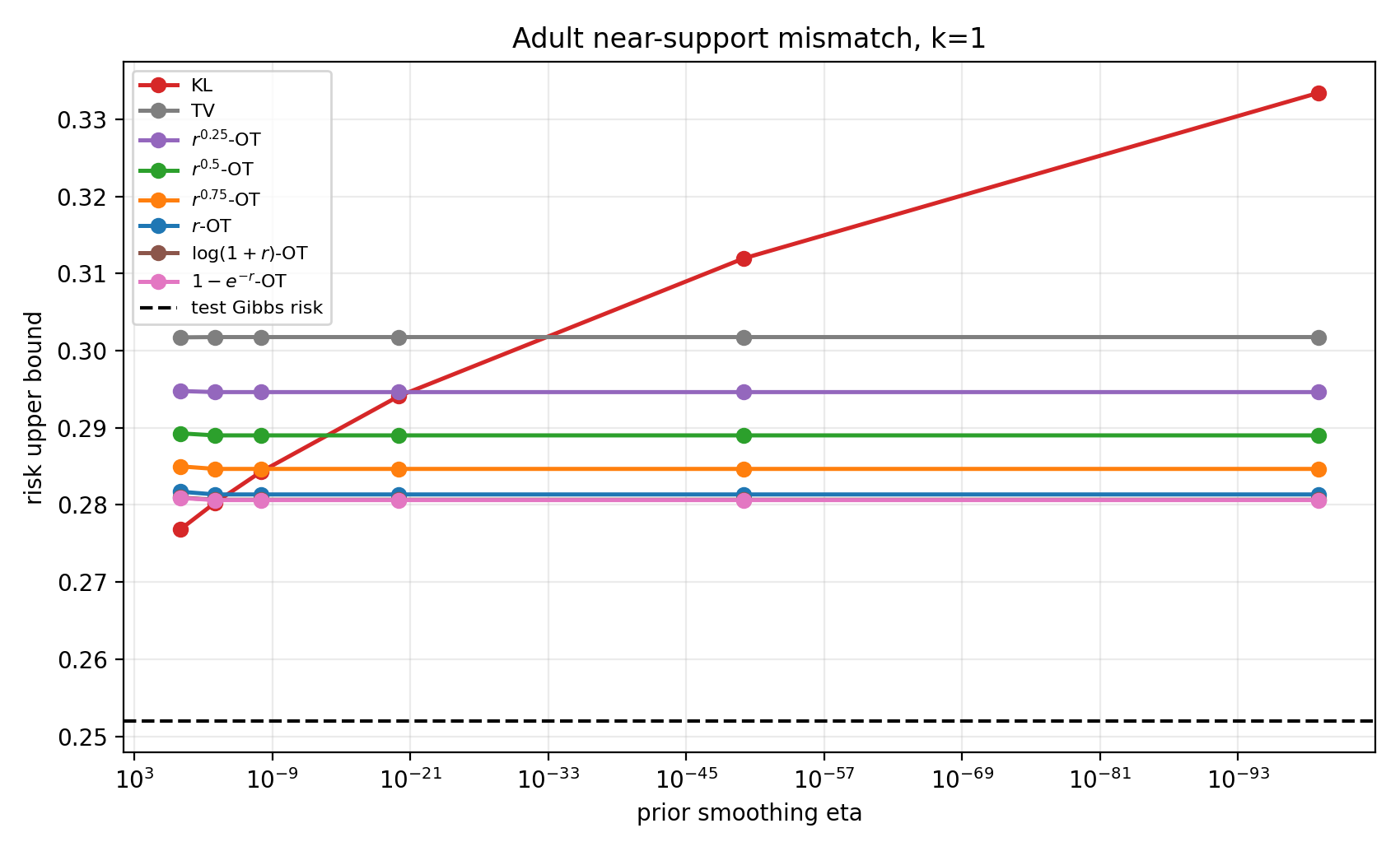} 
        \caption{$\eta$ sweep}
        \label{fig: adult data set eta sweep}
    \end{subfigure}
    \hfill
    \begin{subfigure}[b]{0.48\textwidth}
        \centering
        \includegraphics[width=\textwidth]{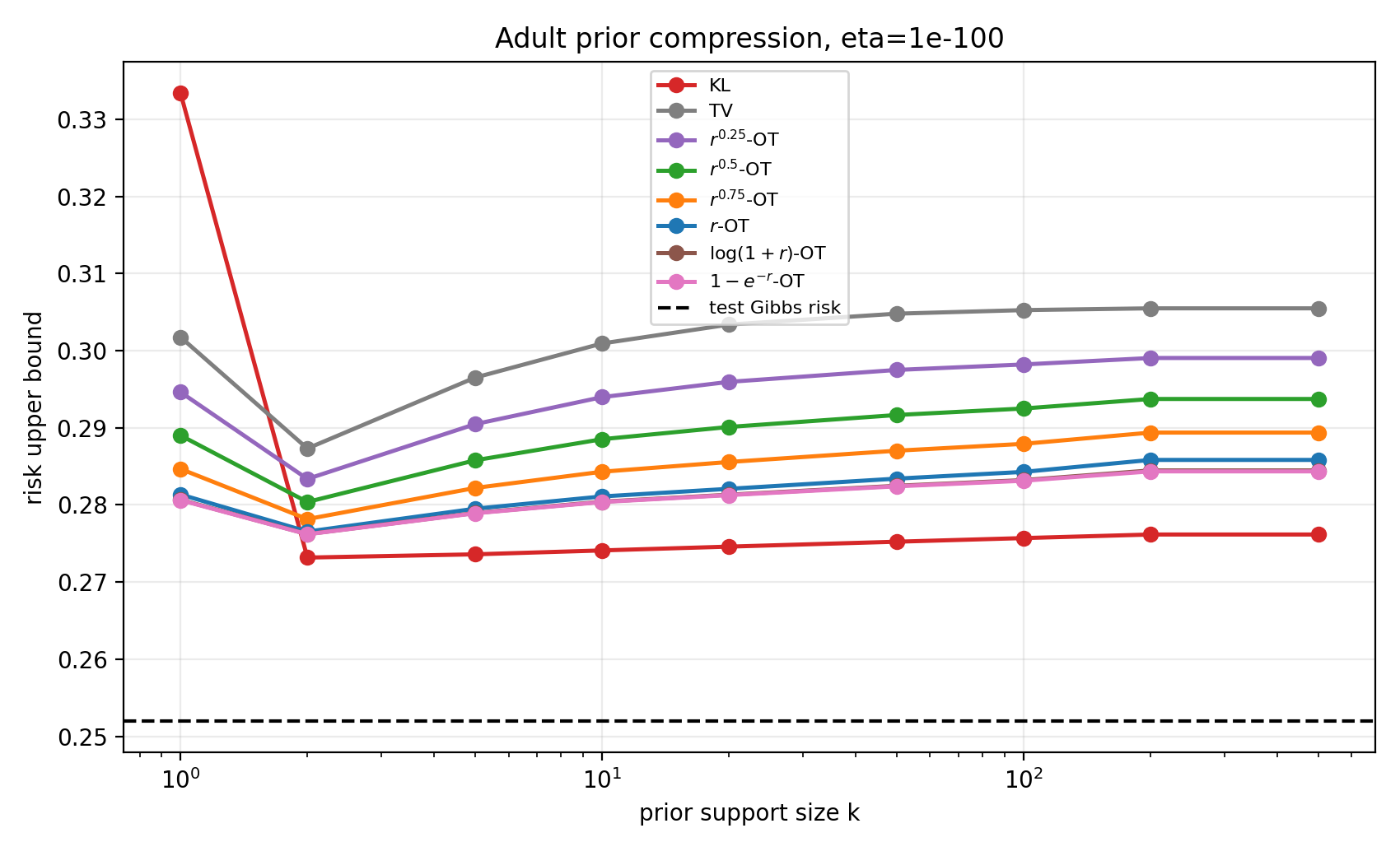}
        \caption{k sweep}
        \label{fig: adult data set k sweep}
    \end{subfigure}

    \caption{In this experiment, we compare PAC-Bayesian bounds whose complexity terms are based on different statistical distance. We compare the behavior of the bounds for different priors and posteriors.}
    \label{fig: adult income}
\end{figure}

The second experiment evaluates the PAC Bayesian bounds on a real-world tabular classification task. We use the UCI Adult dataset to predict whether an individual's income exceeds $50K$. After removing observations with missing values, we one-hot encode categorical features, standardize numerical features, and normalize each feature vector to have Euclidean norm at most one. We stratify the data into a model-zoo split ($S_{\mathrm{zoo}}$), a bound split ($S_{\mathrm{bound}}$), and a test split ($S_{\mathrm{test}}$), with proportions $(35\%/35\%/30\%)$.

We construct a finite hypothesis class by training a randomized zoo of $200$ linear classifiers on $S_{\mathrm{zoo}}$. We introduce diversity through bootstrap subsampling, random feature masks, varying regularization strengths, and varying positive-class weights. For each classifier, we use the clipped margin loss
\begin{align*}
    \loss(h,(x,y)) = \min\{1,\max\{0,1-yh(x)\}\},
\end{align*}
where $y\in\{-1,1\}$. The posterior $\posterior(h)\propto \exp \left(-\beta \emprisk(h, S_\mathrm{bound})\right)$ is a Gibbs distribution over the model zoo. We evaluate its empirical risk on $S_{\mathrm{bound}}$ and estimate its true risk on $S_{\mathrm{test}}$.

We design the prior to examine prior compression and near-support mismatch. Using an independent subset of $S_{\mathrm{zoo}}$, we rank the hypotheses by their risk and concentrate the prior on the top $k$ classifiers. A smoothing parameter $\eta$ mixes this top-$k$ distribution with the uniform distribution over the complete model zoo. We sweep
\begin{align*}
    k \in \{1,2,5,10,20,50,100,200\} \quad \text{and} \quad \eta \in \{10^{-1},10^{-4},10^{-8},10^{-20},10^{-50},10^{-100},0\}.
\end{align*}
Small values of $\eta$ therefore yield a nearly support-deficient prior. Consequently, the KL divergence becomes large and when $\eta=0$ and $k<200$, is infinite because $\posterior$ assigns positive mass outside the support of $\prior$.

For each prior, we compute the KL divergence, and several concave OT discrepancies on the finite hypothesis class. The ground metric is the clipped and rescaled Euclidean distance between the parameter vectors of the linear classifiers. We consider the transformations $\phi(r)=r^\alpha$ for $\alpha\in{0.25,0.5,0.75,1}$, $\phi(r)=\ln(1+r)$, and $\phi(r)=1-\exp(-r)$. This experiment complements the synthetic regression experiment by examining support mismatch in a finite, realistic hypothesis class, with a data-dependent prior constructed independently of the bound sample.

\end{document}